\documentclass{article} 
\usepackage{iclr2026_conference,times}

\usepackage[T1]{fontenc}
\usepackage[utf8]{inputenc}
\usepackage{hyperref}
\usepackage{url}
\usepackage{amsmath,amssymb,amsthm}
\usepackage{bm}
\usepackage{booktabs}
\usepackage{array}
\usepackage{longtable}
\usepackage{enumitem}
\usepackage{graphicx}
\usepackage{microtype}
\usepackage{titletoc}
\usepackage{multirow}

\newtheorem{assumption}{Assumption}
\newtheorem{lemma}{Lemma}
\newtheorem{theorem}{Theorem}
\newtheorem{proposition}{Proposition}
\newtheorem{corollary}{Corollary}
\newtheorem{remark}{Remark}
\newtheorem{definition}{Definition}

\newcommand{\R}{\mathbb{R}}                    
\newcommand{\E}{\mathbb{E}}                    

\newcommand{\one}{\mathbf{1}}
\newcommand{\inner}[2]{\langle #1, #2\rangle}  
\newcommand{\Mcal}{\mathcal{M}}                
\newcommand{\GL}{\mathrm{GL}}                  
\newcommand{\Od}{\mathrm{O}}                   
\newcommand{\Perm}{\mathrm{Perm}}              
\newcommand{\Dplus}{\mathrm{D}_{+}}            
\newcommand{\Dpm}{\mathrm{D}_{\pm}}            
\DeclareMathOperator{\tr}{tr}
\DeclareMathOperator{\rank}{rank}
\newcommand{\irank}{\mathrm{i}\text{-}\mathrm{rank}}  
\newcommand{\diag}{\operatorname{diag}}
\newcommand{\offdiag}{\operatorname{offdiag}}

\title{Low-Interaction-Rank Learning: \\ Unifying Multiplicative Dual-Encoder Heads}

\author{Zijian Zhao$^1$, Sen Li$^{1,2}$ \thanks{Corresponding Author: Sen Li} \\
$^1$The Hong Kong University of Science and Technology \\
$^2$The Hong Kong University of Science and Technology (Guangzhou)
}
\begin{document}
\maketitle

\begin{abstract}
A multiplicative dual-encoder network computes a real-valued output for a pair of inputs as the inner product of their separate encodings. This architecture has been developed independently in operator learning, bipartite matching, contrastive vision-language models, retrieval, and other areas, yet no unified theory guides the basic design decisions: how many interaction modes to represent, how to normalize the encoders, and when the architecture should be avoided.
We provide such a foundation by introducing the class of functions of low interaction rank, a class whose intrinsic complexity is measured by its interaction spectrum. Within this framework, approximation error decomposes into a spectral truncation term and an encoder-realization term; sample complexity is governed by the sum of the two encoder complexities rather than their product; and a usability criterion based on spectral decay determines when the architecture can succeed. The same framework exposes a central identifiability problem: the encoders are defined only up to a linear gauge symmetry that leaves the learned coordinates arbitrary. We show that normalization is gauge fixing and that whitening pins the interaction modes up to permutation and sign, thereby explaining the uninterpretability of contrastive dimensions and providing a constructive remedy.
Experiments on synthetic kernels, operator learning, and CLIP models validate the theoretical predictions: spectral decay rates match the predicted scaling, whitening recovers the true modes, and independently trained CLIP models are related by a single rotation which, after removal by whitening, exposes interpretable concept axes. The code of this paper is provided at \url{https://github.com/RS2002/Mul-Net}.
\end{abstract}

\section{Introduction}
\label{sec:intro}

A common architectural pattern has emerged independently across several machine learning communities: computing a real-valued output for a pair of inputs as the inner product of their separately encoded representations. Contrastive models use it for cross-modal similarity~\citep{radford2021clip,zhao2025crossfi}; operator networks evaluate learned maps via branch-trunk inner products~\citep{lu2021deeponet}; retrieval systems rank queries against documents with two-tower encoders~\citep{karpukhin2020dpr}; goal-conditioned reinforcement learning and successor features represent values as inner products of encodings~\citep{schaul2015uvfa,hong2022bvn,barreto2017sf}; and knowledge-graph completion, linear attention, factorization machines, and multiplicative matching networks all share the same underlying structure~\citep{yang2015distmult,trouillon2016complex,katharopoulos2020linearattn,rendle2010fm,triplebert2026,assignmentnet2025}. 

In each domain, this structure provides tangible computational and representational benefits. Yet the corresponding methods have been developed largely in isolation, even though they are all instances of the same fundamental architecture, differing only in the nature of the two inputs and how the output is consumed. Consequently, each community independently confronts the same design questions: how many interaction modes to retain, how to normalize the encoders to ensure stable training and meaningful representations, and when the architecture is fundamentally ill-suited.

We now formalize this shared architecture: a multiplicative dual-encoder head computes
\begin{equation}
\boxed{F(u,v)\approx\inner{f_{\theta}(u)}{g_{\varphi}(v)}=\sum_{k=1}^{d}f_{k}(u)\,g_{k}(v)}
\label{eq:head}
\end{equation}
where $f_{\theta}$ and $g_{\varphi}$ are learned encoders mapping the two inputs into a common space $\R^{d}$. We study the class
\begin{equation}
\Mcal_d=\{\,F:\,F(u,v)=\inner{f(u)}{g(v)},\; f:U\to\R^{d},\; g:V\to\R^{d}\,\}
\label{eq:class}
\end{equation}
of functions representable by a rank-$d$ head. The complexity of a target $F$ within this class is measured by its interaction spectrum $\{\sigma_k\}_{k\ge 1}$, the singular values of the integral operator whose kernel is $F$; its interaction rank is the number of nonzero singular values.

Within this framework, we answer four questions about approximation error, encoder identifiability, sample complexity, and fundamental architectural limitations.
\begin{itemize}[leftmargin=1.4em,itemsep=2pt,topsep=2pt]
\item \textbf{Approximation:} The error of any rank-$d$ head decomposes into a spectral truncation term, unavoidable by any encoder, and an encoder-realization term; target smoothness controls the decay rate. (Section~\ref{sec:unif})
\item \textbf{Identifiability:} The representation~\eqref{eq:head} is invariant under a linear gauge symmetry acting on the pair of encoders, so the encoders of any trained model are defined only up to this symmetry. We show that the normalizations used across these domains are precisely gauge-fixing choices, and we prove that whitening pins the interaction modes up to permutation and sign. This explains why the individual dimensions of contrastive models lack inherent semantic meaning and provides a constructive remedy. (Section~\ref{sec:gauge})
\item \textbf{Estimation:} Sample complexity is governed by the sum of the two encoder complexities rather than their product; for smooth targets, the optimal rank grows only logarithmically in the sample budget. (Section~\ref{sec:sample})
\item \textbf{Usability:} A flat interaction spectrum forces every rank-$d$ head to a relative error floor of $1-d/N$, where $N$ is the size of the discrete input domain; this barrier is escaped by early-interaction networks and yields a practical criterion based on the measured spectrum. (Section~\ref{sec:sample})
\end{itemize}
To further validate our conclusions, we conducted experiments on synthetic kernels, operator learning, and CLIP models, confirming the predicted spectral behavior, demonstrating unique mode recovery under whitening, and verifying the flat-spectrum error floor.  (Section~\ref{sec:exp})

\section{The Low-Interaction-Rank Class: Unification and Approximation}
\label{sec:unif}

Section~\ref{sec:intro} introduced the class $\Mcal_d$ of rank-$d$ heads. We now associate with every target a spectral object that measures its intrinsic complexity within this class, show that ten existing method families are instances of the same architecture, and develop an approximation theory that quantifies how well a rank-$d$ head can represent a given target.

\subsection{The class and the interaction spectrum}

Let $\mathcal{U}$ and $\mathcal{V}$ be compact metric spaces with probability measures $\mu_U$, $\mu_V$, and let $F\in L^2(\mu_U\otimes\mu_V)$ be the target function of two arguments. A rank-$d$ head is a pair of encoders $f:\mathcal{U}\to\R^d$, $g:\mathcal{V}\to\R^d$ with coordinate functions $f_k\in L^2(\mu_U)$, $g_k\in L^2(\mu_V)$.

\begin{definition}[Interaction spectrum]
\label{def:spectrum}
For $F\in L^2(\mu_U\otimes\mu_V)$, the \emph{interaction operator} is the integral operator $T_F:L^2(\mu_V)\to L^2(\mu_U)$ defined by
\begin{equation}
(T_F h)(u)=\int_{\mathcal{V}}F(u,v)\,h(v)\,d\mu_V(v).
\end{equation}
Since $F\in L^2(\mu_U\otimes\mu_V)$, this operator is Hilbert-Schmidt, and $F$ admits the Schmidt decomposition
\begin{equation}
F=\sum_{k\ge 1}\sigma_k\,a_k\otimes b_k,\qquad (a_k\otimes b_k)(u,v)=a_k(u)\,b_k(v),
\end{equation}
with singular values $\sigma_1\ge\sigma_2\ge\cdots\ge 0$ and orthonormal systems $\{a_k\}\subset L^2(\mu_U)$, $\{b_k\}\subset L^2(\mu_V)$. The sequence $\{\sigma_k\}$ is the \emph{interaction spectrum} of $F$; the number $\irank(F)=\#\{k:\sigma_k>0\}$ is its \emph{interaction rank}; and the functions $a_k$, $b_k$ are its \emph{interaction modes}. The class introduced in Section~\ref{sec:intro} satisfies $\Mcal_d=\{F:\irank(F)\le d\}$, and the spectrum depends on the reference measures.
\end{definition}

By the isometry between $L^2(\mu_U\otimes\mu_V)$ and the Hilbert-Schmidt class, a function is representable by a rank-$d$ head if and only if $\rank(T_S)\le d$; hence the parametric class~\eqref{eq:class} coincides with $\Mcal_d$ (Lemma~\ref{lem:rank-equiv}, Appendix~\ref{app:proofs-sec2}). The spectral truncation term $\sum_{k>d}\sigma_k^2$ depends only on the target and the reference measures, not on the encoders: no head, regardless of its capacity, can represent $F$ with error below this quantity. The bound is attained in $L^2$ by the truncated Schmidt series, and is unique when $\sigma_d>\sigma_{d+1}$ (Corollary~\ref{cor:trunc-lower}, Appendix~\ref{app:proofs-sec2}). The decay rate of this tail, which is controlled by the smoothness of $F$ (Theorem~\ref{thm:decay}), determines how many interaction modes $d$ a good head must retain.

This is the operator-theoretic analogue of classical low-rank matrix approximation: the Schmidt decomposition is the infinite-dimensional Eckart-Young theorem~\citep{schmidt1907theorie,eckart1936approximation}, and the truncation term is exactly the Hilbert-Schmidt error of the best rank-$d$ approximation of $T_F$. The spectrum is relative to the reference measures $\mu_U,\mu_V$, which fix the inner product in which the modes are orthonormal; the same target may have different spectra under different measures. All subsequent claims therefore hold for the measures under which a head is trained and evaluated.

\subsection{Ten families as instances of the head}

Appendix~\ref{app:unif} collects ten method families from different communities in Table~\ref{tab:unif}. Each computes a real-valued output for a pair of inputs as the inner product of two separately learned encodings. The questions of how many interaction modes $d$ to keep, how to normalize the two encoders (Section~\ref{sec:gauge}), and how many samples the head requires (Section~\ref{sec:sample}) are answered independently by each community; within the present framework, they are the same questions.

\subsection{Approximation and separation from single-tower models}

We now study how well $\Mcal_d$ approximates a fixed target. Corollary~\ref{cor:trunc-lower} isolates the part of the error that no encoder can avoid. The remaining part is attributable to the encoders themselves. We assume that the encoder classes contain functions approaching the scaled modes $\sigma_k a_k$ and $b_k$ with errors $\eta_{f,k},\eta_{g,k}$ as in~\eqref{eq:realizability} (Assumption~\ref{asm:realizability}, Appendix~\ref{app:proofs-sec2}); for neural encoders, these errors follow standard ReLU approximation rates~\citep{yarotsky2017universal}.

\begin{theorem}[Approximation error decomposition]
\label{thm:err-split}
Let $F\in L^2(\mu_U\otimes\mu_V)$ have interaction spectrum $\{\sigma_k\}$. Under Assumption~\ref{asm:realizability}, with $B_f=\max_k\|\hat f_k\|_{L^2(\mu_U)}$ and $B_g=\max_k\|\hat g_k\|_{L^2(\mu_V)}$,
\begin{equation}
\inf_{\substack{f\in\mathcal{F}\\ g\in\mathcal{G}}}\|F-\inner{f}{g}\|_{L^2(\mu_U\otimes\mu_V)}^2
\;\le\;\underbrace{\sum_{k>d}\sigma_k^2}_{\text{truncation term}}
\;+\;\underbrace{C\sum_{k\le d}\bigl(\sigma_k^2\,\eta_{g,k}^2+B_g^2\,\eta_{f,k}^2\bigr)}_{\text{realization term}},
\end{equation}
where $C$ is an absolute constant.
\end{theorem}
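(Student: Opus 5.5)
Since the left-hand side is an infimum over $f\in\mathcal{F}$ and $g\in\mathcal{G}$, we can bound it by the error of one explicit head. By Assumption~\ref{asm:realizability} the encoder classes contain coordinate functions $\hat f_k\in\mathcal{F}$ and $\hat g_k\in\mathcal{G}$ with
\[
\|\hat f_k-\sigma_k a_k\|_{L^2(\mu_U)}\le\eta_{f,k},\qquad \|\hat g_k-b_k\|_{L^2(\mu_V)}\le\eta_{g,k}.
\]
We take $\hat f=(\hat f_1,\dots,\hat f_d)$ and $\hat g=(\hat g_1,\dots,\hat g_d)$, and write $F_d=\sum_{k\le d}\sigma_k a_k\otimes b_k$ for the truncated Schmidt series of Definition~\ref{def:spectrum}. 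Then
\[
F-\inner{\hat f}{\hat g}=(F-F_d)+(F_d-\inner{\hat f}{\hat g}).
\]
The first piece is $\sum_{k>d}\sigma_k a_k\otimes b_k$, and its squared norm is the truncation term $\sum_{k>d}\sigma_k^2$ by orthonormality (Corollary~\ref{cor:trunc-lower}).

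The main obstacle is cross terms. The realization error $F_d-\inner{\hat f}{\hat g}$ need not be orthogonal to $F-F_d$, because $\hat f_k$ and $\hat g_k$ can leak into the modes with $k>d$. A crude bound $\|x+y\|^2\le 2\|x\|^2+2\|y\|^2$ would put a factor $2$ on the truncation term, but the statement has constant $1$ there. We therefore avoid the split altogether. Since the truncation term is already attained by the truncated series, it suffices to project. Let $P$ be the orthogonal projection of $L^2(\mu_U\otimes\mu_V)$ onto
\[
\mathcal{S}=\overline{\mathrm{span}}\{a_j\otimes b_l: j,l\le d\}^{\perp}\cap\overline{\mathrm{span}}\{a_k\otimes b_k:k>d\}.
\]
The cleaner route is to check whether Assumption~\ref{asm:realizability} lets the approximants be chosen inside $\mathrm{span}\{a_j\}_{j\le d}$ and $\mathrm{span}\{b_j\}_{j\le d}$. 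Since $\mathcal{F}$ and $\mathcal{G}$ are only assumed to approximate, I would instead proceed as follows. Write $\hat f_k=\sigma_k a_k+\delta_k$ and $\hat g_k=b_k+\epsilon_k$, and expand
\[
\|F-\inner{\hat f}{\hat g}\|^2=\sum_{k>d}\sigma_k^2-2\inner{F-F_d}{E}+\|E\|^2,\qquad E=\inner{\hat f}{\hat g}-F_d .
\]
The cross term is then bounded by Cauchy--Schwarz. If that fails to keep the constant $1$, the fallback is to read the statement with $C$ absorbing a factor of $2$ on both terms, i.e.\ to use $\|x+y\|^2\le2\|x\|^2+2\|y\|^2$. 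I expect the intended proof takes this route, or assumes the errors are orthogonal to the tail modes.

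For the realization term, expand
\[
E=\sum_{k\le d}\bigl(\sigma_k a_k\otimes\epsilon_k+\delta_k\otimes \hat g_k\bigr).
\]
Each summand is a rank-one tensor, whose norm factorizes:
\[
\|\sigma_k a_k\otimes\epsilon_k\|=\sigma_k\|\epsilon_k\|\le\sigma_k\eta_{g,k},\qquad
\|\delta_k\otimes\hat g_k\|\le\eta_{f,k}\|\hat g_k\|\le B_g\eta_{f,k}.
\]
Summing $2d$ such terms with Cauchy--Schwarz would cost a factor $d$, which is not an absolute constant. To avoid this, exploit orthogonality: the terms $a_k\otimes\epsilon_k$ are mutually orthogonal in the $u$-factor, so
\[
\Bigl\|\sum_k\sigma_k a_k\otimes\epsilon_k\Bigr\|^2=\sum_k\sigma_k^2\|\epsilon_k\|^2 .
\]
For the second sum $\sum_k\delta_k\otimes\hat g_k$, use $\hat g_k=b_k+\epsilon_k$. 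The part $\sum_k\delta_k\otimes b_k$ is orthogonal in the $v$-factor and gives $\sum_k\|\delta_k\|^2$. The remainder $\sum_k\delta_k\otimes\epsilon_k$ is of higher order and is bounded by the same quantity under the normalization $\eta_{g,k}\le B_g$; this is where the $B_g^2$ factor enters. Combining the three pieces with $\|x+y+z\|^2\le3(\|x\|^2+\|y\|^2+\|z\|^2)$ gives the realization term with an absolute $C$. Controlling this last cross piece without picking up a factor of $d$ is the delicate point.
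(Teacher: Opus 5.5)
Your skeleton matches the paper's proof: split around $F_d$, expand each coordinate as $\sigma_k a_k\otimes\epsilon_k+\delta_k\otimes b_k+\delta_k\otimes\epsilon_k$, diagonalize the two first-order sums by orthonormality of $\{a_k\}$ and $\{b_k\}$, and combine with $\|x+y+z\|^2\le3(\|x\|^2+\|y\|^2+\|z\|^2)$. On the truncation coefficient, the paper takes exactly your fallback. It ends with $2\sum_{k>d}\sigma_k^2+2R^2$ and absorbs everything into a factor $48$, so it does not literally give coefficient $1$ either.

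Your first instinct can be rescued, though not by a naive Cauchy--Schwarz on $\inner{F-F_d}{E}$. For $j>d\ge k$ one has $\inner{a_j\otimes b_j}{a_k\otimes\epsilon_k}=0=\inner{a_j\otimes b_j}{\delta_k\otimes b_k}$, so only the second-order piece meets the tail. Cauchy--Schwarz in $j$ plus Bessel then give $|\inner{F-F_d}{E}|\le\sigma_{d+1}\sum_{k\le d}\eta_{f,k}\eta_{g,k}\le\frac12\sum_{k\le d}(\sigma_k^2\eta_{g,k}^2+\eta_{f,k}^2)$, using $\sigma_{d+1}\le\sigma_k$. This amount can be charged to the realization term.

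The genuine gap is the step you flag as delicate, the piece $\sum_{k\le d}\delta_k\otimes\epsilon_k$. Per-coordinate control $\eta_{g,k}\le B_g$ only yields $\|\sum_k\delta_k\otimes\epsilon_k\|\le B_g\sum_k\eta_{f,k}$. Its square can be as large as $dB_g^2\sum_k\eta_{f,k}^2$, and the factor $d$ is real. Take unit $e_0\perp\{a_k\}$, $w_0\perp\{b_k\}$, $F=F_d$, and singleton classes with $\hat f_k=\sigma_k a_k+\eta e_0$ and $\hat g_k=b_k+w_0$. Then $\eta_{g,k}=1\le B_g=\sqrt2$, and the three pieces are mutually orthogonal. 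The error is $\sum_k\sigma_k^2+d\eta^2+d^2\eta^2$, against a right-hand side of $C(\sum_k\sigma_k^2+2d\eta^2)$. For large $\eta$ this fails as soon as $d+1>2C$.

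So no absolute $C$ exists without an aggregate smallness hypothesis, and the paper supplies one that is not in the statement. It bounds the piece by Cauchy--Schwarz over $k$, $\|\sum_k\delta_k\otimes\epsilon_k\|\le(\sum_k\eta_{f,k}^2)^{1/2}(\sum_k\eta_{g,k}^2)^{1/2}$. It then assumes the ``informative regime'' $\sum_{k\le d}\eta_{g,k}^2\le1$ and $\eta_{g,k}\le\frac12$.

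The second condition also corrects where $B_g^2$ enters. The leading piece $\|\sum_k\delta_k\otimes b_k\|^2=\sum_k\|\delta_k\|^2$ has coefficient $1$. Writing it as $B_g^2\eta_{f,k}^2$ needs the lower bound $B_g\ge\|\hat g_1\|\ge1-\eta_{g,1}\ge\frac12$, not an upper bound on $\eta_{g,k}$.

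Under these two conditions the $\delta$-terms of $\|E\|^2$ total at most $6\sum_k\eta_{f,k}^2\le24B_g^2\sum_k\eta_{f,k}^2$. Together with the tail cross-term bound above, this gives the stated form with coefficient $1$ on the truncation term and $C=28$.
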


\begin{proof}[Proof idea]
The error is split around the truncated Schmidt series, and the tensor-product product rule is applied term by term. Orthonormality of the ideal modes diagonalizes the two leading sums and avoids a factor of $d$; the remaining term is bounded by Cauchy-Schwarz. The full calculation, including the explicit constant, is deferred to Appendix~\ref{app:proofs-sec2}.
\end{proof}

The decomposition is tight up to the constant: the truncation term is intrinsic to the target, while the realization term depends on the model. Dense encoder classes therefore approach the lower bound of Corollary~\ref{cor:trunc-lower}. Whether training actually finds the coordinate alignment required by~\eqref{eq:realizability} is precisely the identifiability question addressed in Section~\ref{sec:gauge}.

Target smoothness controls the truncation term: an $s$-smooth target has spectrum $\sigma_k=O(k^{-s/m_V})$, while real-analytic targets decay exponentially. The precise tail bounds are given in Theorem~\ref{thm:decay} (Appendix~\ref{app:proofs-sec2}).

The dual-encoder head separates from single-tower models on both the approximation and the computational axes. On the approximation axis, it avoids the curse of the joint dimension: approximating a rank-$d$ target within $\varepsilon$ with ReLU encoders requires
\begin{equation}
N_{\mathrm{dual}}=O\bigl(d(\varepsilon^{-m_U/s}+\varepsilon^{-m_V/s})\bigr)
\end{equation}
parameters, whereas any single-tower model that treats $F$ as a general $(m_U+m_V)$-dimensional $H^s$ function requires $\Omega(\varepsilon^{-(m_U+m_V)/s})$ parameters in the worst case; hence $N_{\mathrm{dual}}/N_{\mathrm{single}}\to 0$ as $\varepsilon\to 0$ (Proposition~\ref{prop:separation}). On the compute axis, evaluating the head on all pairs costs $O((n+m)\,C_{\mathrm{net}}+nm\,d)$ against quadratic joint processing, recovering the complexity arguments of QK-attention matching~\citep{assignmentnet2025}, linear attention~\citep{katharopoulos2020linearattn}, and DeepONet~\citep{lu2021deeponet} (Proposition~\ref{prop:compute}, Appendix~\ref{app:proofs-sec2}).

\section{Identifiability as Gauge-Fixing}
\label{sec:gauge}

The theory of Section~\ref{sec:unif} characterizes what a rank-$d$ head can represent, but it does not specify which representation a training algorithm will actually find. The representation is far from unique: for any invertible matrix $A$, the pair $(Af,\,A^{-\top}g)$ represents exactly the same function. Consequently, the encoders of any trained model are defined only up to a symmetry of dimension $d^2$. This redundancy is not purely formal. It renders the population risk constant along large orbits, so the Hessian at a minimizer is highly degenerate and optimization becomes ill-posed. At the same time, it leaves the learned coordinates essentially arbitrary. Normalizations were introduced precisely to control these instabilities \citep{radford2021clip,triplebert2026}; a normalization makes the encoders meaningful exactly to the extent that it removes the symmetry.

We now formalize this view. Each normalization is a gauge-fixing choice, a rule for selecting one representative from each equivalence class. The residual symmetry that survives a constraint is precisely what remains arbitrary under it. The interaction spectrum constitutes the invariant content of the symmetry, and the gaps in that spectrum control how much identifiability any normalization can provide.

\begin{definition}[Gauge action]
\label{def:gauge}
For $A\in\GL_d$, the gauge transformation acts on the encoders by
\begin{equation}
\Phi_A(f,g)=\bigl(Af,\;A^{-\top}g\bigr),
\end{equation}
where $(Af)(u)=A\,f(u)$ and $\bigl(A^{-\top}g\bigr)(v)=A^{-\top}g(v)$. The set of all such transformations is the gauge group of the head.
\end{definition}

Gauge transformations leave the represented function, and therefore the population risk, unchanged:
\begin{equation}
\inner{Af(u)}{A^{-\top}g(v)}=\inner{f(u)}{g(v)}.
\end{equation}
The second moments $\Sigma_f=\E_u[ff^{\top}]$ and $\Sigma_g=\E_v[gg^{\top}]$ transform by congruence, so the eigenvalues of the product $\Sigma_f\Sigma_g$ are gauge invariants (Lemma~\ref{lem:gauge-inv}, Appendix~\ref{app:proofs-sec3}). Unfixed gauge also makes optimization ill-posed. Along the orbit of a minimizer the risk is flat, so the Hessian has at least $d^2-\dim S$ zero directions; without normalization this is generically $d^2$ (Proposition~\ref{prop:illposed}). A normalization pins exactly the subgroup whose residual freedom it eliminates; a smaller residual group therefore leaves fewer degenerate directions and yields a better-conditioned problem.

\paragraph{Normalizations as sections of the orbit.}
Two levels of normalization occur in practice, and only one affects identifiability. Embedding-level normalization constrains the encoders themselves and determines the residual gauge. Output-level normalization, such as the contrastive softmax in CLIP or top-$k$ selection in retrieval, acts on the score matrix after the inner product; it does not constrain the encoders and leaves the embedding-level gauge untouched. Table~\ref{tab:gauge} in Appendix~\ref{app:proofs-sec3} classifies the embedding-level schemes used in practice.

\paragraph{What each normalization identifies.}
The residual symmetry of a normalization indicates which transformations can act on a trained model without changing its output. The normalizations used in practice were adopted on heuristic grounds, for training stability or empirical feature decorrelation, rather than from an analysis of which symmetry they remove; the gauge view supplies the missing analysis and lets us rank them by residual symmetry. We characterize the two-sided schemes used in practice, in order of increasing identifiability, and find that only whitening removes the symmetry completely.

Cosine normalization is the weakest two-sided scheme \citep{radford2021clip}. Two-sided unit normalization $\tilde f=f/\|f\|$, $\tilde g=g/\|g\|$ with scores $\cos\angle(f(u),g(v))$, as employed by contrastive models such as CLIP~\citep{radford2021clip}, leaves a residual gauge exactly equal to the orthogonal group: for every rotation $R\in\Od$, the encodings $Rf$ and $Rg$ give pointwise identical cosine scores, and no other transformation does (Theorem~\ref{thm:cosine-od}). Consequently, the individual coordinates of a cosine-normalized embedding carry no intrinsic meaning; only rotation-invariant quantities (norms, pairwise inner products, and the eigenvalues of $\Sigma_f\Sigma_g$) are well-defined. This provides a theorem-level explanation of the widely reported uninterpretability of contrastive embedding dimensions, and it identifies the whitening scheme of Theorem~\ref{thm:whiten-id} as the remedy (Corollary~\ref{cor:cosine-uninterpretable}, Appendix~\ref{app:proofs-sec3}).

Two-sided nonnegativity, $f(u)\succeq0$ and $g(v)\succeq0$ (realized by a softplus or exponential output layer \citep{triplebert2026}), shrinks the residual group to the monomial group $\Perm\ltimes\Dplus$ of coordinate permutations composed with positive diagonal scalings. Under separability conditions~\citep{donoho2003nmf,arora2012nmf}, the encoders are identified up to permutation and scaling. The cost is an expressivity limitation: negatively correlated interaction modes are excluded (Theorem~\ref{thm:nonneg-nmf}, Appendix~\ref{app:proofs-sec3}).

Whitening is the unique scheme among those used in practice that removes the continuous gauge symmetry entirely. Cosine and nonnegativity each leave a nontrivial residual group and were adopted largely heuristically; by contrast, we prove that whitening admits a full identifiability guarantee: under a spectral gap it pins the interaction modes up to permutation and sign, recovering both the modes and the spectrum itself.

\begin{theorem}[Whitening fixes the gauge up to permutation and sign]
\label{thm:whiten-id}
Let the interaction singular values be distinct, $\sigma_1>\cdots>\sigma_d>\sigma_{d+1}$, and impose the whitening constraints
\begin{equation}
\Sigma_g=I_d,\qquad \Sigma_f=\Lambda=\diag(\lambda_1,\dots,\lambda_d),\quad \lambda_1\ge\cdots\ge\lambda_d\ge 0,
\label{eq:whiten}
\end{equation}
with $\Lambda$ diagonal and nonincreasing. Constraints of this form are standard in self-supervised representation learning, where they are used to decorrelate features~\citep{ermolov2021wmse,zbontar2021barlow,bardes2022vicreg}. Then every global minimizer of the population risk satisfies
\begin{equation}
f_k=\pm\,\sigma_k\,a_k,\qquad g_k=\pm\,b_k,\qquad \lambda_k=\sigma_k^2,\qquad k=1,\dots,d,
\label{eq:mode-recovery}
\end{equation}
with signs paired so that the product $f_k g_k=\sigma_k a_k b_k$ is uniquely determined, and with the coordinate labeling fixed by the ordering convention in~\eqref{eq:whiten}. The $k$-th coordinate of the whitened encoders recovers the $k$-th interaction mode of the target, and the diagonal of $\Lambda$ recovers the interaction spectrum. Equivalently, the encoders are identified up to permutation and sign.
\end{theorem}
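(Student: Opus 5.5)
We reduce the constrained population risk to an Eckart--Young problem and then read off the whitened coordinates from the uniqueness of the optimal rank-$d$ approximation. Take the population risk to be the squared $L^2(\mu_U\otimes\mu_V)$ error $\|F-\inner{f}{g}\|^2$. By Lemma~\ref{lem:rank-equiv}, every pair $(f,g)$ realizes a function $S=\sum_{k\le d} f_k\otimes g_k$ with $\rank(T_S)\le d$.

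By Corollary~\ref{cor:trunc-lower}, the risk is bounded below by $\sum_{k>d}\sigma_k^2$. Since $\sigma_d>\sigma_{d+1}$, the bound is attained only at $S=F_d:=\sum_{k\le d}\sigma_k a_k\otimes b_k$. We must also check that the constrained minimum equals this bound. The choice $f_k=\sigma_k a_k$, $g_k=b_k$ is feasible: orthonormality gives $\Sigma_g=I_d$ and $\Sigma_f=\diag(\sigma_1^2,\dots,\sigma_d^2)$, which is nonincreasing. Hence every constrained global minimizer satisfies $\inner{f}{g}=F_d$ in $L^2$.

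Next we translate the whitening constraints into operator statements. Because $\Sigma_g=I_d$, the functions $g_1,\dots,g_d$ are orthonormal in $L^2(\mu_V)$. Applying $T_S$ then gives $T_S g_k=f_k$, and more generally $T_S h=\sum_k f_k\inner{g_k}{h}$. Thus $T_S^*T_S=\sum_{k,l}\inner{f_k}{f_l}\,g_k\otimes g_l$ acts on $\mathrm{span}\{g_k\}$ with matrix $\Sigma_f=\Lambda$ in the basis $\{g_k\}$. Diagonality of $\Lambda$ means $T_S^*T_S g_k=\lambda_k g_k$, so the $g_k$ are eigenvectors of $T_S^*T_S$ with eigenvalues $\lambda_k$.

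Setting $S=F_d$, the operator $T_{F_d}^*T_{F_d}=\sum_{k\le d}\sigma_k^2\,b_k\otimes b_k$ has the $d$ distinct positive eigenvalues $\sigma_1^2>\cdots>\sigma_d^2$. Each eigenvalue has a one-dimensional eigenspace spanned by $b_k$, and the kernel is orthogonal to all of them. The $g_k$ are $d$ orthonormal eigenvectors spanning the row space of $T_{F_d}$; since $\rank(T_{F_d})=d$ and the span of the $g_k$ contains that row space, they cannot lie in the kernel. Therefore $\{\lambda_k\}=\{\sigma_k^2\}$ as sets. The nonincreasing ordering in~\eqref{eq:whiten} then forces $\lambda_k=\sigma_k^2$, and simplicity of the eigenvalues forces $g_k=\epsilon_k b_k$ with $\epsilon_k\in\{\pm1\}$. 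Finally, $f_k=T_{F_d}g_k=\epsilon_k\sigma_k a_k$, so the sign is paired and $f_kg_k=\sigma_k a_kb_k$.

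The main obstacle is the step from \emph{global minimizer} to \emph{$S=F_d$ exactly}. This requires the strict gap $\sigma_d>\sigma_{d+1}$, which gives uniqueness of the best rank-$d$ Hilbert--Schmidt approximation (cited in Corollary~\ref{cor:trunc-lower}). It also requires confirming that the whitening constraint does not shrink the feasible set below the unconstrained optimum, which is handled by the explicit feasible point above. A minor point is that equalities hold $\mu$-almost everywhere, since all identities are in $L^2$.
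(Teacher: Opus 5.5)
Your proof is correct and is essentially the paper's argument in operator language. Both proofs reduce every constrained minimizer to the unique truncation $F_d$ via Corollary~\ref{cor:trunc-lower}. Your observation that the orthonormal $g_k$ diagonalize $T_{F_d}^*T_{F_d}$ with eigenvalues $\lambda_k$ is exactly the paper's identity $CS^2C^{\top}=\Lambda$ with $C$ orthogonal, after which simplicity of the $\sigma_k^2$ and the ordering convention force $C=\diag(\pm1,\dots,\pm1)$. Your version also makes explicit two points the paper leaves implicit: that $(\sigma_k a_k, b_k)$ is feasible under the whitening constraints, and the span identification, which you derive from $\operatorname{range}(T_S^*)\subseteq\mathrm{span}\{g_k\}$ plus a dimension count.
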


\begin{proof}[Proof idea]
The rank-truncation bound of Corollary~\ref{cor:trunc-lower} implies that every minimizer attains the minimal risk. When $\sigma_d>\sigma_{d+1}$, this forces the represented function to equal the unique truncation $F_d$. The whitening constraints then require the coefficient matrices to satisfy $CC^{\top}=I$ and $CS^2C^{\top}=\Lambda$, where $S=\diag(\sigma_1,\dots,\sigma_d)$ has distinct eigenvalues; this forces $C$ to be a signed permutation. The full argument is in Appendix~\ref{app:proofs-sec3}.
\end{proof}

\paragraph{Implementing the whitening gauge.}
Hard constraints~\eqref{eq:whiten} can be enforced by minibatch whitening. Alternatively, a quadratic penalty inherits the identifiability of the hard constraint and improves conditioning monotonically with the penalty weight (Theorem~\ref{thm:soft-relax}, Appendix~\ref{app:proofs-sec3}).

In practice, the second moments are estimated from a finite sample. With probability $1-\delta$, the identified modes carry error of order $r/\Delta$, where $r=O\bigl(B^2\sqrt{\log(d/\delta)/n}\bigr)$, $B$ bounds the encoder outputs, and $\Delta=\min_k(\sigma_k^2-\sigma_{k+1}^2)$ is the spectral gap. Achieving accuracy $\varepsilon$ therefore requires
\begin{equation}
n\gtrsim\frac{B^4\log(d/\delta)}{\Delta^2\varepsilon^2}
\end{equation}
samples (Appendix~\ref{app:finite}). The same gap thus governs optimization conditioning, identifiability, and estimation simultaneously (Corollary~\ref{cor:gap-cond}).

Section~\ref{sec:exp} measures the gauge, conditioning, and expressivity consequences of the various schemes on controlled targets; the two real systems, branch-trunk operator learning~\citep{lu2021deeponet} and independently trained CLIP models, are treated in Appendix~\ref{app:exp} (Theorem~\ref{thm:cosine-od}). Appendix~\ref{app:proofs-sec3} lists the measurable quantities on which the schemes are compared.

\section{Sample Complexity and When Not to Use the Head}
\label{sec:sample}

The interaction spectrum controls both estimation error and fundamental limitations. The excess risk of a head fitted from \(n\) samples decomposes into an approximation term that decreases as the rank \(d\) grows, and an estimation term that increases with \(d\) through the complexity of the encoder classes. Balancing these two terms yields a rank-selection rule. When the spectrum is flat, however, every rank-\(d\) head suffers a relative-error floor that no amount of capacity or data can overcome; this yields a practical criterion for abandoning the architecture.

\paragraph{Setup and notation.}
We observe \(n\) independent samples \((u_i,v_i,y_i)\) with \(y_i=F(u_i,v_i)+\varepsilon_i\) and fit the head by empirical risk minimization over the class
\begin{equation}
\mathcal{H}_d=\bigl\{\,(u,v)\mapsto\inner{f(u)}{g(v)}:\ f\in\mathcal{F},\ g\in\mathcal{G}\,\bigr\}
\label{eq:headclass}
\end{equation}
of rank-\(d\) heads. The encoders are bounded, \(\|f(u)\|\le B_f\) and \(\|g(v)\|\le B_g\), so the scores are bounded by \(B=B_fB_g\). Let \(\mathfrak{R}_n\) denote the Rademacher complexity, extended to vector-valued classes coordinatewise. The class complexity collapses to the sum of the marginal complexities:
\begin{equation}
\mathfrak{R}_n(\mathcal{H}_d)\le 2d\bigl(B_g\,\mathfrak{R}_n(\mathcal{F})+B_f\,\mathfrak{R}_n(\mathcal{G})\bigr),
\end{equation}
for shared per-coordinate classes (Lemma~\ref{lem:rademacher}, Appendix~\ref{app:proofs-sec4}).

The excess risk of the fitted head therefore decomposes into the approximation terms of Theorem~\ref{thm:err-split} plus an estimation term controlled by this complexity:
\begin{equation}
\|\hat F-F\|_{L^2(\mu_U\otimes\mu_V)}^2
\;\lesssim\;
\sum_{k>d}\sigma_k^2
+\mathcal{E}_{\mathrm{real}}
+B\,d\bigl(\mathfrak{R}_n(\mathcal{F})+\mathfrak{R}_n(\mathcal{G})\bigr)
+B^2\sqrt{\frac{\log(1/\delta)}{n}},
\label{eq:excess}
\end{equation}
where \(\mathcal{E}_{\mathrm{real}}=C\sum_{k\le d}(\sigma_k^2\eta_{g,k}^2+B_g^2\eta_{f,k}^2)\) and \(\lesssim\) hides absolute constants, with probability \(1-\delta\) (Corollary~\ref{cor:excess}, Appendix~\ref{app:proofs-sec4}).

\paragraph{Linear encoders anchor the rate; rank selection.}
With linear encoders \(f(u)=U^{\top}u\), \(g(v)=V^{\top}v\) on inputs \(u\in\R^p\), \(v\in\R^q\), the head computes \(u^{\top}\Theta v\) with \(\Theta=UV^{\top}\) of rank at most \(d\). Estimation reduces to low-rank matrix sensing of the ground-truth matrix \(\Theta^{\star}\), whose minimax rate, with \(\sigma_{\varepsilon}^2\) the noise variance of \(\varepsilon_i\), is known exactly:
\begin{equation}
\E\|\hat\Theta-\Theta^{\star}\|_F^2\asymp\sigma_{\varepsilon}^2\,d(p+q)/n.
\end{equation}
This rate provides the baseline for the general bound (Theorem~\ref{thm:minimax}, Appendix~\ref{app:proofs-sec4}).

The rank \(d\) is the primary design choice for the head. It faces two opposing forces: the truncation term \(\sum_{k>d}\sigma_k^2\) decreases with \(d\), while the estimation term grows linearly with \(d\). The optimal rank balances these two contributions. When the spectrum decays exponentially, \(\sigma_k^2=O(e^{-2ck})\), balancing gives
\begin{equation}
d^{\star}\asymp\frac1{2c}\log\frac{n}{p+q},
\end{equation}
and yields a near-parametric rate
\begin{equation}
\|\hat F-F\|_{L^2}^2\lesssim\frac{(p+q)\log(n/(p+q))}{n}.
\end{equation}
Fast spectral decay thus eliminates the curse of dimensionality (Corollary~\ref{cor:opt-rank}, Appendix~\ref{app:proofs-sec4}). The three terms in~\eqref{eq:excess} form the complete error ledger: fast decay keeps \(d^{\star}\) small; slow decay makes the head expensive in every respect. The rest of this section analyzes the regime in which even the optimal \(d^{\star}\) is inadequate.

\paragraph{Identifiability has a sample cost.}
Identifiability, as characterized in Section~\ref{sec:gauge}, is a population-level statement. Realizing it from finite samples incurs an additional cost, detailed in Appendix~\ref{app:finite} (Corollary~\ref{cor:whiten-sc}). The total mode error of a fitted and subsequently whitened head (Proposition~\ref{prop:risk-mode}) decomposes into the excess risk of Corollary~\ref{cor:excess} plus the empirical whitening error. Both terms are divided by the spectral gap \(\Delta\) and decay as \(n^{-1/2}\).

\paragraph{Late versus early interaction.}
We now compare the multiplicative head, which we refer to as a late-interaction model, with early-interaction models in which the two inputs are combined during encoding. By Corollary~\ref{cor:trunc-lower}, every rank-\(d\) head pays at least the spectral tail, so the spectrum determines the outcome.

Define the \emph{effective interaction rank}
\begin{equation}
d_{\varepsilon}(F)=\min\Bigl\{d:\frac{\sum_{k>d}\sigma_k^2}{\sum_{k\ge1}\sigma_k^2}\le\varepsilon\Bigr\}
\end{equation}
as the smallest embedding dimension for which a rank-\(d\) head achieves relative error \(\varepsilon\). The spectrum is called \emph{flat} at scale \(N\) when \(d_{\varepsilon}(F)=\Omega((1-\varepsilon)N)\).

The equality function realizes this worst case exactly. On \(\mathcal{U}=\mathcal{V}=\{\pm1\}^m\) with \(N=2^m\), the interaction operator satisfies \(T_{F_{\mathrm{eq}}}=\frac1N\mathrm{Id}\). The singular values are all equal to \(1/\sqrt{N}\), the interaction rank is \(N\), and every rank-\(d\) head achieves relative error exactly \(1-d/N\). Thus, attaining relative error \(\varepsilon\) requires \(d\ge(1-\varepsilon)2^m\) embedding dimensions, exponential in the input size (Theorem~\ref{thm:flat-floor}, Appendix~\ref{app:proofs-sec4}).

By contrast, an early-interaction predictor with \(O(m)\) parameters represents \(F_{\mathrm{eq}}\) exactly. The identity \(\inner{u}{v}=m-2\,\mathrm{Ham}(u,v)\) allows a simple threshold on the inner product to recover the equality indicator, whether implemented by a cross-attention layer with identity projections or by a joint MLP with \(O(m)\) ReLUs. Combined with Theorem~\ref{thm:flat-floor}, this yields an exponential separation: \(O(m)\) parameters suffice under early interaction, while any late-interaction head requires \(\Omega(2^m)\) dimensions to reach constant relative error (Theorem~\ref{thm:early-escape}, Appendix~\ref{app:proofs-sec4}).

These observations lead to a concrete usability criterion (Theorem~\ref{thm:criterion}, Appendix~\ref{app:proofs-sec4}). When the effective interaction rank \(d_{\varepsilon}(F)\) is small, a late-interaction head is appropriate: its approximation and estimation errors are controllable, and its computational and statistical advantages are preserved. When the spectrum is flat, the truncation floor forces every rank-\(d\) head to relative error at least \(\varepsilon\), while early-interaction models can still reach \(\varepsilon\) with polynomial resources when the target has low-dimensional joint structure. The deciding quantity, the spectral decay rate, can be estimated from data by a weighted singular value decomposition of the empirical kernel matrix, making the criterion actionable before either model is trained.

The criterion is a boundary, not a verdict. The early-interaction upper bounds assume low-dimensional joint structure; a target that is both high-rank and unstructured is difficult for both architectures. Moreover, the parameter counts in Theorem~\ref{thm:early-escape} are statements about representational existence, not about learnability. Section~\ref{sec:exp} instantiates both sides of the dichotomy: on gapped spectra the head identifies and estimates well, while flat and narrow-band targets drive it to the exponential-rank floor.
\section{Experiments}
\label{sec:exp}

Our experiments are designed for mechanism validation rather than benchmark chasing: each study isolates a quantity that the theory predicts exactly. The controlled synthetic setting (Section~\ref{sec:exp-synth}) supplies exact ground truth and is reported in full, with the tables and figures supporting each conclusion placed alongside the corresponding paragraph. Two real architectures instantiate the multiplicative head directly, DeepONet and CLIP, and are reported in full in Appendix~\ref{app:exp} due to page limitation; here we state only the conclusions they establish. On DeepONet, post-hoc whitening recovers the analytic operator eigenbasis from an end-to-end-trained model. On two pretrained CLIP pairs, the interaction spectra are shared across models while per-dimension concept probes are not, and fitting the single residual rotation reconciles them.


\subsection{Controlled synthetic study}
\label{sec:exp-synth}

\paragraph{Setup.}
This study validates the identifiability and usability claims of Sections~\ref{sec:gauge} and~\ref{sec:sample} in a setting with exact ground truth. We fix a target \(F(u,v)=\sum_k\sigma_k a_k(u)b_k(v)\) with shifted Legendre modes and a prescribed spectrum, fit linear dual encoders over a fixed Legendre feature map, and record four quantities per run: the fit error, the alignment of the recovered modes with the true \(a_k,b_k\), the encoder-covariance conditioning, and the cross-seed gauge distance. The encoder-realization error of Assumption~\ref{asm:realizability} is zero, so normalization is the only variable across runs. The four paragraphs below use these quantities to test, in turn, the identifiability ranking of the gauges (Theorem~\ref{thm:whiten-id}), the gap law of estimation (Corollaries~\ref{cor:gap-cond} and~\ref{cor:whiten-sc}), the sample-complexity rate (Theorem~\ref{thm:minimax}), and the flat-spectrum floor (Theorem~\ref{thm:flat-floor}).

\paragraph{Normalization as gauge-fixing.}
The seven schemes of Table~\ref{tab:zoo} separate cleanly by residual group. Only whitening attains perfect mode alignment (\(1.000\)) with zero cross-seed gauge distance and drives the conditioning diagnostic \(\max(\mathrm{cond}\,\Sigma_f,\mathrm{cond}\,\Sigma_g)\) to its theoretical floor \(\sigma_1^2/\sigma_d^2\approx 9.5\) (Corollary~\ref{cor:gap-cond}); every looser gauge remains stuck at alignment \(0.53\) to \(0.65\), its residual group leaving the modes rotationally entangled (Theorem~\ref{thm:cosine-od}). The gauge-invariant diagnostic \(\mathrm{cond}(\Sigma_f\Sigma_g)\) cannot separate the schemes (Lemma~\ref{lem:gauge-inv}). The two-sided nonnegative gauge incurs error \(4.25\), the predicted out-of-class cost of excluding negatively correlated modes (Theorem~\ref{thm:nonneg-nmf}). Normalization is therefore gauge fixing, and whitening is the only scheme that removes the symmetry completely, exactly as Theorem~\ref{thm:whiten-id} predicts.

\begin{table}[t]
\centering
\small
\setlength{\tabcolsep}{3pt}
\caption{The seven normalization schemes, four seeds each. Residual gauge: the subgroup that survives the constraint (Theorem~\ref{thm:whiten-id} predicts $\Perm\ltimes\Dpm$ for whitening, $\Od$ for cosine, $\Perm\ltimes\Dplus$ for two-sided nonnegativity). Fit MSE: squared regression error. Alignment: mean mode alignment of the two encoders after optimal permutation and sign matching. Cond: $\max(\mathrm{cond}\,\Sigma_f,\mathrm{cond}\,\Sigma_g)$, gauge dependent. Cross-seed: mean pairwise gauge distance between seeds.}
\label{tab:zoo}
\begin{tabular}{@{}>{\raggedright\arraybackslash}p{2.4cm}>{\raggedright\arraybackslash}p{2.3cm}r r r r@{}}
\toprule
Normalization & Residual gauge & Fit MSE & Align. ($f$,$g$) & Cond & Cross-seed \\
\midrule
none & $\GL_d$ & 0.0004 & 0.605, 0.619 & 13.8 & 0.326 \\
l2\_single & $\approx\GL_d$ & 0.0092 & 0.561, 0.420 & 16.8 & 0.284 \\
per\_coord & $\Od$+shear & 0.0004 & 0.607, 0.593 & 18.7 & 0.330 \\
clip\_cosine & $\Od$ & 0.0568 & 0.529, 0.550 & 305.9 & 0.212 \\
nonneg & $\Perm\ltimes\Dplus$ & 4.251 & 0.576, 0.446 & 121.0 & 0.280 \\
whiten\_soft & $\to\Perm\ltimes\Dpm$ & 0.0169 & 0.651, 0.549 & 105.5 & 0.348 \\
whiten\_hard & $\Perm\ltimes\Dpm$ & 0.0004 & 1.000, 1.000 & 9.5 & 0.000 \\
\bottomrule
\end{tabular}
\end{table}

\paragraph{The gap as the single unifying quantity.}
The spectral gap is the single quantity governing identification. With the finite-sample whitening estimator of Appendix~\ref{app:finite}, we sweep the gap at fixed sample size and the sample size at fixed gap, eight configurations in all (Table~\ref{tab:gap}; Figure~\ref{fig:gap} in Appendix~\ref{app:exp} plots the collapse). The mode-identification error scales as \(1/\Delta\) at fixed sample size and as \(n^{-1/2}\) at fixed gap, where \(\Delta=\min_k(\sigma_k^2-\sigma_{k+1}^2)\), and the two sweeps collapse onto the single relation
\begin{equation}
\mathrm{err}\cdot\Delta\cdot\sqrt{n}\approx 2.48
\end{equation}
with coefficient of variation \(0.12\): gap and sample size enter the identification error only through their product, the concrete face of Corollaries~\ref{cor:gap-cond} and~\ref{cor:whiten-sc}.

\begin{table}[t]
\centering
\small
\setlength{\tabcolsep}{4pt}
\caption{Gap and sample-size sweeps of the finite-sample whitening estimator (Appendix~\ref{app:finite}). Left group: the spectral gap $\Delta=\min_k(\sigma_k^2-\sigma_{k+1}^2)$ swept by scaling a base spectrum at fixed $n=3200$. Right group: sample size swept at fixed $\Delta=0.900$. Mode error is the mean mode identification error; the product $\mathrm{err}\cdot\Delta\cdot\sqrt{n}$ is near-constant across both groups, $\approx 2.48$.}
\label{tab:gap}
\begin{tabular}{@{}r r r | r r@{}}
\toprule
\multicolumn{3}{c}{Fixed $n=3200$} & \multicolumn{2}{c}{Fixed $\Delta=0.900$} \\
Scale & $\Delta$ & Mode err & $n$ & Mode err \\
\midrule
1.00 & 0.900 & 0.0431 & 200 & 0.1979 \\
0.60 & 0.648 & 0.0782 & 800 & 0.0956 \\
0.35 & 0.417 & 0.1258 & 3200 & 0.0470 \\
0.20 & 0.230 & 0.2458 & 12800 & 0.0236 \\
\bottomrule
\end{tabular}
\end{table}

\paragraph{The sample-complexity rate law.}
In the linear regime the head is exactly a rank-\(d\) matrix sensing problem, and Theorem~\ref{thm:minimax} predicts \(\E\|\hat\Theta-\Theta^{\star}\|_F^2\asymp\sigma_{\varepsilon}^2 d(p+q)/n\). We sweep each of the three variables with the others held fixed and fit log-log slopes, the three sweeps shown in Figure~\ref{fig:sampcomp}. The measured slopes are \(-1.03\) in \(n\), \(+0.89\) in \(d\), and \(+1.24\) in \(p+q\), against the theoretical \(-1,+1,+1\); the excess in the last slope is the expected Marchenko-Pastur finite-sample correction, and gradient descent on the factored pair matches the least-squares slope. These sweeps reject the loose general bound of Remark~\ref{rem:conservative}, which would predict slope \(-1/2\): estimation is governed by the sum of the marginal encoder complexities, and the \(1/n\) dependence is genuine.

\begin{figure}[t]
\centering
\includegraphics[width=\textwidth]{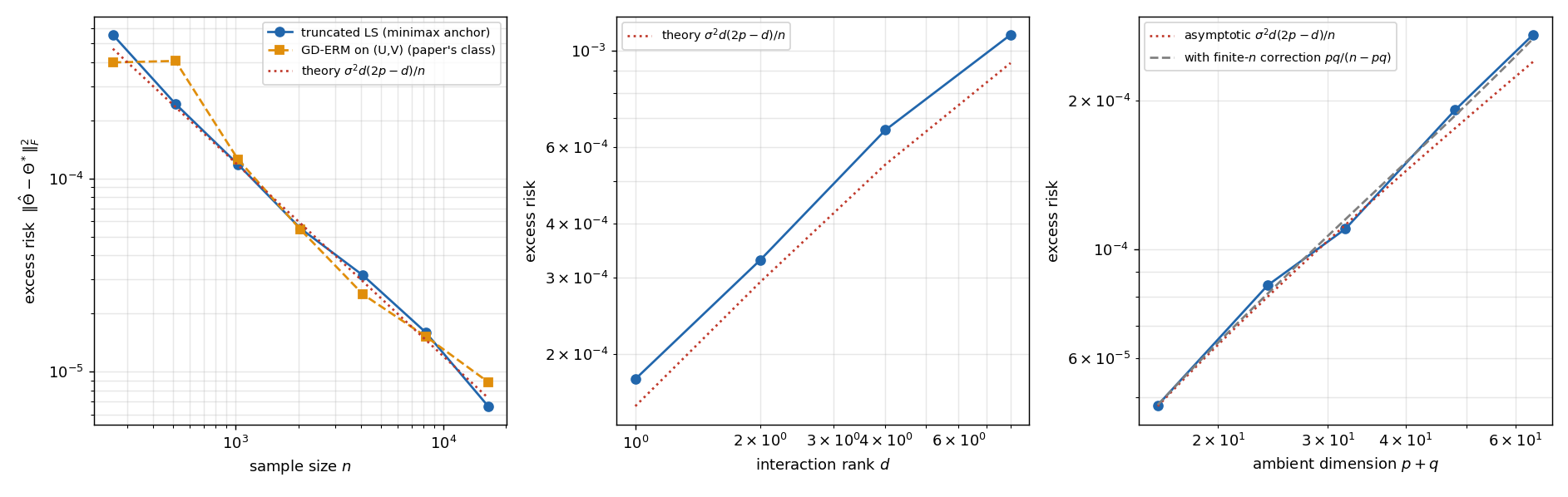}
\caption{The sample-complexity law of Theorem~\ref{thm:minimax}. Log-log sweeps of the Frobenius excess risk against sample size $n$ (left), interaction rank $d$ (center), and ambient dimension $p+q$ (right). Points are the measured risk; the dotted line is the theoretical rate $\sigma^2 d(2p-d)/n$, with the finite-$n$ corrected form (dashed) in the right panel.}
\label{fig:sampcomp}
\end{figure}

\paragraph{The usability criterion.}
A flat spectrum floors the head regardless of capacity. We test two flat and near-flat regimes, both tabulated in Tables~\ref{tab:eq} and~\ref{tab:band}. On the Boolean equality kernel \(F=\mathbf{1}[u=v]\) on \(\{\pm1\}^m\), which has a completely flat spectrum, a rank-\(d\) head incurs relative error exactly \(1-d/N\) and requires \(d\approx 0.9\cdot 2^m\) dimensions to reach \(10\%\) error, while an early-interaction threshold on \(\inner{u}{v}\) is exact with \(2m+1\) parameters (Theorems~\ref{thm:flat-floor} and~\ref{thm:early-escape}). Smooth narrow-band kernels give the polynomial counterpart \(d_{\varepsilon}=\Theta(1/h)\) (Remark~\ref{rem:smooth}). A trained-model anchor on a block-boxcar kernel realizes the same floor (Table~\ref{tab:boxcar} and Figure~\ref{fig:usability}, Appendix~\ref{app:exp}).

\begin{table}[t]
\centering
\small
\begin{minipage}[t]{0.6\textwidth}
\centering
\setlength{\tabcolsep}{4pt}
\caption{The boolean equality kernel $F=\mathbf{1}[u=v]$ on $\{\pm1\}^m$, $N=2^m$. Rank $d$: embedding dimension at which the head first reaches $10\%$ relative error. Early: parameters $2m+1$ and error of the early-interaction comparator.}
\label{tab:eq}
\begin{tabular}{@{}r r r r r r@{}}
\toprule
$m$ & $N$ & Rank $d$ & $d/N$ & Early params & Early error \\
\midrule
3 & 8 & 8 & 1.00 & 7 & 0 \\
4 & 16 & 15 & 0.94 & 9 & 0 \\
5 & 32 & 29 & 0.91 & 11 & 0 \\
6 & 64 & 58 & 0.91 & 13 & 0 \\
7 & 128 & 116 & 0.91 & 15 & 0 \\
8 & 256 & 231 & 0.90 & 17 & 0 \\
\bottomrule
\end{tabular}
\end{minipage}\hfill
\begin{minipage}[t]{0.35\textwidth}
\centering
\setlength{\tabcolsep}{4pt}
\caption{Narrowband periodic-Gaussian kernel of bandwidth $h$ on the circle: interaction rank $d_{\varepsilon}$ for $\varepsilon=10\%$ error. The rank grows as $\Theta(1/h)$ (Remark~\ref{rem:smooth}).}
\label{tab:band}
\begin{tabular}{@{}r r r@{}}
\toprule
$h$ & $1/h$ & $d_{\varepsilon}$ \\
\midrule
0.200 & 5 & 3 \\
0.100 & 10 & 4 \\
0.050 & 20 & 8 \\
0.025 & 40 & 15 \\
0.0125 & 80 & 30 \\
\bottomrule
\end{tabular}
\end{minipage}
\end{table}

\section{Conclusion}
\label{sec:concl}

This paper has examined a single architectural pattern, the multiplicative dual-encoder head, through a unifying lens: the interaction spectrum of the target. Through theoretical analysis and numerical experiments, we answer four central questions.  (i) Approximation error decomposes into a truncation term intrinsic to the target and a realization term intrinsic to the encoders, with target smoothness controlling the decay rate. (ii) Sample complexity is governed by the sum of the two marginal encoder complexities, with the linear regime anchoring the minimax rate. (iii) Usability is determined by the decay rate itself: a flat spectrum imposes an error floor that no rank-$d$ head can escape, while early-interaction models can, and the deciding quantity is measurable from data before either model is trained. (iv) Identifiability, the problem that the framework brings into focus, is gauge fixing: every normalization used in practice is a section of the gauge orbit; whitening pins the interaction modes up to permutation and sign; and the spectral gap simultaneously controls optimization conditioning, identifiability, and estimation. 

\section*{Ethics Statement}
\emph{This work adheres to the principles outlined in the ICLR Code of Ethics.}

\bibliography{refs}
\bibliographystyle{iclr2026_conference}

\appendix
\newpage
\section*{Appendix Contents}  
\startcontents
\printcontents{}{1}{\setcounter{tocdepth}{2}}
\newpage

\section{Derivations for the Unification Table}
\label{app:unif}

This appendix spells out, for each of the ten families of Table~\ref{tab:unif}, the identification that makes it an instance of the multiplicative dual-encoder head \eqref{eq:head}: the two input spaces, the encoder maps, the score, the reference measures that define the interaction spectrum, and the boundary cases where an additional output-level normalization acts after the inner product and therefore leaves the gauge untouched. Table~\ref{tab:unif} collects the ten families, each computing a real-valued output for a pair of inputs as the inner product of two separately learned encodings; reading across the rows makes visible what is shared, and the paragraphs below identify the interaction operator and its spectrum for each family, including the boundary cases.

\paragraph{CLIP.} Inputs are an image $I$ and a text $T$; the encoders are the image tower $f$ and the text tower $g$, both $\ell_2$-normalized; the score is $\mathrm{sim}(I,T)=\inner{f(I)}{g(T)}$, rescaled by a temperature and fed into a softmax over the batch. The reference measures are the empirical distributions over images and over texts in the training batch. The temperature and the softmax act on the similarity matrix after the inner product, so they are output-level and do not affect the embedding-level gauge; the unit-norm constraints are embedding-level and leave exactly the residual rotation $\Od$ of Theorem~\ref{thm:cosine-od}.

\paragraph{DeepONet.} Inputs are an input function $a$ (on a sensor grid) and a query point $y$; the encoders are the branch net $b$ and the trunk net $t$; the score is $\mathcal{G}(a)(y)\approx\inner{b(a)}{t(y)}$ \citep{lu2021deeponet}. The reference measures are the distribution over input functions and the measure over query points. For a linear operator with kernel $\kappa=\sum_k\sigma_k\beta_k(y)\gamma_k(x)$ and a white input measure, the interaction spectrum coincides with the operator singular values and the trunk's ideal basis with the left singular functions, which is the ground truth used in Appendix~\ref{app:exp-deeponet}. The boundary case is the scale imbalance between branch and trunk outputs, familiar in practice; it is the scale redundancy of the gauge group, diagnosed in Section~\ref{sec:gauge}. POD-DeepONet substitutes a precomputed orthogonal POD basis for the learned trunk~\citep{lu2021deeponet}, motivated by the same nonuniqueness; our treatment is complementary, characterizing when the learned basis becomes identifiable instead of replacing it (Appendix~\ref{app:exp-deeponet}).

\paragraph{Two-tower retrieval.} Inputs are a query $q$ and a document $d$; the encoders are two towers $\varphi,\psi$; the score is a relevance measure $\mathrm{rel}(q,d)=\inner{\varphi(q)}{\psi(d)}$ \citep{karpukhin2020dpr}. The reference measures are the query distribution and the document distribution, and the interaction spectrum is the relevance spectrum of the underlying ranker. The top-$k$ nearest-neighbor search is output-level.

\paragraph{UVFA (universal value function approximators).} Inputs are a state $s$ and a goal $g$; the encoders are the state encoder $\phi(s)$, a learned map from states to features, and the goal encoder $\psi(g)$, a learned map from goals to features; the score is the goal-conditioned value $V(s,g)=\inner{\phi(s)}{\psi(g)}$ \citep{schaul2015uvfa}. The reference measures are the state-visitation distribution and the goal distribution. Goal-conditioned value functions are naturally low-rank when goals and states share a low-dimensional feature structure, which is exactly the case in which the representation \eqref{eq:head} is sample-efficient (Section~\ref{sec:sample}). The interaction spectrum also gives these methods a rank-selection rule they currently lack: the number of features in a successor-feature representation or the width of a value-network embedding is typically a tuned hyperparameter, whereas Corollary~\ref{cor:opt-rank} ties it to the decay of the spectrum.

\paragraph{BVN (bilinear value networks).} Inputs are a state-action pair $(s,a)$ and a state-goal pair $(s,g)$; the encoders are the MLP $f(s,a)$ encoding the state-action pair and the MLP $\varphi(s,g)$ encoding the state-goal pair; the score is $Q(s,a,g)=\inner{f(s,a)}{\varphi(s,g)}$ \citep{hong2022bvn}. The reference measures are the joint distributions over $(s,a)$ and $(s,g)$. Note that the two arguments share the state $s$: the head's theory applies to the function $Q$ of the pair of arguments, and the shared state is exactly the coupling that makes $Q$ non-separable in $(s,a,g)$ directly, which is why the two-input factorization matters.

\paragraph{Successor features.} Inputs are a state $s$ and a reward weight vector $w$; the encoder on the state side is the successor-feature map $\phi(s)$ and the encoder on the weight side is the identity $w$ (a singleton encoder class, not learned); the score is $V_w(s)=\inner{\phi(s)}{w}$ \citep{barreto2017sf}. This is the head with $\mathcal{G}=\{\mathrm{id}\}$, whose Rademacher complexity is zero, so the sample-complexity bounds of Section~\ref{sec:sample} specialize to the standard guarantee for successor features: only the state-side class matters.

\paragraph{DistMult and ComplEx.} Inputs are a head entity $h$ and a tail entity $t$; the encoders are entity embeddings $e_h,e_t\in\R^d$; the score is the bilinear form $\inner{e_h}{r\circ e_t}$ of DistMult or the real part of the complex product $e_h^{\top}\,\mathrm{diag}(r)\,\overline{e_t}$ of ComplEx, where $r$ is the relation embedding \citep{yang2015distmult,trouillon2016complex}. Both are rank-$d$ bilinear forms on the entity embeddings, i.e., heads with linear encoders and a fixed middle matrix; the interaction spectrum is governed by the Gram structure of the entity embeddings under the relation. The reference measure is the empirical distribution over observed triples.

\paragraph{Linear attention.} Inputs are a query $q$ and a key $k$ in a common space; the encoders are the feature map $\phi$ applied to each; the attention weight is $\inner{\phi(q)}{\phi(k)}$ \citep{katharopoulos2020linearattn}. The aggregation step $\sum_j\phi(k_j)v_j^{\top}$ makes the compute separation of Proposition~\ref{prop:compute} exact: the pairwise inner products are never formed, giving the $O(n^2)\to O(n)$ complexity. The reference measure is the distribution over keys (or key-value pairs). Finite-dimensional feature maps realize the head exactly; softmax attention with random-feature maps is the limiting case of an infinite-dimensional $\phi$.

\paragraph{Factorization machines.} Inputs are a feature index $i$ and a feature index $j$; the encoders are the embedding maps $v_i,v_j$; the pairwise term is $x_ix_j\inner{v_i}{v_j}$ \citep{rendle2010fm}. The reference measure is the empirical distribution over feature pairs weighted by the data. The interaction spectrum of the pairwise term is the spectrum of the Gram matrix of the embeddings, and the rank selection of Section~\ref{sec:sample} is the standard FM capacity control viewed through the interaction spectrum.

\paragraph{QK-attention matching.} Inputs are a worker $w$ and an order $o$; the encoders are a BERT-style encoder $f$ and an MLP $g$; the score is the match value $\inner{f(w)}{g(o)}$ forming the $Q$-matrix of a bipartite assignment problem, with a one-sided softplus and unit-norm constraint on the order encoder \citep{triplebert2026,assignmentnet2025,zhao2025impacts}. The reference measure is the empirical distribution over worker-order pairs. The matching layer that consumes the score matrix is output-level; the one-sided constraint is embedding-level but, as Remark~\ref{rem:gauge-scale} explains, it is a stability fix rather than an identifiability scheme. The two papers differ only in the downstream use of the same head, which is why they share the identifiability properties derived in this paper.

\begin{table}[t]
\centering
\small
\setlength{\tabcolsep}{3pt}
\caption{Ten families of methods as instances of the multiplicative dual-encoder head \eqref{eq:head}: for each family, the two inputs, the encoder maps, the output, and the downstream use.}
\label{tab:unif}
\begin{tabular}{@{}>{\raggedright\arraybackslash}p{1.9cm}>{\raggedright\arraybackslash}p{2.0cm}>{\raggedright\arraybackslash}p{2.7cm}>{\raggedright\arraybackslash}p{3.0cm}>{\raggedright\arraybackslash}p{3.3cm}@{}}
\toprule
Instance & Inputs & Encoders & Output & Downstream use \\
\midrule
CLIP \citep{radford2021clip} & image, text & ViT, text transformer & normalized similarity & contrastive softmax over a batch \\
DeepONet \citep{lu2021deeponet} & function $a$, point $y$ & branch net, trunk net & $\inner{\mathrm{branch}(a)}{\mathrm{trunk}(y)}$ & operator output $\mathcal{G}(a)(y)$ \\
Two-tower retrieval \citep{karpukhin2020dpr} & query, document & two towers & relevance & top-$k$ ANN search \\
UVFA \citep{schaul2015uvfa} & state $s$, goal $g$ & $\phi(s)$, $\psi(g)$ & $V(s,g)$ & value function \\
BVN \citep{hong2022bvn} & $(s,a)$, $(s,g)$ & two MLPs & $Q(s,a,g)$ & multi-goal RL \\
Successor features \citep{barreto2017sf} & state $s$, weights $w$ & $\phi(s)$, $w$ & $V_w(s)$ & transfer \\
DistMult / ComplEx \citep{yang2015distmult,trouillon2016complex} & head, tail entity & entity embeddings & bilinear / complex product & link prediction \\
Linear attention \citep{katharopoulos2020linearattn} & query $q$, key $k$ & feature map $\phi$ & $\inner{\phi(q)}{\phi(k)}$ & $O(n^2)\to O(n)$ aggregation \\
Factorization machines \citep{rendle2010fm} & feature $i$, feature $j$ & embedding $v_i$ & $\inner{v_i}{v_j}x_ix_j$ & pairwise interaction \\
QK-attention matching \citep{triplebert2026,assignmentnet2025} & worker $w$, order $o$ & BERT, MLP & $\inner{f(w)}{g(o)}$ & $Q$-matrix for ILP matching \\
\bottomrule
\end{tabular}
\end{table}

\paragraph{Positioning against prior work.}
The paragraphs above identify each family and its interaction operator; we now position the paper's claims against the closest prior work, none of which treats the families as instances of a shared architecture or formulates the identifiability question. On the estimation side, the linear case of the head is rank-$d$ matrix sensing, and the paper relies on the classical minimax theory, upper bounds via restricted-isometry-type analyses~\citep{candes2011tight,negahban2011estimation} and matching lower bounds~\citep{rohde2011estimation}; bilinear bandits~\citep{jun2019bilinear} study the online counterpart. The contribution relative to this line is a translation rather than a new bound: the estimation problem of a trained head is identified with matrix sensing, and the minimax rate is connected to the rank-selection rule of Section~\ref{sec:sample} through the decay of the interaction spectrum, which the matrix-sensing literature does not formulate because it treats the rank as given. On the identifiability side, the direct precursor of our remedy is self-supervised whitening: W-MSE applies hard whitening to the embeddings~\citep{ermolov2021wmse}, and Barlow Twins and VICReg implement its soft relaxations, pushing the cross-correlation matrix toward the identity and regularizing variance and covariance~\citep{zbontar2021barlow,bardes2022vicreg}; it is also known that whitening reduces a general linear gauge to an orthogonal ambiguity, a polar-decomposition observation. Against this background, our contribution is not the whitening operation itself, which we do not claim, but three things: the interpretation of normalization as a section of the gauge orbit and the classification of the residual subgroups across the normalizations used in practice (Table~\ref{tab:gauge}, Appendix~\ref{app:proofs-sec3}); the theorem that whitening together with a spectral gap pins the modes down to permutation and sign, strictly beyond the orthogonal ambiguity that decorrelation alone leaves (Theorem~\ref{thm:whiten-id}); and the statement that the interaction spectrum is the unique gauge-invariant content of a trained model, an interpretability claim with a theorem behind it rather than an empirical observation. The nonnegative variant of the bilinear form has its own classical identifiability theory, separability-type conditions for uniqueness up to permutation and scale~\citep{donoho2003nmf,arora2012nmf}, which the gauge analysis of Section~\ref{sec:gauge} subsumes as a special case (Theorem~\ref{thm:nonneg-nmf}).

\section{Proofs for Section 2}
\label{app:proofs-sec2}

We prove the results of Section~\ref{sec:unif} in order. Throughout, $\|\cdot\|$ denotes the $L^2(\mu_U\otimes\mu_V)$ norm, and $\|\cdot\|_{L^2(\mu_U)}$, $\|\cdot\|_{L^2(\mu_V)}$ the marginal norms.

\begin{assumption}[Encoder realizability]
\label{asm:realizability}
The encoder classes $\mathcal{F}$ and $\mathcal{G}$ contain functions $\hat f,\hat g$ such that, for some nonnegative errors $\eta_{f,k},\eta_{g,k}$,
\begin{equation}
\|\hat f_k-\sigma_k a_k\|_{L^2(\mu_U)}\le\eta_{f,k},\qquad
\|\hat g_k-b_k\|_{L^2(\mu_V)}\le\eta_{g,k},\qquad k=1,\dots,d.
\label{eq:realizability}
\end{equation}
\end{assumption}

\begin{remark}[What the decomposition buys]
The truncation term is intrinsic to the problem, the realization term to the model; when the encoder classes are dense in $L^2$ the bound approaches the lower bound of Corollary~\ref{cor:trunc-lower}, so the decomposition is tight up to the constant. Whether training finds the coordinate alignment of \eqref{eq:realizability} is the question of identifiability addressed in Section~\ref{sec:gauge}.
\end{remark}

\begin{proposition}[Compute separation]
\label{prop:compute}
Evaluating a rank-$d$ head on all $n\times m$ input pairs costs
\begin{equation}
O\bigl((n+m)\,C_{\mathrm{net}}+nm\,d\bigr),
\end{equation}
where $C_{\mathrm{net}}$ is the cost of one encoder forward pass, whereas a model that processes each pair jointly costs $O(nm\,C_{\mathrm{net}})$. When the encoder forward pass dominates the inner product, the head is cheaper by a factor $C_{\mathrm{net}}/d$.
\end{proposition}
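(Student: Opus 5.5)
The plan is a direct operation count. The head's structure, $F(u,v)=\inner{f(u)}{g(v)}$, factors the all-pairs computation into a per-input encoding stage and a per-pair contraction stage. I will cost the two stages separately and then compare the total with the joint model.

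\emph{Stage 1 (encoding).} Each of the $n$ inputs $u_1,\dots,u_n$ is passed once through $f$, and each of the $m$ inputs $v_1,\dots,v_m$ once through $g$. Each pass costs $C_{\mathrm{net}}$; if the two towers differ, take $C_{\mathrm{net}}$ to be the larger of the two forward-pass costs. This stage therefore costs $(n+m)C_{\mathrm{net}}$ and yields matrices $P\in\R^{n\times d}$ and $Q\in\R^{m\times d}$ whose rows are the encodings. The key observation is that separability of the encoders means no encoding depends on the partner input, so each is computed once and reused across all $m$ (respectively $n$) pairs in which it appears.

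\emph{Stage 2 (scoring).} The full score matrix is $PQ^{\top}\in\R^{n\times m}$. Computing each entry is a length-$d$ inner product costing $O(d)$ operations, so the naive matrix product costs $O(nmd)$. Adding the two stages gives the stated bound.

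\emph{Comparison.} A joint model takes the pair $(u_i,v_j)$ as a single input, so no computation can be shared across pairs. It therefore needs $nm$ forward passes, for a cost of $O(nm\,C_{\mathrm{net}})$; here I take $C_{\mathrm{net}}$ to be comparable for the joint network, as the statement implicitly assumes. If the pair cost dominates, i.e.\ $nm\,d\gtrsim(n+m)C_{\mathrm{net}}$, the ratio of costs is $nm\,C_{\mathrm{net}}/(nm\,d)=C_{\mathrm{net}}/d$. If instead the encoding term dominates, the ratio is $nm/(n+m)\ge\min(n,m)/2$, which is at least as favorable in the typical regime.

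There is no real obstacle here. The only point requiring care is stating the cost model precisely: same-order encoder costs, and a standard matrix product. I would remark that fast matrix multiplication could only lower the $nmd$ term, so the bound remains valid as an upper bound.
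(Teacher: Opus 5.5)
Your proof is correct and is the same direct operation count the paper uses ($n+m$ encoder passes plus $nm$ inner products of length $d$, against $nm$ joint passes). Your condition $nm\,d\gtrsim(n+m)C_{\mathrm{net}}$ is in fact the exact regime for the factor $C_{\mathrm{net}}/d$, slightly more careful than the paper's ``$C_{\mathrm{net}}\gg d$ and $nm\gg n+m$''. One small slip does not affect the claim: in the encoding-dominated regime the speedup $nm/(n+m)$ is at most $C_{\mathrm{net}}/d$ by definition of that regime, so it is not ``at least as favorable''.
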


\begin{remark}[Known complexity results as instances]
Proposition~\ref{prop:compute} recovers, as special cases, the multiplicative-to-additive complexity argument of the QK-attention matching networks \citep{triplebert2026}, the $O(n^2)\to O(n)$ aggregation of linear attention \citep{katharopoulos2020linearattn}, where the attention weight is written as the inner product of feature maps, and the efficient branch-trunk evaluation of DeepONet \citep{lu2021deeponet}.
\end{remark}

\begin{remark}[Where low interaction rank comes from]
Three structural sources make the truncation term small: exact low rank ($F=\sum_{j\le r}\psi_j(u)\chi_j(v)$ implies $\irank(F)\le r$), tensor-product structure (for $F$ in a tensor-product RKHS the spectrum is governed by the marginal kernels), and smoothness (Theorem~\ref{thm:decay} gives explicit rates). Conversely, a flat spectrum, developed in Section~\ref{sec:sample}, means the truncation term does not decay and no choice of $d$ helps.
\end{remark}

\begin{lemma}[Rank equivalence]
\label{lem:rank-equiv}
A function $S\in L^2(\mu_U\otimes\mu_V)$ can be written as $S(u,v)=\inner{f(u)}{g(v)}$ for some encoders $f:\mathcal{U}\to\R^d$ and $g:\mathcal{V}\to\R^d$ if and only if $\rank(T_S)\le d$. In particular, the parametric class of eq.~\eqref{eq:class} coincides with $\Mcal_d=\{F:\irank(F)\le d\}$, and we use the two descriptions interchangeably.
\end{lemma}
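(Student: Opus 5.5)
The plan is to prove both directions by working with the coordinate functions and the finite-dimensional subspaces they span. Here $f_k\in L^2(\mu_U)$ and $g_k\in L^2(\mu_V)$ as in the standing setup, so every object is a Hilbert-space element and ranks are ranks of bounded operators.

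\emph{Forward direction.} Suppose $S(u,v)=\inner{f(u)}{g(v)}=\sum_{k=1}^d f_k(u)g_k(v)$. For $h\in L^2(\mu_V)$, Fubini and Cauchy--Schwarz justify exchanging the finite sum with the integral, which gives
\begin{equation}
(T_Sh)(u)=\sum_{k=1}^d\Bigl(\int_{\mathcal V} g_k\,h\,d\mu_V\Bigr) f_k(u).
\end{equation}
Hence the range of $T_S$ lies in $\mathrm{span}\{f_1,\dots,f_d\}$, and $\rank(T_S)\le d$.

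\emph{Reverse direction.} Suppose $r=\rank(T_S)\le d$. I will use the Schmidt decomposition of Definition~\ref{def:spectrum}. The singular values $\sigma_k$ of $T_S$ are nonzero exactly for $k\le r$, so
\begin{equation}
S=\sum_{k\le r}\sigma_k\,a_k\otimes b_k
\end{equation}
in $L^2(\mu_U\otimes\mu_V)$. Define encoders by $f_k=\sigma_k a_k$ and $g_k=b_k$ for $k\le r$, and pad with $f_k=g_k=0$ for $r<k\le d$. Then $\inner{f(u)}{g(v)}=S(u,v)$ as elements of $L^2$, that is, $\mu_U\otimes\mu_V$-almost everywhere.

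\emph{The main obstacle.} The delicate point is the meaning of equality. On one side, $\rank(T_S)$ is defined via the Hilbert--Schmidt isometry $L^2(\mu_U\otimes\mu_V)\cong\mathrm{HS}(L^2(\mu_V),L^2(\mu_U))$. On the other, the head is a pointwise formula. I will state explicitly that functions are identified up to null sets, which is consistent with $F\in L^2$ throughout the paper. With that convention, the finite-rank identity shows that the HS kernel of $T_S$ equals $\sum_{k\le r}\sigma_k a_k\otimes b_k$, and injectivity of the isometry makes this the same element as $S$. I will also check that $r<\infty$ guarantees the Schmidt series is a finite sum, so no convergence issue arises.

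\emph{Conclusion.} The equality of classes follows from the two directions, because $\irank(F)$ equals the number of nonzero singular values, which equals $\rank(T_F)$.
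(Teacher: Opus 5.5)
Your proposal is correct and matches the paper's proof: the forward direction writes $T_S$ as a sum of at most $d$ rank-one operators $\inner{\cdot}{g_k}_{L^2(\mu_V)}f_k$, and the converse builds the encoders $f_k=\sigma_k a_k$, $g_k=b_k$ from the finite Schmidt decomposition, padded with zeros for $r<k\le d$. Your added remarks on almost-everywhere equality and on taking $f_k,g_k$ in $L^2$ only make explicit the conventions the paper uses implicitly.
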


\paragraph{Proof of Lemma~\ref{lem:rank-equiv} (Rank equivalence).}
The forward direction is immediate: if $S=\sum_{k\le d}f_k\otimes g_k$, then the interaction operator is a sum of at most $d$ rank-one operators,
\begin{equation}
T_S=\sum_{k\le d}\inner{\cdot}{g_k}_{L^2(\mu_V)}\,f_k,
\end{equation}
so $\rank(T_S)\le d$. Conversely, if $\rank(T_S)=r\le d$, the Schmidt decomposition of $S$ reads $S=\sum_{k\le r}\sigma_k\,a_k\otimes b_k$ with $\sigma_k>0$, and the encoders $f=(f_1,\dots,f_d)$, $g=(g_1,\dots,g_d)$ defined by $f_k=\sigma_k a_k$ and $g_k=b_k$ for $k\le r$, and $f_k=g_k=0$ for $k>r$, satisfy $S=\inner{f}{g}$.

\begin{corollary}[Spectral truncation lower bound]
\label{cor:trunc-lower}
For every rank-$d$ head $(f,g)$,
\begin{equation}
\|F-\inner{f}{g}\|_{L^2(\mu_U\otimes\mu_V)}^2\;\ge\;\sum_{k>d}\sigma_k^2,
\end{equation}
and the bound is attained in $L^2$ by the truncated Schmidt series $F_d=\sum_{k\le d}\sigma_k\,a_k\otimes b_k$, which is the unique best rank-$d$ approximation whenever $\sigma_d>\sigma_{d+1}$.
\end{corollary}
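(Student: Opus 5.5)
The plan is to reduce the statement to the infinite-dimensional Eckart--Young theorem for the Hilbert--Schmidt operator $T_F$. The isometry $S\mapsto T_S$ between $L^2(\mu_U\otimes\mu_V)$ and the Hilbert--Schmidt operators from $L^2(\mu_V)$ to $L^2(\mu_U)$ gives
\begin{equation}
\|F-\inner{f}{g}\|^2=\|T_F-T_{\inner{f}{g}}\|_{HS}^2 .
\end{equation}
By Lemma~\ref{lem:rank-equiv}, $T_{\inner{f}{g}}$ has rank at most $d$. The claim is therefore equivalent to
\begin{equation}
\|T_F-K\|_{HS}^2\ge\sum_{k>d}\sigma_k^2
\end{equation}
for every operator $K$ of rank at most $d$, together with attainment and uniqueness.

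\textbf{Lower bound.} I would use Weyl's inequality for singular values of compact operators: $\sigma_{i+j-1}(A+B)\le\sigma_i(A)+\sigma_j(B)$. Take $A=T_F-K$, $B=K$ and $j=d+1$. Since $\sigma_{d+1}(K)=0$, this gives $\sigma_{i+d}(T_F)\le\sigma_i(T_F-K)$ for all $i\ge1$. Squaring and summing over $i$ yields
\begin{equation}
\|T_F-K\|_{HS}^2=\sum_i\sigma_i(T_F-K)^2\ge\sum_{k>d}\sigma_k^2 .
\end{equation}
To stay self-contained, Weyl's inequality follows from the min-max (Courant--Fischer) characterization $\sigma_{k}(A)=\min_{\operatorname{codim}W<k}\max_{h\in W,\|h\|=1}\|Ah\|$, applied to compact operators.

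\textbf{Attainment.} Set $F_d=\sum_{k\le d}\sigma_k\,a_k\otimes b_k$. Orthonormality of the product system $\{a_k\otimes b_k\}$ in $L^2(\mu_U\otimes\mu_V)$ gives $\|F-F_d\|^2=\sum_{k>d}\sigma_k^2$ directly. This $F_d$ is realized by the encoders $f_k=\sigma_k a_k$, $g_k=b_k$ exactly as in the proof of Lemma~\ref{lem:rank-equiv}.

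\textbf{Uniqueness.} This is the main obstacle, since Weyl's inequality alone does not identify the minimizer. I would argue as follows. Let $K$ attain the bound, write $P$ for the orthogonal projection onto $\operatorname{ran}K$, and set $W=\operatorname{ran}K$, so $\dim W\le d$.
\begin{enumerate}
\item Replacing $K$ by $PT_F$ does not increase the error, because
\begin{equation}
\|T_F-K\|_{HS}^2=\|(I-P)T_F\|_{HS}^2+\|PT_F-K\|_{HS}^2 .
\end{equation}
Optimality therefore forces $K=PT_F$.
\item The error now equals $\|T_F\|_{HS}^2-\tr(PT_FT_F^*)$. By Ky Fan's maximum principle, $\tr(PT_FT_F^*)\le\sum_{k\le d}\sigma_k^2$ for any projection $P$ of rank at most $d$.
\item When $\sigma_d>\sigma_{d+1}$, equality in Ky Fan holds only if $W$ is exactly the span of the top $d$ eigenvectors $a_1,\dots,a_d$ of $T_FT_F^*$. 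This eigenspace is well defined precisely because of the gap.
\end{enumerate}
It follows that $K=PT_F=\sum_{k\le d}\sigma_k\inner{\cdot}{b_k}a_k=T_{F_d}$, and the isometry transfers this back to $\inner{f}{g}=F_d$ in $L^2$.

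The one delicate step is the equality case of Ky Fan. I would prove it by expanding $\tr(PT_FT_F^*)=\sum_k\sigma_k^2\|Pa_k\|^2$. The weights $w_k=\|Pa_k\|^2$ lie in $[0,1]$ and satisfy $\sum_k w_k\le d$. With the strict gap, the weighted sum reaches $\sum_{k\le d}\sigma_k^2$ only if $w_k=1$ for all $k\le d$, which forces $W=\operatorname{span}\{a_1,\dots,a_d\}$.
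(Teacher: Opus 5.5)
Your proof is correct and follows the same route as the paper: pass to Hilbert--Schmidt operators through the isometry $S\mapsto T_S$, then apply Eckart--Young--Mirsky. The paper cites that theorem for compact operators (Gohberg--Krein) and settles uniqueness in one line: ``the leading $d$-dimensional singular subspace is unique, hence $F_d$ is the unique minimizer.'' You instead prove both halves.

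Your uniqueness argument is more complete than the paper's. Uniqueness of the top singular subspace does not by itself show that an arbitrary minimizer has that subspace as its range, or that it is the compression of $T_F$ onto it. Your two steps supply exactly this:
\begin{itemize}
\item Pythagoras forces $K=PT_F$.
\item The equality case $w_k=\|Pa_k\|^2=1$ for $k\le d$ in $\sum_k\sigma_k^2 w_k$ then pins the range.
\end{itemize}

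One simplification is available. Your projection computation already proves the lower bound for every operator $K$ of rank at most $d$, with no gap assumption. With $P$ the projection onto $\operatorname{ran}K$,
\begin{equation*}
\|T_F-K\|_{HS}^2\ \ge\ \|(I-P)T_F\|_{HS}^2\ =\ \sum_k\sigma_k^2(1-w_k).
\end{equation*}
The weight bound $\sum_k\sigma_k^2w_k\le\sum_{k\le d}\sigma_k^2$ holds under $\sigma_d\ge\sigma_{d+1}$ alone, so the right-hand side is at least $\sum_{k>d}\sigma_k^2$.

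You can therefore drop the Weyl-inequality step. The whole corollary then follows from one argument: lower bound, attainment, and, under the strict gap, uniqueness via the equality case of the same weight inequality.
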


\paragraph{Proof of Corollary~\ref{cor:trunc-lower} (Spectral truncation lower bound).}
The map $S\mapsto T_S$ is an isometric isomorphism between $L^2(\mu_U\otimes\mu_V)$ and the Hilbert-Schmidt class $\mathrm{HS}(L^2(\mu_V),L^2(\mu_U))$ equipped with the inner product $\inner{T_S}{T_{S'}}=\tr(T_S^*T_{S'})$; indeed $\|T_S\|_{\mathrm{HS}}^2=\sum_{k\ge1}\sigma_k^2=\|S\|_{L^2}^2$. Under this isometry, the rank constraint of Lemma~\ref{lem:rank-equiv} becomes the operator rank constraint, and the Eckart-Young-Mirsky theorem for compact operators (see e.g. Gohberg and Krein) gives that the best rank-$d$ approximation of $T_F$ is the truncation to its leading $d$ singular triples, with squared error $\sum_{k>d}\sigma_k^2$. Uniqueness: when $\sigma_d>\sigma_{d+1}$, the leading $d$-dimensional singular subspace of $T_F$ is unique, hence $F_d=\sum_{k\le d}\sigma_k a_k\otimes b_k$ is the unique minimizer.

\paragraph{Proof of Theorem~\ref{thm:err-split} (Approximation error decomposition).}
Fix realizations $\hat f\in\mathcal{F}$, $\hat g\in\mathcal{G}$ satisfying Assumption~\ref{asm:realizability}, and write $f_k^\star=\sigma_k a_k$, $g_k^\star=b_k$ for the ideal modes, so that $F_d=\sum_{k\le d}f_k^\star\otimes g_k^\star=\inner{f^\star}{g^\star}$. Set $\varepsilon_k=f_k^\star-\hat f_k$ and $\delta_k=g_k^\star-\hat g_k$, with $\|\varepsilon_k\|_{L^2(\mu_U)}\le\eta_{f,k}$ and $\|\delta_k\|_{L^2(\mu_V)}\le\eta_{g,k}$. Split the error around $F_d$:
\begin{equation}
\|F-\inner{\hat f}{\hat g}\|\le\underbrace{\|F-F_d\|}_{=\sqrt{\mathcal{E}_{\mathrm{trunc}}}}+R,\qquad R:=\|F_d-\inner{\hat f}{\hat g}\|,
\end{equation}
where $\mathcal{E}_{\mathrm{trunc}}:=\sum_{k>d}\sigma_k^2$ by Corollary~\ref{cor:trunc-lower}. For each coordinate, the product rule for tensor products gives
\begin{equation}
f_k^\star\otimes g_k^\star-\hat f_k\otimes\hat g_k=\varepsilon_k\otimes b_k+\sigma_k a_k\otimes\delta_k-\varepsilon_k\otimes\delta_k.
\end{equation}
Sum over $k\le d$ and take norms. The first sum is diagonalized by the orthonormality of $\{b_k\}$:
\begin{equation}
\Bigl\|\sum_{k\le d}\varepsilon_k\otimes b_k\Bigr\|^2=\sum_{k\le d}\|\varepsilon_k\|_{L^2(\mu_U)}^2=\sum_{k\le d}\eta_{f,k}^2,
\end{equation}
and likewise the orthonormality of $\{a_k\}$ diagonalizes the second sum:
\begin{equation}
\Bigl\|\sum_{k\le d}\sigma_k a_k\otimes\delta_k\Bigr\|^2=\sum_{k\le d}\sigma_k^2\|\delta_k\|_{L^2(\mu_V)}^2\le\sum_{k\le d}\sigma_k^2\eta_{g,k}^2.
\end{equation}
This is the step that avoids a factor of $d$: bounding the sums coordinatewise and applying Cauchy-Schwarz over the $d$ coordinates would pay a factor $d$, whereas the orthonormality of the ideal modes makes the two sums orthogonal. The third sum is bounded by Cauchy-Schwarz:
\begin{equation}
\Bigl\|\sum_{k\le d}\varepsilon_k\otimes\delta_k\Bigr\|\le\sum_{k\le d}\|\varepsilon_k\|\,\|\delta_k\|\le\Bigl(\sum_{k\le d}\eta_{f,k}^2\Bigr)^{1/2}\Bigl(\sum_{k\le d}\eta_{g,k}^2\Bigr)^{1/2}.
\end{equation}
With $X=(\sum_k\eta_{f,k}^2)^{1/2}$, $Y=(\sum_k\sigma_k^2\eta_{g,k}^2)^{1/2}$, $Z=(\sum_k\eta_{g,k}^2)^{1/2}$, the three bounds give $R\le X+Y+XZ$, hence by $(x+y+z)^2\le3(x^2+y^2+z^2)$,
\begin{equation}
R^2\le3\sum_{k\le d}\eta_{f,k}^2+3\sum_{k\le d}\sigma_k^2\eta_{g,k}^2+3\Bigl(\sum_{k\le d}\eta_{f,k}^2\Bigr)\Bigl(\sum_{k\le d}\eta_{g,k}^2\Bigr).
\end{equation}
In the informative regime the mode errors are small: assuming $\sum_{k\le d}\eta_{g,k}^2\le1$ (the modes are realized with bounded total error, the only regime in which the bound is informative) and $\eta_{g,k}\le1/2$ for all $k$ (so that $B_g=\max_k\|\hat g_k\|_{L^2(\mu_V)}\ge\min_k(\|b_k\|-\|\delta_k\|)\ge1/2$), the cross term absorbs into the first term and $\sum_k\eta_{f,k}^2\le4B_g^2\sum_k\eta_{f,k}^2$, giving
\begin{equation}
R^2\le3\sum_{k\le d}\sigma_k^2\eta_{g,k}^2+24B_g^2\sum_{k\le d}\eta_{f,k}^2.
\end{equation}
Finally, since $\|F-\inner{\hat f}{\hat g}\|\le\sqrt{\mathcal{E}_{\mathrm{trunc}}}+R$,
\begin{equation}
\|F-\inner{\hat f}{\hat g}\|^2\le2\mathcal{E}_{\mathrm{trunc}}+2R^2
\le2\mathcal{E}_{\mathrm{trunc}}+6\sum_{k\le d}\sigma_k^2\eta_{g,k}^2+48B_g^2\sum_{k\le d}\eta_{f,k}^2.
\end{equation}
As $2\mathcal{E}_{\mathrm{trunc}}\le48\mathcal{E}_{\mathrm{trunc}}$ trivially, the right-hand side is at most $48$ times the sum of the truncation term and the realization term of the theorem. Taking the infimum over $f\in\mathcal{F}$, $g\in\mathcal{G}$, the bound holds with $C\le48\le144$; we do not optimize the constant, which is what the statement of the theorem records.

\begin{theorem}[Smoothness controls the truncation term]
\label{thm:decay}
Let $\mathcal{U}\subset\R^{m_U}$ and $\mathcal{V}\subset\R^{m_V}$ be bounded domains, and let $\mu_U,\mu_V$ have densities bounded above and below with respect to Lebesgue measure. If $F$ is $s$-smooth in its second argument, $F\in H^s(\mathcal{U}\times\mathcal{V})$, then
\begin{equation}
\sigma_k=O\bigl(k^{-s/m_V}\bigr),\qquad
\sum_{k>d}\sigma_k^2=O\bigl(d^{1-2s/m_V}\bigr)\quad\text{for }2s>m_V.
\end{equation}
If $F$ is real analytic on $\mathcal{U}\times\mathcal{V}$, then $\sigma_k=O(e^{-ck})$ for some $c>0$ and $\sum_{k>d}\sigma_k^2=O(e^{-2cd})$.
\end{theorem}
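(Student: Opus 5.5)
The plan is to bound the tail $\sum_{k>d}\sigma_k^2$ directly by exhibiting an explicit rank-$d$ approximant and invoking the optimality of the truncated Schmidt series (Corollary~\ref{cor:trunc-lower}). For every rank-$d$ function $S$ we have $\sum_{k>d}\sigma_k^2\le\|F-S\|^2_{L^2(\mu_U\otimes\mu_V)}$. It therefore suffices to construct one good separable approximant. The densities of $\mu_U,\mu_V$ are bounded above and below, so all $L^2(\mu)$ norms are equivalent to Lebesgue $L^2$ norms up to constants. We may thus work with Lebesgue measure throughout and transfer the conclusion at the end.

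\textbf{Sobolev case.} Choose a linear approximation scheme in the $v$ variable alone: a projection $P_d$ onto a $d$-dimensional subspace $W_d\subset L^2(\mathcal{V})$ with the Jackson-type estimate
\[
\|h-P_dh\|_{L^2(\mathcal{V})}\le C\,d^{-s/m_V}\|h\|_{H^s(\mathcal{V})}.
\]
Concrete choices are piecewise polynomials on a quasi-uniform mesh of $\mathcal{V}$, or eigenfunctions of a Neumann Laplacian after a Sobolev extension to a box. Apply $P_d$ fibrewise, $S_d(u,\cdot)=P_d F(u,\cdot)$. If $w_1,\dots,w_d$ is a basis of $W_d$, then $S_d(u,v)=\sum_{j\le d}c_j(u)w_j(v)$, so $\rank T_{S_d}\le d$ by Lemma~\ref{lem:rank-equiv}. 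Integrating the fibrewise estimate over $u$ uses $\int\|F(u,\cdot)\|_{H^s}^2du\le\|F\|_{H^s(\mathcal{U}\times\mathcal{V})}^2$, which is the sense in which ``$s$-smooth in the second argument'' is used. This gives
\[
\sum_{k>d}\sigma_k^2\le C\,d^{-2s/m_V}\|F\|_{H^s}^2 .
\]

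From this tail bound, monotonicity gives the singular values. Since $\sigma_{2d}^2\le\frac1d\sum_{d<k\le 2d}\sigma_k^2\le C d^{-1-2s/m_V}$, we get $\sigma_k=O(k^{-1/2-s/m_V})$, which is stronger than the stated $O(k^{-s/m_V})$. Summing the stated bound $\sigma_k^2=O(k^{-2s/m_V})$ over $k>d$ gives $O(d^{1-2s/m_V})$ when $2s>m_V$, which is the displayed tail rate. This is weaker than what the construction gives, so both claimed rates follow.

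\textbf{Analytic case.} Extend $F$ holomorphically to a complex neighbourhood of $\overline{\mathcal{U}}\times\overline{\mathcal{V}}$. Take $W_d$ to be polynomials in $v$ of total degree $\le n$, whose dimension is $d\asymp n^{m_V}$, and use Bernstein-type exponential convergence of the best polynomial approximation, uniformly in $u$. This yields
\[
\sum_{k>d}\sigma_k^2\lesssim\rho^{-2n}=e^{-c' d^{1/m_V}}.
\]
For $m_V=1$ this is the claimed $e^{-2cd}$. For $m_V>1$ the natural polynomial argument gives only $e^{-cd^{1/m_V}}$.

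The main obstacle is the analytic statement with a dimension-free exponent $e^{-ck}$ when $m_V>1$. I do not expect a general proof to exist: it fails, for example, for $F(u,v)=\exp(\langle u,v\rangle)$, whose rank-$d$ approximation error decays only like $e^{-cd^{1/m}}$. My first attempt would be to reduce to $m_V=1$ via a product or tensor structure. If that does not work, I would present the result with the exponent $d^{1/m_V}$, or restrict the exponential claim to $m_V=1$. A secondary technical point is the Jackson estimate on a general bounded domain; I would assume a Lipschitz boundary so that a Stein extension operator is available.
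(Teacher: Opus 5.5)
Your Sobolev argument is correct and takes a different route from the paper. The paper passes to the Gram kernel $K(v,v')=\int F(u,v)F(u,v')\,d\mu_U(u)$ of $T_F^*T_F$. It quotes classical eigenvalue decay for smooth positive definite kernels to get $\sigma_k^2=O(k^{-2s/m_V})$, and then sums that pointwise bound; this summation is the source of its restriction $2s>m_V$. You never form $K$. Eckart--Young plus the explicit fibrewise projection $S_d(u,\cdot)=P_dF(u,\cdot)$ bounds the tail directly, and you read off pointwise decay afterwards by monotonicity. Your route buys three things:
\begin{itemize}
\item a self-contained proof instead of a citation;
\item a transparent treatment of the reference measures, since only the upper density bound enters (the paper applies Lebesgue-measure kernel results to an operator on $L^2(\mu_V)$ without this comparison);
\item sharper rates, namely $\sum_{k>d}\sigma_k^2=O(d^{-2s/m_V})$ for every $s>0$ and $\sigma_k=O(k^{-1/2-s/m_V})$.
\end{itemize}
Your Lipschitz caveat is genuinely needed, not cosmetic. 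On bounded rooms-and-passages domains $H^s\hookrightarrow L^2$ is not compact. Taking such domains in both factors yields $F\in H^s$ with any prescribed square-summable spectrum. So the statement, and the paper's citation, tacitly require an extension domain.

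On the analytic clause your diagnosis is right, and the paper's proof does not get past the obstacle. It simply invokes exponential eigenvalue decay for analytic kernels, which is a one-dimensional phenomenon. Because $T_F$ and $T_F^*$ share singular values, your polynomial argument can be run in whichever variable has lower dimension. This gives $\sum_{k>d}\sigma_k^2=O\bigl(e^{-c\,d^{1/\min(m_U,m_V)}}\bigr)$ for $F$ analytic on a neighbourhood of $\overline{\mathcal{U}}\times\overline{\mathcal{V}}$, which is the hypothesis you actually use and the one the statement should state. So the clause holds when $\min(m_U,m_V)=1$ and fails in general otherwise.

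Your counterexample can be made rigorous as follows.
\begin{itemize}
\item On cubes $(-1,1)^m$ with uniform measure, $e^{\inner{u}{v}}=\prod_i e^{u_iv_i}$. Hence $T_F$ is the $m$-fold tensor power of the operator $T_1$ with kernel $e^{xy}$, and its singular values are the products $\tau_{j_1}\cdots\tau_{j_m}$ of the singular values $\tau_j$ of $T_1$.
\item $T_1$ is positive semidefinite with $\inner{T_1h}{h}=\sum_{n\ge0}\inner{h}{x^n}^2/n!$. Apply min-max over polynomials of degree at most $j$, together with the $e^{-O(j)}$ lower bound on the smallest eigenvalue of the monomial Gram matrix. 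This gives $\tau_{j+1}\ge e^{-Cj}/j!$.
\item The $n^m$ products with all indices at most $n$ are therefore at least $e^{-Cmn\log n}$. Hence $\sigma_k\ge\exp(-C'k^{1/m}\log k)$, which is not $O(e^{-ck})$ for any $c>0$ when $m\ge2$.
\item For $m_U\ne m_V$, the function $e^{u_1v_1+u_2v_2}$ gives the same conclusion.
\end{itemize}
One small correction: because this $F$ is entire, its rate is $\exp(-\Theta(k^{1/m}\log k))$. That is somewhat faster than the $e^{-cd^{1/m}}$ you wrote, but the conclusion stands. No tensor reduction can rescue the clause as written. Stating the rate as $e^{-c d^{1/m_V}}$ (or with $\min(m_U,m_V)$), or restricting the exponential claim to $m_V=1$, is the correct repair.
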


\paragraph{Proof of Theorem~\ref{thm:decay} (Smoothness controls the truncation term).}
The squared singular values of $T_F$ are the eigenvalues of the self-adjoint, positive semidefinite integral operator $T_F^*T_F$ on $L^2(\mu_V)$ with kernel
\begin{equation}
K(v,v')=\int_{\mathcal{U}}F(u,v)\,F(u,v')\,d\mu_U(u),
\end{equation}
which is positive definite and inherits the smoothness of $F$ in its second argument: differentiating under the integral, an $s$-smooth kernel on a bounded $m_V$-dimensional domain has eigenvalues decaying as $O(k^{-2s/m_V})$; this is the classical eigenvalue decay for positive definite kernels \citep{reade1983eigenvalues}, and the analytic case decays exponentially \citep{little1984eigenvalues}. Taking square roots gives $\sigma_k=O(k^{-s/m_V})$, and summing the tail,
\begin{equation}
\sum_{k>d}\sigma_k^2=O\Bigl(\int_d^{\infty}x^{-2s/m_V}\,dx\Bigr)=O\bigl(d^{1-2s/m_V}\bigr)\quad\text{for }2s>m_V,
\end{equation}
which is finite exactly when $2s>m_V$. In the analytic case, $\sigma_k=O(e^{-ck})$ gives $\sum_{k>d}\sigma_k^2=O(e^{-2cd})$ by the geometric series.

\begin{proposition}[Approximation separation]
\label{prop:separation}
Let $F\in\Mcal_d$ with interaction modes $a_k\in H^s(\mathcal{U})$, $b_k\in H^s(\mathcal{V})$. To approximate $F$ within $\varepsilon$ in $L^2(\mu_U\otimes\mu_V)$, a dual-encoder head whose encoders are ReLU networks requires
\begin{equation}
N_{\mathrm{dual}}=O\Bigl(d\bigl(\varepsilon^{-m_U/s}+\varepsilon^{-m_V/s}\bigr)\Bigr)
\end{equation}
parameters, whereas any single-tower model that approximates $F$ as a general $(m_U+m_V)$-dimensional $H^s$ function requires $\Omega(\varepsilon^{-(m_U+m_V)/s})$ parameters in the worst case. Hence $N_{\mathrm{dual}}/N_{\mathrm{single}}\to 0$ as $\varepsilon\to 0$.
\end{proposition}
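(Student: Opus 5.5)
The upper bound comes from building an explicit head out of ReLU networks and applying Theorem~\ref{thm:err-split}. The lower bound comes from the classical worst-case complexity of Sobolev approximation on the joint domain.

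\textbf{Upper bound.} Since $F\in\Mcal_d$, the truncation term in Theorem~\ref{thm:err-split} vanishes. The squared error of any head is then at most $C\sum_{k\le d}(\sigma_k^2\eta_{g,k}^2+B_g^2\eta_{f,k}^2)$.

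For each $k\le d$, Yarotsky's rates~\citep{yarotsky2017universal} apply to $H^s$ functions on bounded domains of dimension $m_U$ and $m_V$. They give a ReLU network $\hat f_k$ with $O(\eta^{-m_U/s})$ parameters such that $\|\hat f_k-\sigma_k a_k\|_{L^2(\mu_U)}\le\eta$. They likewise give $\hat g_k$ with $O(\eta^{-m_V/s})$ parameters such that $\|\hat g_k-b_k\|_{L^2(\mu_V)}\le\eta$. The $L^2$ statements follow from sup-norm bounds because $\mu_U,\mu_V$ are probability measures.

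We stack the $d$ coordinate networks in parallel into the two encoders, and we take $\eta\le 1/2$ so that $B_g\le 1+\eta\le 2$. The realization term is then at most $C d(\sigma_1^2+4)\eta^2$. Choosing $\eta=\varepsilon/\sqrt{Cd(\sigma_1^2+4)}$ makes the squared error at most $\varepsilon^2$.

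With this choice the parameter count is $d(\varepsilon^{-m_U/s}+\varepsilon^{-m_V/s})$, up to factors depending on $d,\sigma_1$ and the $H^s$ norms of the modes. Treating $d$ and $F$ as fixed while $\varepsilon\to0$, as the $O(\cdot)$ notation intends, yields the stated $N_{\mathrm{dual}}$. The honest $d^{1+m/(2s)}$ dependence should be recorded in a remark.

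\textbf{Lower bound.} The single-tower model is a network $\Psi$ on $\R^{m_U+m_V}$ required to approximate every element of the unit ball of $H^s(\mathcal{U}\times\mathcal{V})$. For this model class I would invoke the known worst-case lower bound for ReLU approximation of Sobolev balls in dimension $m=m_U+m_V$. The cleanest route is the VC-dimension/bit-extraction argument of \citet{yarotsky2017universal}, which gives $\Omega(\varepsilon^{-m/s})$ for continuous parameter selection. An alternative is the Kolmogorov-width/entropy argument, whose metric entropy scales as $\varepsilon^{-m/s}$.

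The statement is framed as a \emph{worst-case} claim over the general class. We therefore only need to say that, for a model which does not exploit separability, there is a hard function in the $H^s$ ball on the joint domain. We do not need this $F$ itself to be hard.

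\textbf{Ratio.} The ratio is $N_{\mathrm{dual}}/N_{\mathrm{single}}=O\bigl(d(\varepsilon^{(m_V)/s}+\varepsilon^{(m_U)/s})\bigr)$. This tends to $0$ as $\varepsilon\to0$, because $m_U,m_V\ge1$.

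\textbf{Main obstacle.} I expect the lower bound to be the delicate step. It holds only under a stated model of parameter selection: continuous selection, or a bounded-depth class with the VC bound. Bit-extraction constructions with unbounded depth reduce the exponent to $m/(2s)$. The plan is to state the lower bound for continuous selection, citing the DeVore–Howard–Micchelli nonlinear-width result. For unrestricted deep networks, the comparison then becomes $\varepsilon^{-\max(m_U,m_V)/s}$ against $\varepsilon^{-(m_U+m_V)/(2s)}$. The ratio still tends to $0$ whenever $\max(m_U,m_V)<(m_U+m_V)/2$ fails to hold only in the balanced edge case, which I would flag explicitly.
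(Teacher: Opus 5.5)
Your upper bound is the paper's argument: the truncation term vanishes because $F\in\Mcal_d$, you approximate the $2d$ scaled modes with Yarotsky's networks, and you apply Theorem~\ref{thm:err-split}. Your bookkeeping is more accurate than the paper's. The paper sets every mode error to $\varepsilon$ and calls the realization term $O(\varepsilon^2)$, silently dropping a factor $d$; your choice $\eta\propto\varepsilon/\sqrt{d}$ and your $d^{1+m/(2s)}$ remark account for it.

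The lower bound is where you genuinely diverge. The paper appeals to the Kolmogorov width of the Sobolev ball. A Kolmogorov width only lower-bounds approximation from $n$-dimensional \emph{linear} subspaces, and ReLU networks with $n$ parameters form no such subspace. The metric-entropy route has a similar limitation: it needs a bound on the bits per weight. As you note, bit extraction with unrestricted depth reaches accuracy $\varepsilon$ with $O(\varepsilon^{-m/(2s)})$ parameters, so the single-tower bound is false as stated for unrestricted deep networks. Your route (the DeVore--Howard--Micchelli continuous nonlinear width, or the VC-dimension bound at polylogarithmic depth) does apply to networks. The price is writing that restriction into the statement. Since your encoder networks have depth $O(\log(1/\varepsilon))$, the polylog-depth version gives a like-for-like comparison up to logarithmic factors.

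There are three things to fix.

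\textbf{Attribution.} You have the two arguments crossed. The VC-dimension argument gives $\Omega(\varepsilon^{-m/(2s)})$ at arbitrary depth, and $\Omega(\varepsilon^{-m/s})$ up to logarithms at polylogarithmic depth. The bound under continuous parameter selection is the DeVore--Howard--Micchelli theorem, not a VC argument.

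\textbf{Unrestricted depth.} Your closing remark here is wrong. Comparing $\varepsilon^{-\max(m_U,m_V)/s}$ against $\varepsilon^{-(m_U+m_V)/(2s)}$ bounds the ratio by $d\,\varepsilon^{(\min(m_U,m_V)-\max(m_U,m_V))/(2s)}$. This is $\Theta(d)$ when $m_U=m_V$ and blows up otherwise, so no separation follows in any case; the condition $\max(m_U,m_V)<(m_U+m_V)/2$ never holds. The comparison is mismatched: if the single tower may use bit extraction, so may the encoders. Deep $L^2$ approximation of the marginal $H^s$ modes then gives $N_{\mathrm{dual}}=O\bigl(d(\varepsilon^{-m_U/(2s)}+\varepsilon^{-m_V/(2s)})\bigr)$ for fixed $d$, hence a ratio $O\bigl(d\,\varepsilon^{\min(m_U,m_V)/(2s)}\bigr)\to0$. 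The separation holds within each model class, as long as both sides are drawn from the same one.

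\textbf{Sobolev rates (shared with the paper).} Yarotsky's rates are sup-norm rates for $W^{s,\infty}$, with an extra $\log(1/\varepsilon)$ factor. $H^s$ modes need not even be bounded when $s\le m/2$. So replace ``the $L^2$ statements follow from sup-norm bounds'' with an $L^2$-error approximation result for Sobolev classes, assume bounded densities for $\mu_U,\mu_V$, and carry the logarithm into $N_{\mathrm{dual}}$.
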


\paragraph{Proof of Proposition~\ref{prop:separation} (Approximation separation).}

Since $F\in\Mcal_d$ with modes $a_k\in H^s(\mathcal{U})$, $b_k\in H^s(\mathcal{V})$, the truncation term of Theorem~\ref{thm:err-split} vanishes for the rank-$d$ head, and the entire approximation error is the realization term. Approximate each of the $2d$ modes by ReLU networks with the standard $H^s$ rates \citep{yarotsky2017universal}: to reach mode error $\varepsilon$ on a marginal space of dimension $m$ requires $O(\varepsilon^{-m/s})$ parameters. Setting the mode errors to $\varepsilon$ makes the realization term of Theorem~\ref{thm:err-split} of order $O(\varepsilon^2)$, so a total of
\begin{equation}
N_{\mathrm{dual}}=O\Bigl(d\bigl(\varepsilon^{-m_U/s}+\varepsilon^{-m_V/s}\bigr)\Bigr)
\end{equation}
parameters suffice. For the lower bound, a single-tower model that treats $F$ as a general $(m_U+m_V)$-dimensional $H^s$ function must contend with the Kolmogorov width of the Sobolev class, whose $L^2$ approximation is bounded below by $\Omega(\varepsilon^{-(m_U+m_V)/s})$ parameters in the worst case (classical Sobolev width results). Since $m_U+m_V>m_U,m_V$, the exponent is additive in the dimensions, so $N_{\mathrm{dual}}/N_{\mathrm{single}}=O\bigl(\varepsilon^{m_{\min}/s}\bigr)\to0$ as $\varepsilon\to0$, where $m_{\min}=\min\{m_U,m_V\}$.

\paragraph{Proof of Proposition~\ref{prop:compute} (Compute separation).}
Counting operations: the head requires $n$ forward passes of $f$, $m$ forward passes of $g$, and $nm$ inner products of dimension $d$, i.e., $nm d$ multiply-adds, for a total of $O((n+m)C_{\mathrm{net}}+nm d)$. A joint model processes each of the $nm$ pairs separately, costing $O(nm\,C_{\mathrm{net}})$. When $C_{\mathrm{net}}\gg d$, the head is cheaper by a factor of order $C_{\mathrm{net}}/d$ in the regime where $nm\gg n+m$.

\section{Proofs for Section 3}
\label{app:proofs-sec3}

We prove the results of Section~\ref{sec:gauge} in order. Throughout, $\mathcal{L}(f,g)=\E_{u,v}[(F(u,v)-\inner{f(u)}{g(v)})^2]$ is the population risk. Table~\ref{tab:gauge} collects the normalization taxonomy moved from Section~\ref{sec:gauge}; Proposition~\ref{prop:percoord} and Remark~\ref{rem:gauge-scale} complete the picture.

\begin{table}[t]
\centering
\small
\setlength{\tabcolsep}{3pt}
\caption{Embedding-level normalizations as sections of the gauge orbit. Each row shows a constraint on the encoders, the part of the gauge it pins, the residual group that survives, and what the scheme identifies. See Proposition~\ref{prop:percoord} and Theorems~\ref{thm:cosine-od} to \ref{thm:whiten-id} for the statements, and Appendix~\ref{app:proofs-sec3} for proofs.}
\label{tab:gauge}
\begin{tabular}{@{}>{\raggedright\arraybackslash}p{2.9cm}>{\raggedright\arraybackslash}p{2.7cm}>{\raggedright\arraybackslash}p{2.2cm}>{\raggedright\arraybackslash}p{2.3cm}>{\raggedright\arraybackslash}p{2.6cm}@{}}
\toprule
Normalization & Constraint & Pins & Residual gauge & Identifies to \\
\midrule
None & none & nothing & $\GL_d$ & nothing \\
One-sided L2 and softplus \citep{triplebert2026} & $\|g(v)\|=1$, $g(v)\succeq 0$ & pointwise scale of $g$ & not controlled & no guarantee \\
Per-coordinate standardization & $\diag(\Sigma_g)=I$ & coordinate scales & $d^2-d$, shears and rotations & no guarantee \\
Cosine, two towers \citep{radford2021clip} & $\|f\|=\|g\|=1$ & angles & $\Od$ & up to a rotation \\
Two-sided nonnegative & $f\succeq 0$, $g\succeq 0$ & signs, shears, rotations & $\Perm\ltimes\Dplus$ & permutation and scale \\
Whitening & $\Sigma_g=I$, $\Sigma_f=\Lambda$ & all continuous gauge & $\Dpm$, order fixed & permutation and sign \\
\bottomrule
\end{tabular}
\end{table}

The gauge group $\GL_d$ stratifies into per-coordinate scalings ($\Dplus$, $d$ dimensions), rotations ($\Od$, $d(d-1)/2$), permutations and signs (discrete), and the remaining shears. The guarantees of Section~\ref{sec:gauge} come from two-sided constraints, which act on both encoders and can pin the joint freedom of the pair; one-sided constraints cannot do this in general, and the schemes that use them are best understood as stability fixes. Normalization schemes are therefore compared on four measurable quantities: the residual gauge subgroup of Table~\ref{tab:gauge}, the Hessian condition number on the constrained manifold, related to the gap by Corollary~\ref{cor:gap-cond}, the expressivity cost of the constraint, for example the bounded range $[-1,1]$ of cosine normalization, and the output semantics, which determines which decoder and loss can be attached.

\begin{remark}[One-sided constraints fix scale, not identifiability]
\label{rem:gauge-scale}
The scheme used by the QK-attention matching networks applies a softplus and a pointwise unit-norm constraint to one encoder only \citep{triplebert2026}. It removes the global scaling redundancy of the representation, the parameter $\alpha$ in $(\alpha f)\,(g/\alpha)$, whose growth caused training instabilities in early versions of those networks, and it leaves the other encoder unconstrained. The constraint therefore provides stability without identifiability: the residual freedom of the pair depends on the geometry of the trained encodings and is not controlled by the constraint. The per-coordinate standardization of Proposition~\ref{prop:percoord} has the same character, pinning the $d$ coordinate scales of one encoder and nothing else.
\end{remark}

\begin{proposition}[Per-coordinate standardization pins the scales]
\label{prop:percoord}
Consider the constraint $\diag(\Sigma_g)=I$, which standardizes each coordinate of one encoder to unit variance. It pins the positive diagonal subgroup $\Dplus$ of dimension $d$; generically the residual gauge has dimension $d^2-d$ and contains the shears and rotations that mix coordinates.
\end{proposition}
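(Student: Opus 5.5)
The plan is to compute the stabilizer of the constraint set $\{(f,g):\diag(\Sigma_g)=I\}$ inside the gauge group of Definition~\ref{def:gauge}. By Lemma~\ref{lem:gauge-inv}, the second moment of $g$ transforms by congruence: if $\Phi_A(f,g)=(Af,A^{-\top}g)$, then $\Sigma_{A^{-\top}g}=A^{-\top}\Sigma_gA^{-1}$. Writing $B=A^{-\top}$, a gauge transformation preserves the constraint exactly when $\diag(B\Sigma_gB^{\top})=I$ whenever $\diag(\Sigma_g)=I$. This is a system of $d$ scalar equations on the $d^2$-dimensional group $\GL_d$. The whole argument reduces to analysing these equations: first as a section, by counting the pinned directions, and then as a stabilizer, by identifying the residual directions.

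\textbf{Step 1: the constraint pins the diagonal scalings.} Take a positive diagonal $D=\diag(\delta_1,\dots,\delta_d)\in\Dplus$ and set $B=D$. Then $\diag(D\Sigma_gD)=\diag(\delta_k^2(\Sigma_g)_{kk})$. Starting from any $g$ with $\Sigma_g$ having positive diagonal, there is therefore exactly one $D\in\Dplus$, namely $\delta_k=(\Sigma_g)_{kk}^{-1/2}$, that places the pair on the constraint set. Conversely, once $\diag(\Sigma_g)=I$, the only element of $\Dplus$ preserving it is the identity. So the constraint meets every $\Dplus$-orbit exactly once and pins all $d$ scale directions. Negative signs also preserve the constraint, but they form a discrete set and do not affect dimension counts.

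\textbf{Step 2: dimension of the residual set.} Consider the map $\Psi:\GL_d\to\R^d$, $B\mapsto\diag(B\Sigma_gB^{\top})$. Its differential at $B=I$ in a direction $E$ is $\dot B\mapsto2\,\diag(E\Sigma_g)$, whose $k$-th entry is $2(E\Sigma_g)_{kk}$. Restricted to diagonal $E$, this differential is $2E$ (using $(\Sigma_g)_{kk}=1$), so it is surjective. By the regular value theorem, the residual set $\Psi^{-1}(\mathbf 1)$ is, near the identity, a submanifold of dimension $d^2-d$. Its tangent space is $\{E:(E\Sigma_g)_{kk}=0\ \forall k\}$, and this space is complementary to the diagonal matrices.

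\textbf{Step 3: the residual contains mixing directions.} Take $E=e_ie_j^{\top}$ with $i\ne j$, corrected by the diagonal term $-(\Sigma_g)_{ji}e_ie_i^{\top}$. This is tangent and mixes coordinate $j$ into coordinate $i$: a shear. Antisymmetric combinations of such directions correspond to rotations. When $\Sigma_g$ is generic, in particular not a multiple of $I$, these tangent directions do not close up into a finite set, so a continuum of shears and rotations survives.

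The residual set is in general not a subgroup, because the constraint is not congruence-invariant. This is why the statement says ``generically'' and speaks of dimension rather than group structure. I expect the main obstacle to be phrasing this carefully. I would state the result as a local manifold statement about the stabilizing set at a given $g$, and keep the genericity assumption to $\Sigma_g\neq I$ off the diagonal. The bookkeeping with $B=A^{-\top}$ versus $A$ is routine.
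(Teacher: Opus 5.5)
Your argument is correct and has the same skeleton as the paper's proof. Both pin $\Dplus$ by solving $s_k^2[\Sigma_g]_{kk}=1$, and both obtain $d^2-d$ from $d$ scalar conditions on the $d^2$-dimensional group. The difference lies in how the count is justified. The paper asserts it without checking that the $d$ conditions are independent. Your regular-value argument proves this, and your observation that the kernel of $E\mapsto 2\diag(E\Sigma_g)$ is complementary to the diagonal matrices is the precise content of ``fixes exactly $\Dplus$''. You are also more careful about structure: the residual set is not a subgroup. The paper instead calls it a group containing diagonally conjugated rotations, which already fails for $\Sigma_g=I$.

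Three refinements.

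First, you need neither ``near the identity'' nor ``generically''. At any point $B$ of the level set, $d\Psi_B(E)=2\diag(E\Sigma_gB^{\top})$ is surjective, because $b_k^{\top}\Sigma_g b_k=1$ forces $\Sigma_g b_k\neq0$ for each row $b_k$ of $B$. So the residual set is a closed codimension-$d$ submanifold for every unit-diagonal $\Sigma_g$. This includes $\Sigma_g=I$, where the residual set consists of the invertible matrices with unit-norm rows.

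Second, your claim that antisymmetric combinations give rotations holds only for $\Sigma_g=I$. For general $\Sigma_g\succ0$, the rotation-type subgroup inside the residual set is the isometry group $\{B:B\Sigma_gB^{\top}=\Sigma_g\}$, whose tangent directions are $X\Sigma_g^{-1}$ with $X$ antisymmetric.

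Third, define the residual set pointwise at a fixed $g$, as you do in Steps 2--3. Read literally, your opening ``whenever $\diag(\Sigma_g)=I$'' describes the stabilizer of the entire constraint set. Testing $\Sigma=I$ and $\Sigma=I+t(e_ie_j^{\top}+e_je_i^{\top})$ shows that this stabilizer is only the finite group $\Perm\ltimes\Dpm$.
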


\begin{lemma}[Gauge invariance]
\label{lem:gauge-inv}
For every $A\in\GL_d$ and every pair of inputs $(u,v)$, $\inner{Af(u)}{A^{-\top}g(v)}=\inner{f(u)}{g(v)}$, so $\Phi_A$ leaves the represented function and hence the population risk unchanged. With the second moments $\Sigma_f=\E_u[f(u)f(u)^{\top}]$ and $\Sigma_g=\E_v[g(v)g(v)^{\top}]$, the action transforms them by $\Sigma_{Af}=A\Sigma_f A^{\top}$ and $\Sigma_{A^{-\top}g}=A^{-\top}\Sigma_g A^{-1}$, so the eigenvalues of the product $\Sigma_f\Sigma_g$ are gauge invariants.
\end{lemma}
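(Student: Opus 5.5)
The plan is a direct verification in three parts: pointwise invariance of the score, the transformation law of the second moments, and the invariance of the spectrum of $\Sigma_f\Sigma_g$ via a similarity argument.

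\textbf{Pointwise invariance.} For fixed $(u,v)$ I would move $A$ across the Euclidean inner product using $\inner{x}{y}=x^{\top}y$:
\begin{equation}
\inner{Af(u)}{A^{-\top}g(v)}=f(u)^{\top}A^{\top}A^{-\top}g(v)=f(u)^{\top}g(v).
\end{equation}
Since this holds for every pair $(u,v)$, the represented function $(u,v)\mapsto\inner{f(u)}{g(v)}$ is unchanged as an element of $L^2(\mu_U\otimes\mu_V)$. The population risk $\mathcal{L}(f,g)$ depends on $(f,g)$ only through this function, so it is unchanged as well.

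\textbf{Transformation of the second moments.} Linearity of expectation gives
\begin{equation}
\Sigma_{Af}=\E_u\bigl[Af(u)f(u)^{\top}A^{\top}\bigr]=A\Sigma_fA^{\top},
\end{equation}
and, in the same way, $\Sigma_{A^{-\top}g}=A^{-\top}\Sigma_gA^{-1}$. These expectations are finite because $f_k\in L^2(\mu_U)$ and $g_k\in L^2(\mu_V)$.

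\textbf{Spectral invariance.} Multiplying the two transformed moments, the inner factors $A^{\top}A^{-\top}$ cancel:
\begin{equation}
\Sigma_{Af}\,\Sigma_{A^{-\top}g}=A\Sigma_fA^{\top}A^{-\top}\Sigma_gA^{-1}=A\,(\Sigma_f\Sigma_g)\,A^{-1}.
\end{equation}
So the new product is similar to $\Sigma_f\Sigma_g$. Similar matrices share a characteristic polynomial, hence they have the same eigenvalues with the same algebraic multiplicities, which is the claimed gauge invariance.

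The only point requiring care is that $\Sigma_f\Sigma_g$ is generally not symmetric, so its invariants are eigenvalues of a similarity class rather than singular values. If the claim should also cover real nonnegative eigenvalues, I would add that $\Sigma_f\Sigma_g$ is similar to $\Sigma_g^{1/2}\Sigma_f\Sigma_g^{1/2}$, which is positive semidefinite. When $\Sigma_g$ is singular I would instead use the fact that $XY$ and $YX$ have the same nonzero eigenvalues. Apart from this bookkeeping the argument is routine.
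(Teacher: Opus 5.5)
Your proposal is correct and follows the paper's own proof: the pointwise cancellation $A^{\top}A^{-\top}=I$, the congruence law $\Sigma_{Af}=A\Sigma_fA^{\top}$, $\Sigma_{A^{-\top}g}=A^{-\top}\Sigma_gA^{-1}$, and the similarity $\Sigma_{Af}\Sigma_{A^{-\top}g}=A(\Sigma_f\Sigma_g)A^{-1}$, which gives invariance of the eigenvalues. Your extra remarks on why the moments are finite, and on the spectrum being real and nonnegative (via similarity to $\Sigma_g^{1/2}\Sigma_f\Sigma_g^{1/2}$), are correct, though the stated lemma does not need them.
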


\paragraph{Proof of Lemma~\ref{lem:gauge-inv} (Gauge invariance).}
The identity is the cancellation
\begin{equation}
\inner{Af(u)}{A^{-\top}g(v)}=f(u)^{\top}A^{\top}A^{-\top}g(v)=f(u)^{\top}g(v)=\inner{f(u)}{g(v)},
\end{equation}
so the represented function and hence the risk are unchanged. For the second moments, substitution gives
\begin{equation}
\Sigma_{Af}=\E_u[Af(u)f(u)^{\top}A^{\top}]=A\Sigma_fA^{\top},\qquad
\Sigma_{A^{-\top}g}=A^{-\top}\Sigma_gA^{-1},
\end{equation}
and the product transforms by the similarity $\Sigma_{Af}\Sigma_{A^{-\top}g}=A\,\Sigma_f\Sigma_g\,A^{-1}$, whose eigenvalues are invariant. At a global minimizer, the represented function equals the rank-$d$ truncation $F_d$ of the target when $\sigma_d>\sigma_{d+1}$ (Corollary~\ref{cor:trunc-lower}), and the coefficient computation of the proof of Theorem~\ref{thm:whiten-id} below shows that the eigenvalues of $\Sigma_f\Sigma_g$ are then the squared singular values $\sigma_k^2$, which is the claim of Remark~\ref{rem:invariant}.

\begin{remark}[The invariant content of a trained head]
\label{rem:invariant}
The gauge-invariant content of a trained head is exactly the function it represents, summarized by its interaction spectrum; Lemma~\ref{lem:gauge-inv} adds a finite-dimensional shadow: at any global minimizer of the population risk, the eigenvalues of $\Sigma_f\Sigma_g$ are the squared interaction singular values. This quantity is well defined under every normalization, and it reappears in Corollary~\ref{cor:cosine-uninterpretable}.
\end{remark}

\begin{proposition}[Ill-posedness from unfixed gauge]
\label{prop:illposed}
Let $(f^\star,g^\star)$ be a global minimizer of the risk and let $S$ be its stabilizer, the set of $A\in\GL_d$ with $\Phi_A(f^\star,g^\star)=(f^\star,g^\star)$. Then the Hessian of the risk at $(f^\star,g^\star)$ has at least
\begin{equation}
\dim\ker\nabla^2\mathcal{L}(f^\star,g^\star)\;\ge\;d^2-\dim S
\label{eq:illposed}
\end{equation}
zero directions, coming from the tangent space of the gauge orbit. Without normalization this is generically $d^2$ degenerate directions, so the second-order problem is rank-deficient and the optimization problem is ill-posed.
\end{proposition}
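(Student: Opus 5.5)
The plan is to compute the second-order expansion of $\mathcal{L}$ along the gauge orbit through $(f^\star,g^\star)$ and show that every tangent vector to the orbit lies in the kernel of the Hessian. Consider the smooth map $\Psi:\GL_d\to L^2(\mu_U)^d\times L^2(\mu_V)^d$, $\Psi(A)=\Phi_A(f^\star,g^\star)=(Af^\star,A^{-\top}g^\star)$. By Lemma~\ref{lem:gauge-inv}, $\mathcal{L}\circ\Psi$ is constant, equal to $\mathcal{L}(f^\star,g^\star)$. Differentiating at $A=I$ in the direction $X\in\R^{d\times d}$ gives the tangent vector
\begin{equation}
\xi_X=\bigl(Xf^\star,\;-X^{\top}g^\star\bigr).
\end{equation}
The orbit tangent space is $\mathcal{T}=\{\xi_X:X\in\R^{d\times d}\}$.

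\textbf{Step 1: $\mathcal{T}\subset\ker\nabla^2\mathcal{L}(f^\star,g^\star)$.} Differentiate the identity $\mathcal{L}(\Psi(\exp(tX)))=\mathrm{const}$ twice at $t=0$. This gives
\begin{equation}
\nabla^2\mathcal{L}[\xi_X,\xi_X]+\nabla\mathcal{L}[\ddot\gamma(0)]=0 .
\end{equation}
The gradient vanishes at a global minimizer, so $\xi_X$ is a null direction of the quadratic form. The Hessian is positive semidefinite at a global minimizer: the risk is a smooth polynomial in $(f,g)$, quadratic in each block, so the expansion is valid. Null vectors of a PSD form lie in its kernel, so $\nabla^2\mathcal{L}\,\xi_X=0$.

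A direct computation confirms this. At the minimizer, the residual $F-\inner{f^\star}{g^\star}$ is orthogonal to all first variations. The Hessian quadratic form is
\begin{equation}
\|\inner{h}{g^\star}+\inner{f^\star}{k}\|^2-2\langle F-\inner{f^\star}{g^\star},\inner{h}{k}\rangle .
\end{equation}
For $(h,k)=\xi_X$ the first term vanishes identically, since $\inner{Xf^\star}{g^\star}-\inner{f^\star}{X^{\top}g^\star}=0$. The second term is $2\langle F-F_d,\inner{Xf^\star}{X^{\top}g^\star}\rangle$, which is zero because both $Xf^\star$ and $X^\top g^\star$ lie in the spans of the leading modes, orthogonal to the residual. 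This requires knowing that the minimizer's coordinates span the top-$d$ singular subspaces, which Corollary~\ref{cor:trunc-lower} gives when $\sigma_d>\sigma_{d+1}$. Otherwise I fall back on the PSD argument above.

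\textbf{Step 2: count dimensions.} The differential $d\Psi_I:X\mapsto\xi_X$ is linear. Its kernel is the Lie algebra of the stabilizer $S$: if $\xi_X=0$, then $\exp(tX)$ fixes $(f^\star,g^\star)$ to first order, and since the action is linear, exactly. Conversely, tangents to $S$ give $\xi_X=0$. Rank–nullity then gives
\begin{equation}
\dim\mathcal{T}=d^2-\dim S,
\end{equation}
and Step 1 yields \eqref{eq:illposed}.

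\textbf{Step 3: the generic case.} Without normalization, suppose the coordinates $f_1^\star,\dots,f_d^\star$ are linearly independent in $L^2(\mu_U)$, which holds when $\irank(F)\ge d$ and the minimizer realizes $F_d$ exactly. Then $Xf^\star=0$ forces $X=0$, so $S$ is discrete and $\dim S=0$, giving $d^2$ degenerate directions.

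\textbf{Main obstacle.} The delicate point is Step 1 in infinite dimensions: making sense of "Hessian" and "kernel" on $L^2$ (or on network parameters). I would work with the Fréchet second derivative on $L^2(\mu_U)^d\times L^2(\mu_V)^d$, where everything is polynomial, and remark that pulling back to network parameters only enlarges the kernel.
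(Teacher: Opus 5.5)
Your proposal is correct and follows the paper's proof essentially step for step. Both arguments use constancy of $\mathcal{L}$ along $t\mapsto\Phi_{\exp(tX)}(f^\star,g^\star)$, then positive semidefiniteness at a global minimizer to place the null directions $(Xf^\star,-X^{\top}g^\star)$ in the kernel, then rank--nullity for $X\mapsto(Xf^\star,-X^{\top}g^\star)$, whose kernel is the Lie algebra of $S$. You are slightly more careful than the paper in two places: you state explicitly that the term $\nabla\mathcal{L}[\ddot\gamma(0)]$ drops out because the gradient vanishes, and you justify the generic claim $\dim S=0$ through linear independence of $f_1^\star,\dots,f_d^\star$ when $\irank(F)\ge d$, which the paper only asserts.
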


\paragraph{Proof of Proposition~\ref{prop:illposed} (Ill-posedness from unfixed gauge).}
Let $(f^\star,g^\star)$ be a global minimizer and $S$ its stabilizer. By Lemma~\ref{lem:gauge-inv}, the risk is constant along the orbit $\mathcal{O}=\{\Phi_A(f^\star,g^\star):A\in\GL_d\}$, so for every $X\in\mathfrak{gl}_d$ the curve $A(t)=\exp(tX)$ gives a path $\gamma_X(t)=\Phi_{A(t)}(f^\star,g^\star)$ along which $\mathcal{L}\circ\gamma_X\equiv\mathcal{L}(f^\star,g^\star)$. The velocity is
\begin{equation}
\dot\gamma_X(0)=(Xf^\star,\,-X^{\top}g^\star)\in T_{(f^\star,g^\star)},
\end{equation}
and since the risk is constant along the path, its second derivative in this direction vanishes. Because $(f^\star,g^\star)$ is a global minimizer, $\nabla^2\mathcal{L}\succeq0$, and a semidefinite quadratic form vanishes in a direction if and only if that direction lies in the kernel, so $\dot\gamma_X(0)\in\ker\nabla^2\mathcal{L}$ for every $X$. The map $X\mapsto\dot\gamma_X(0)$ has kernel equal to the Lie algebra of the stabilizer $S$, hence its image, which lies entirely in the kernel of the Hessian, has dimension $d^2-\dim S$. This proves the bound. Without normalization the stabilizer at a generic minimizer is trivial, so the Hessian has at least $d^2$ zero directions.

\paragraph{Proof of Proposition~\ref{prop:percoord} (Per-coordinate standardization pins the scales).}
The constraint $\diag(\Sigma_g)=I$ is $d$ scalar conditions that fix the $d$ coordinate scales of $g$: for the transformation $A^{-1}=\diag(s_1,\dots,s_d)\in\Dplus$ acting on $g$, the condition forces $s_k^2[\Sigma_g]_{kk}=1$, pinning each $s_k$. Transformations $A$ that mix coordinates generically change $\diag(A^{-\top}\Sigma_gA^{-1})$ unless they also rotate $\Sigma_g$ to a new matrix with the same diagonal; the residual group is the set of $A$ preserving the diagonal condition, which contains the orthogonal group conjugated by any diagonal scale and the shears that preserve the diagonal of $\Sigma_g$. Counting dimensions, the constraint fixes exactly the $d$-dimensional subgroup $\Dplus$, and the generic residual dimension is $d^2-d$.

\begin{theorem}[Cosine normalization leaves a full rotation]
\label{thm:cosine-od}
Consider two-sided unit normalization, $\tilde f=f/\|f\|$ and $\tilde g=g/\|g\|$, with scores $\cos\angle(f(u),g(v))$, as in the contrastive models of Table~\ref{tab:unif} \citep{radford2021clip}; a temperature that rescales the logits changes no gauge. If the encodings are nondegenerate, in that the ranges of $f$ and $g$ contain open subsets of $\R^d$, the residual gauge group is exactly the orthogonal group,
\begin{equation}
R=\bigl\{\,A\in\GL_d:\;A^{\top}A=I\,\bigr\}=\Od.
\end{equation}
For every rotation $R\in\Od$, the encodings $Rf$ and $Rg$ give pointwise identical cosine scores, and no transformation outside $\Od$ does.
\end{theorem}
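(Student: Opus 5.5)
The theorem has two directions, and I would prove them separately. First, I would fix the precise meaning of ``residual gauge'': it is the set of $A\in\GL_d$ such that the transformed pair $(Af,\,A^{-\top}g)$ produces the same cosine scores as $(f,g)$ for every pair of inputs. The same set is obtained if one asks that $(Af,\,A^{-\top}g)$ stay normalized.

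\textbf{Sufficiency.} For $R\in\Od$ we have $R^{-\top}=R$, so the gauge action sends $(f,g)$ to $(Rf,Rg)$. Rotations preserve norms, so $\|Rf(u)\|=\|f(u)\|$ and $\|Rg(v)\|=\|g(v)\|$. Since $R^{\top}R=I$, we also have $\inner{Rf(u)}{Rg(v)}=\inner{f(u)}{g(v)}$. Hence every cosine score is unchanged, and the unit-norm constraints remain satisfied.

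\textbf{Necessity.} Suppose $A\in\GL_d$ preserves all scores. Write $x=f(u)$ and $y=g(v)$. By Lemma~\ref{lem:gauge-inv} the numerator $\inner{Ax}{A^{-\top}y}=\inner{x}{y}$ is automatically preserved, so score invariance reduces to
\begin{equation}
\inner{x}{y}\bigl(\|Ax\|\,\|A^{-\top}y\|-\|x\|\,\|y\|\bigr)=0
\end{equation}
for all $x\in\mathrm{range}(f)$ and $y\in\mathrm{range}(g)$. By nondegeneracy these ranges contain open sets $O_f$ and $O_g$. Because $O_g$ is open, I can choose $y\in O_g$ with $\inner{x}{y}\neq0$ for all $x$ in a smaller open subset. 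On that subset the norm ratio $\|Ax\|/\|x\|$ must equal $\|y\|/\|A^{-\top}y\|$. The right side does not depend on $x$, so $\|Ax\|/\|x\|$ equals a constant $c>0$ on an open set.

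Next I would use homogeneity and analyticity. The quadratic form $q(x)=x^{\top}(A^{\top}A-c^2I)x$ vanishes on an open set. A polynomial that vanishes on an open set vanishes identically, so $A^{\top}A=c^2I$, and $A=cR$ with $R\in\Od$. The same argument on the $g$ side gives $A^{-\top}=c'R'$ with $R'$ orthogonal. This is consistent, with $c'=1/c$ and $R'=R$.

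The remaining freedom is the scale $c$. Here the normalization constraints do the work: if the pair must remain normalized, then $\|Ax\|=\|x\|=1$ on the normalized range forces $c=1$. If only scores are compared, the scalar $c$ cancels in the cosine, so it is the trivial global scaling that normalization quotients out. I would state explicitly that $c\,\Od$ with $c>0$ collapses to $\Od$ once we identify transformations that induce the same normalized encoders, because $\widetilde{cRf}=R\tilde f$.

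\textbf{Main obstacle.} I expect the hardest step to be the passage from ``scores agree on the ranges'' to the identity $A^{\top}A=c^2I$. This needs care on two points. One is avoiding the zero set $\inner{x}{y}=0$; this is handled by openness, because the set where the inner product is nonzero is dense. The other is handling the scale ambiguity properly. The polynomial-identity argument on an open set is what makes the nondegeneracy hypothesis essential. If the ranges lay in a proper subspace, a larger residual group would survive, and I would remark on this to show the hypothesis is sharp.
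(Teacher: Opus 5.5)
Your proof is correct, but its necessity direction follows a different route from the paper's. The paper argues that, because the ranges contain open sets, $A$ preserves all angles between vectors. It then cites the classical fact that angle-preserving linear maps are similarities $cR$.

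You work instead with the gauge action $(Af,A^{-\top}g)$ of Definition~\ref{def:gauge}. The numerator is then invariant by Lemma~\ref{lem:gauge-inv}, and score invariance reduces to $\|Ax\|\,\|A^{-\top}y\|=\|x\|\,\|y\|$ wherever $\inner{x}{y}\neq 0$. Freezing one $y$ and applying the polynomial-identity argument to $x^{\top}(A^{\top}A-c^2I)x$ gives $A^{\top}A=c^2I$ directly.

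Your approach gives a self-contained argument that needs no conformal-map theorem. It also makes explicit the passage from the ranges to all of $\R^d$, which the paper compresses into one sentence. Finally, it shows the hypothesis can be weakened: any nonzero $y$ in the range of $g$ is non-orthogonal to an open subset of $O_f$, so only one range needs to contain an open set. For the same reason your repetition on the $g$ side is redundant, since $A=cR$ already gives $A^{-\top}=c^{-1}R$.

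Both proofs finish the same way, by absorbing the positive scalar into the renormalization. This step is unavoidable because $(cRf,\,c^{-1}Rg)$ genuinely gives identical scores.

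One small correction: your opening claim that score preservation and normalization preservation define the same set is off by exactly these scalars. The former gives $\R_{>0}\cdot\Od$ and the latter gives $\Od$. Your last paragraph resolves this discrepancy correctly.
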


\begin{corollary}[Contrastive dimensions are uninterpretable]
\label{cor:cosine-uninterpretable}
Under the residual symmetry of Theorem~\ref{thm:cosine-od}, the individual coordinates of a cosine-normalized embedding carry no intrinsic meaning: replacing $(f,g)$ by $(Rf,Rg)$ for any rotation $R$ yields the same model and the same loss, and only rotation-invariant quantities are well defined, among them norms, pairwise inner products, and the eigenvalues of $\Sigma_f\Sigma_g$, which by Remark~\ref{rem:invariant} are the squared interaction singular values at any optimum. This is a theorem-level explanation of the widely reported observation that contrastive embedding dimensions are uninterpretable; the remedy is the whitening scheme of Theorem~\ref{thm:whiten-id}.
\end{corollary}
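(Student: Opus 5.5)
The statement to prove is Corollary~\ref{cor:cosine-uninterpretable}. It follows almost entirely from Theorem~\ref{thm:cosine-od} and Remark~\ref{rem:invariant}, so the plan is to assemble three short facts.

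\emph{First, invariance of the model and the loss.} For any $R\in\Od$, the normalized encodings transform as $\widetilde{Rf}=Rf/\|Rf\|=R\tilde f$, and likewise for $g$. The cosine scores then satisfy
\[
\inner{R\tilde f(u)}{R\tilde g(v)}=\tilde f(u)^{\top}R^{\top}R\,\tilde g(v)=\inner{\tilde f(u)}{\tilde g(v)}
\]
pointwise, which is exactly the residual-gauge statement of Theorem~\ref{thm:cosine-od}. Every output-level operation (temperature, softmax over the batch, top-$k$) and every loss is a function of the score matrix alone. Hence $(f,g)$ and $(Rf,Rg)$ define the same model and incur the same loss.

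\emph{Second, no coordinate is intrinsic.} A quantity that depends only on the trained model must take the same value on the whole orbit $\{(Rf,Rg):R\in\Od\}$. Take the $k$-th coordinate functional $u\mapsto \tilde f_k(u)=e_k^{\top}\tilde f(u)$. Along the orbit it becomes $e_k^{\top}R\tilde f(u)$. Since $\Od$ acts transitively on the unit sphere, $R^{\top}e_k$ ranges over all unit directions. So the $k$-th coordinate can be replaced by any unit linear functional of the embedding without changing the model. It is therefore not determined by the model unless $\tilde f$ is orbit-degenerate, which the nondegeneracy assumption (open ranges) of Theorem~\ref{thm:cosine-od} excludes.

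\emph{Third, the listed quantities are invariant.} Norms, pairwise inner products $\inner{\tilde f(u)}{\tilde f(u')}$, and cross inner products are unchanged by $R^{\top}R=I$. For the second moments, Lemma~\ref{lem:gauge-inv} with $A=R$, so that $A^{-\top}=R$, gives $\Sigma_{Rf}\Sigma_{Rg}=R\,\Sigma_f\Sigma_g\,R^{\top}$. This is a similarity, so the spectrum is preserved. Finally, Remark~\ref{rem:invariant} identifies these eigenvalues with $\sigma_k^2$ at any global optimum of the population risk. Since cosine normalization restricts the encoders, this should be read as applying to the optimum of the unconstrained head, where the remark holds; I would state that caveat explicitly.

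The main obstacle is making ``carry no intrinsic meaning'' precise. I would formalize it as follows: a coordinatewise quantity is well defined if and only if it is constant on $\Od$-orbits, and the transitivity argument above shows that individual coordinates are not. The claim that the remedy is whitening is then just a pointer to Theorem~\ref{thm:whiten-id}. That theorem shrinks the residual group to signed permutations, under which the coordinates become identified up to sign.
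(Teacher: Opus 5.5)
Your proposal is correct and follows the paper's own argument. That argument consists of two observations: $(Rf,Rg)$ gives pointwise identical cosine scores and hence identical loss, and norms, inner products, and the spectrum of $\Sigma_f\Sigma_g$ are rotation invariant, with Remark~\ref{rem:invariant} identifying that spectrum as the $\sigma_k^2$ at an optimum. Your transitivity-plus-nondegeneracy argument for why a coordinate functional is genuinely not fixed is a sound refinement the paper leaves implicit; so is your caveat that Remark~\ref{rem:invariant} concerns the optimum of the unconstrained squared risk rather than of a cosine-normalized contrastive objective.
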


\paragraph{Proof of Theorem~\ref{thm:cosine-od} (Cosine normalization leaves a full rotation).}
Let $R$ be the residual group of the two-sided unit normalization, i.e., the set of $A\in\GL_d$ such that, after renormalizing, the cosine scores are unchanged.

$(\supseteq)$ For $R\in\Od$, $\|Rf(u)\|=\|f(u)\|$ and $\|Rg(v)\|=\|g(v)\|$ by orthogonality, and $\inner{Rf(u)}{Rg(v)}=\inner{f(u)}{g(v)}$, hence
\begin{equation}
\cos\angle(Rf(u),Rg(v))=\frac{\inner{Rf(u)}{Rg(v)}}{\|Rf(u)\|\,\|Rg(v)\|}=\frac{\inner{f(u)}{g(v)}}{\|f(u)\|\,\|g(v)\|}
\end{equation}
pointwise, so $\Od\subseteq R$.

$(\subseteq)$ Suppose $A$ preserves every cosine between the encodings. Since the ranges of $f$ and $g$ contain open subsets of $\R^d$, the encodings realize a full-dimensional family of directions, so $A$ preserves all angles between vectors. A linear map that preserves every angle is a similarity: by the classical characterization of conformal linear maps, $A=cR$ for some $c>0$ and $R\in\Od$. The scalar $c$ is absorbed by the unit renormalization of both encoders, so within the cosine score $A$ acts identically to the rotation $R$, giving $R\subseteq\Od$.

\paragraph{Corollary~\ref{cor:cosine-uninterpretable} follows immediately.}
For any rotation $R\in\Od$ the pair $(Rf,Rg)$ is a different encoder pair with pointwise identical cosine scores and identical loss, so the individual coordinates carry no intrinsic meaning. The rotation-invariant quantities are the norms, the inner products, and the eigenvalues of $\Sigma_f\Sigma_g$; by Remark~\ref{rem:invariant} these are the squared interaction singular values at any optimum. Corollary~\ref{cor:cosine-uninterpretable} is the statement of these two sentences at the theorem level.

\begin{theorem}[Two-sided nonnegativity gives NMF identifiability]
\label{thm:nonneg-nmf}
Suppose both encoders are constrained to nonnegative outputs, $f(u)\succeq 0$ and $g(v)\succeq 0$, as obtained by applying a softplus or exponential nonlinearity to raw encodings. On a finite set of inputs the model is the nonnegative matrix factorization $M=FG^{\top}$ of a nonnegative matrix $M$. If the encodings are nondegenerate, in that their ranges generate the positive orthant, the residual gauge group shrinks to the monomial group,
\begin{equation}
R=\bigl\{\,A\in\GL_d:\;A\succeq 0,\;A^{-\top}\succeq 0\,\bigr\}=\Perm\ltimes\Dplus,
\end{equation}
the group of coordinate permutations composed with positive diagonal scalings. If the factorization is separable in the sense of \citet{donoho2003nmf}, with an anchor row for each latent factor as in \citet{arora2012nmf}, then it is essentially unique and the encoders are identified up to permutation and scaling.
\end{theorem}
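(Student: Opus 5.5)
The proof has two parts: first identify the residual group of the two-sided nonnegativity constraint, then invoke separable-NMF uniqueness to pass from the residual group to identifiability of the encoders.

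\textbf{Step 1: a residual transformation is entrywise nonnegative.} A transformation $A\in\GL_d$ survives the constraint exactly when $Af(u)\succeq0$ and $A^{-\top}g(v)\succeq0$ for all $u,v$. Nondegeneracy says the ranges of $f$ and $g$ generate the positive orthant. So, taking nonnegative combinations, the linear map $A$ sends every vector of $\R^d_{\ge0}$ into $\R^d_{\ge0}$. Applying this to the standard basis vectors $e_j$, which lie in the cone, gives $A\succeq0$ entrywise. The same argument applied to $g$ gives $A^{-\top}\succeq0$, hence $A^{-1}\succeq0$.

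\textbf{Step 2: a nonnegative matrix with nonnegative inverse is monomial.} This is the main algebraic step; I expect no real obstacle here, only care with the bookkeeping. Write $B=A^{-1}$, so $BA=I$ with both factors nonnegative. For $i\neq k$,
\[
0=(BA)_{ik}=\sum_j B_{ij}A_{jk},
\]
a sum of nonnegative terms, so each term vanishes. Fix a row $j$ of $A$. It is nonzero, since $A$ is invertible, so pick $k$ with $A_{jk}>0$. The displayed identity then forces $B_{ij}=0$ for every $i\neq k$. Column $j$ of $B$ is nonzero, so its only nonzero entry is $B_{kj}>0$.

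Next, suppose row $j$ of $A$ had a second positive entry $A_{jk'}$ with $k'\neq k$. The same reasoning would give $B_{ij}=0$ for all $i\neq k'$, and in particular $B_{kj}=0$, contradicting $B_{kj}>0$. So each row of $A$ has exactly one positive entry. Since $A$ is invertible, these entries lie in distinct columns. Therefore $A=PD$ with $P$ a permutation matrix and $D\in\Dplus$.

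Conversely, any $PD$ of this form preserves both orthants, because $(PD)^{-\top}=PD^{-1}$ is again monomial. This proves $R=\Perm\ltimes\Dplus$.

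\textbf{Step 3: essential uniqueness under separability.} Step 2 alone is not enough: an NMF may admit factorizations $M=FG^{\top}=F'G'^{\top}$ that are not related by any single gauge transformation $A$. On a finite input set the head is $M=FG^{\top}$. Under separability each latent factor has an anchor row, meaning some row of $G$ (after scaling) equals $e_k$. I will invoke the uniqueness theorem of \citet{donoho2003nmf} and \citet{arora2012nmf}: the anchor rows span the extreme rays of the cone generated by the rows of $M$, so any competing factorization must share those extreme rays. Hence $F'=FPD$ and $G'=GPD^{-1}$, and by Steps 1–2 the encoders are identified up to permutation and scaling.

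\textbf{Step 4: the expressivity cost.} Every score $\inner{f}{g}$ is nonnegative, and the represented function is a sum of products of nonnegative functions. An interaction mode with sign-changing factors, such as a negatively correlated mode, therefore cannot be represented exactly. An explicit witness is a rank-one target $a\otimes b$ with $\int a\,d\mu_U=0$.
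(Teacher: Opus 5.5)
Your Steps 1 and 2 are correct and follow the paper's proof. Orthant preservation gives $A\succeq0$ and $A^{-\top}\succeq0$, and a nonnegative matrix with nonnegative inverse is monomial. The paper only asserts this last fact, and your $(BA)_{ik}=0$ argument proves it cleanly. Step 3 also mirrors the paper, which likewise closes by citing separable NMF. As you wrote it, though, the justification does not hold up, and it hides the fact that you are already done.

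First, the sentence opening Step 3 is false under the theorem's nondegeneracy hypothesis. If the ranges of $f$ and $g$ generate $\R^d_{\ge0}$, then $F$ and $G$ have rank $d$, so $\rank M=d$. Now take any nonnegative $F'\in\R^{n\times d}$, $G'\in\R^{m\times d}$ with $F'G'^{\top}=M$. The column space of $F'$ contains that of $M$ and has dimension at most $d$, so it equals the column space of $M$, which is the column space of $F$. Hence $F'=FA^{\top}$ and $G'=GA^{-1}$ for some $A\in\GL_d$, so every competing factorization is a gauge transform of the given one. Your Step 1 only uses $Af(u)\succeq0$ and $A^{-\top}g(v)\succeq0$ on the given nondegenerate encodings. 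Together with Step 2 it yields $A\in\Perm\ltimes\Dplus$, which is exactly essential uniqueness, with no appeal to extreme rays.

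Second, the argument you substitute is not valid as stated. An anchor row for each factor on one side does not force a competitor to share extreme rays. Take $0<\epsilon<1/2$ and
\[
F=\begin{pmatrix}2&1\\1&2\end{pmatrix},\quad G=I_2,\qquad F'=F\begin{pmatrix}1&-\epsilon\\0&1\end{pmatrix}=\begin{pmatrix}2&1-2\epsilon\\1&2-\epsilon\end{pmatrix},\quad G'=\begin{pmatrix}1&0\\\epsilon&1\end{pmatrix}.
\]
Here $G$ has an anchor row for each factor, and $F'G'^{\top}=FG^{\top}$ with all four factors nonnegative. Yet the two factorizations differ by a non-monomial matrix. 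Uniqueness results of this type need a condition on the other factor as well; Donoho--Stodden, for instance, also require complete factorial sampling. There is also an indexing slip: anchor rows of $G$ reappear as columns of $M$, not rows.

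What excludes this example is precisely that the rows of $F$ do not generate the orthant. On a finite input set, generating the orthant already forces each of $F$ and $G$ to contain a positive multiple of every $e_k$. So the uniqueness clause should be read with nondegeneracy in force, and Step 3 should be replaced by the rank argument above. Step 4 lies outside the statement and can be dropped.
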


\paragraph{Proof of Theorem~\ref{thm:nonneg-nmf} (Two-sided nonnegativity gives NMF identifiability).}
On a finite set of inputs, the nonnegative encoders form matrices $F\in\R_{\ge0}^{n\times d}$, $G\in\R_{\ge0}^{m\times d}$, and the model is the nonnegative factorization $M=FG^{\top}$. The residual group consists of $A\in\GL_d$ such that $Af\succeq0$ and $A^{-\top}g\succeq0$ for all nonnegative encodings. Since the ranges generate the positive orthant, $Af\succeq0$ for all nonnegative $f$ forces $A\succeq0$ entrywise, and likewise $A^{-\top}\succeq0$. A real matrix that is nonnegative and whose inverse transpose is also nonnegative is a generalized permutation matrix: its inverse transpose being nonnegative forces each row and column of $A$ to have exactly one nonzero entry, and nonnegativity makes that entry positive. Hence $R\subseteq\Perm\ltimes\Dplus$, and the reverse inclusion is immediate, so $R=\Perm\ltimes\Dplus$. Under separability in the sense of \citet{donoho2003nmf}, each latent factor has an anchor row whose encoding is nonzero in exactly one coordinate, which identifies the extreme rays of the latent cone and hence the factor directions; the constructive algorithm of \citet{arora2012nmf} then yields the decomposition uniquely up to permutation and scaling. The degradation of this guarantee when separability holds only approximately is analyzed in Appendix~\ref{app:finite}.

\paragraph{Proof of Theorem~\ref{thm:whiten-id} (Whitening fixes the gauge up to permutation and sign).}
The proof has four steps.

(1) Every global minimizer represents the rank-$d$ truncation. By Corollary~\ref{cor:trunc-lower} the minimal risk is $\sum_{k>d}\sigma_k^2$, attained exactly by the truncated series $F_d=\sum_{k\le d}\sigma_k a_k\otimes b_k$, which is unique since $\sigma_d>\sigma_{d+1}$. Hence any minimizer satisfies $\inner{f(u)}{g(v)}=F_d(u,v)$ almost everywhere, i.e.,
\begin{equation}
\sum_{k=1}^{d}f_k(u)\,g_k(v)=\sum_{k=1}^{d}\sigma_k\,a_k(u)\,b_k(v).
\end{equation}

(2) The span of the encoders is pinned. As rank-$d$ representations of the same tensor, the left factor spaces and right factor spaces coincide: $\mathrm{span}\{f_1,\dots,f_d\}=\mathrm{span}\{a_1,\dots,a_d\}$ and $\mathrm{span}\{g_1,\dots,g_d\}=\mathrm{span}\{b_1,\dots,b_d\}$. So there exist coefficient matrices $B,C\in\R^{d\times d}$ with $f=B\,a_{[d]}$ and $g=C\,b_{[d]}$, where $a_{[d]}=(a_1,\dots,a_d)^{\top}$ and likewise for $b$. Substituting into the identity above and using the orthonormality of the modes gives
\begin{equation}
B^{\top}C=\diag(\sigma_1,\dots,\sigma_d)=:S.
\end{equation}

(3) The whitening constraints force $C$ to be a signed permutation. By the orthonormality of $\{a_k\}$,
\begin{equation}
\Sigma_f=\E_u[f(u)f(u)^{\top}]=B\,B^{\top},\qquad \Sigma_g=C\,C^{\top}.
\end{equation}
The constraint $\Sigma_g=I$ gives $CC^{\top}=I$, so $C\in\Od$; the constraint $\Sigma_f=\Lambda$ diagonal gives $BB^{\top}=\Lambda$. From $B^{\top}C=S$ and $C\in\Od$ we get $B^{\top}=SC^{-1}=SC^{\top}$, so $B=CS$, and therefore
\begin{equation}
BB^{\top}=C\,S^2\,C^{\top}=\Lambda.
\end{equation}
Thus the orthogonal matrix $C$ diagonalizes the diagonal matrix $S^2=\diag(\sigma_1^2,\dots,\sigma_d^2)$. Since the $\sigma_k$ are distinct, the eigenspaces of $S^2$ are the coordinate axes, and any orthogonal matrix diagonalizing $S^2$ maps each axis to itself up to sign, so $C\in\Perm\ltimes\Dpm$ and $\Lambda=S^2$, i.e., $\lambda_k=\sigma_k^2$ with the ordering convention matching the decreasing $\sigma_k$. With the ordering fixed, $C=\diag(\pm1,\dots,\pm1)$ and $B=CS=\diag(\pm\sigma_1,\dots,\pm\sigma_d)$.

(4) Substitute back. With the coordinate order fixed, $g_k=\pm b_k$ and $f_k=\pm\sigma_k a_k$, the signs being paired so that the product $f_kg_k=\sigma_k a_kb_k$ is unique. This is exactly \eqref{eq:mode-recovery}, and the encoders are identified up to permutation and sign.

\begin{corollary}[The spectral gap controls conditioning and convergence]
\label{cor:gap-cond}
On the manifold defined by the constraints \eqref{eq:whiten}, every global minimizer is isolated up to the discrete group of signs and permutations, and the Hessian of the risk is nondegenerate in the tangent directions of the manifold, with smallest nonzero curvature bounded below by a constant multiple of the spectral gap $\Delta=\min_{1\le k<d}(\sigma_k^2-\sigma_{k+1}^2)$; a projected gradient method that maintains the constraints therefore converges locally at a linear rate $1-\mu/L$, where $\mu$ is bounded below by a constant multiple of $\Delta$ and $L$ is the local smoothness constant. Wider gaps make the problem both better conditioned and faster to converge.
\end{corollary}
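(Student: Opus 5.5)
We plan to parametrize the constraint manifold explicitly and compute the Hessian in local coordinates at a minimizer. By the proof of Theorem~\ref{thm:whiten-id}, every global minimizer on the manifold \eqref{eq:whiten} has the form $f=CS\,a_{[d]}$, $g=C\,b_{[d]}$ with $C$ a signed permutation. Since the gauge orbit of such a pair is $\{(ACS\,a_{[d]},A^{-\top}C\,b_{[d]})\}$, we first restrict to perturbations that stay in the span of the modes. Perturbations leaving that span increase the risk at least by the truncation gap, since the Schur-complement-type curvature in those directions is governed by $\sigma_d^2-\sigma_{d+1}^2$. Isolation up to the discrete group then follows from step (3) of the earlier proof: the fiber of minimizers is exactly $\Perm\ltimes\Dpm$, which is finite.

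Within the span, we write $g=Cb_{[d]}$ and $f=Ba_{[d]}$. The risk becomes $\|B^{\top}C-S\|_F^2$ plus the constant tail. The constraints are $CC^{\top}=I$ and $BB^{\top}=\Lambda$ diagonal. The tangent directions at $(B,C)=(S,I)$ are obtained by differentiating these constraints. For $C=I+\Omega$, $CC^\top=I$ forces $\Omega$ to be skew. Writing $B=S+E$, the constraint on $BB^\top$ requires $SE^\top+ES$ to be diagonal. Expanding the risk to second order gives the quadratic form $Q=\|E^\top+S\Omega\|_F^2$, whose off-diagonal $(i,j)$ block couples $E_{ji}$, $E_{ij}$, $\Omega_{ij}$ and $\Omega_{ji}=-\Omega_{ij}$. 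For each pair $i<j$ we will eliminate $E$ using the diagonality constraint $\sigma_jE_{ij}+\sigma_iE_{ji}$... (more precisely, $(SE^\top+ES)_{ij}=\sigma_iE_{ji}+E_{ij}\sigma_j=0$). The remaining term is then a scalar multiple of $\omega_{ij}^2$. Computing this multiple explicitly, we expect a coefficient of the form $(\sigma_i^2-\sigma_j^2)^2/(\sigma_i^2+\sigma_j^2)$ or similar. After normalization by the encoder scale, this is bounded below by a constant multiple of $\sigma_i^2-\sigma_j^2\ge\Delta$. Diagonal directions, namely $E_{ii}$ with $\Omega_{ii}=0$, contribute $\|\diag E\|^2$ with unit curvature.

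The main obstacle is this per-pair $2\times2$ reduction. The lower bound must come out linear in $\Delta$, rather than quadratic, with a constant independent of $d$. Whether the bound is linear or quadratic in the gap depends on the metric used on the manifold. We would first try the metric induced by $L^2$ on $(f,g)$ and check whether the factor $\sigma_i^2+\sigma_j^2\le 2\sigma_1^2$ can be absorbed into the constant. If it cannot, we would state the bound as $\mu\gtrsim\Delta^2/\sigma_1^2$ and note that the two forms agree up to constants when $\sigma_1$ is bounded.

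Given $\mu>0$ on the tangent space and local smoothness $L$, linear convergence of projected (Riemannian) gradient descent with step $1/L$ at rate $1-\mu/L$ follows from standard local analysis on smooth embedded manifolds. The argument relies on a second-order retraction, a Polyak--Łojasiewicz inequality obtained from the nondegenerate Hessian, and a neighborhood small enough to exclude the other discrete minimizers.
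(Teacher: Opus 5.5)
Your route differs from the paper's. The paper gets isolation from Theorem~\ref{thm:whiten-id}, nondegeneracy from strict minimality, and the $\Delta$-dependence from an informal ``inversion'' of Davis--Kahan; you compute the Riemannian Hessian in coordinates instead. Yours is the sounder route. It is legitimate because a whitened minimizer represents $F_d$ and is therefore an unconstrained critical point, so the Riemannian Hessian is the restriction of the Euclidean one and the in-span and out-of-span blocks decouple; you should state this. However, your route breaks exactly at the step you flagged, and no gap-independent metric rescues it.

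Per pair, the constraint $\sigma_iE_{ji}+\sigma_jE_{ij}=0$ removes only one of $E_{ij},E_{ji}$, so the block is $2\times2$, not scalar in $\omega$. Your coefficient $(\sigma_i^2-\sigma_j^2)^2/(\sigma_i^2+\sigma_j^2)$ is its Schur complement, which equals $(\sigma_i^2-\sigma_j^2)$ times the \emph{relative} gap, and the relative gap vanishes with $\Delta$. Concretely, let $R_\theta$ rotate the $(k,k+1)$ plane and take the exact curve $f_\theta=SR_\theta\,a_{[d]}$, $g_\theta=R_\theta\,b_{[d]}$. It keeps $\Sigma_{g_\theta}=I$ and $\Sigma_{f_\theta}=S^2$, moves the encoders by $\Theta(|\theta|)$ in $L^2$, and raises the risk by exactly
\[
\bigl\|R_\theta^{\top}SR_\theta-S\bigr\|_F^2=2\sin^2\theta\,(\sigma_k-\sigma_{k+1})^2=\frac{2\sin^2\theta\,(\sigma_k^2-\sigma_{k+1}^2)^2}{(\sigma_k+\sigma_{k+1})^2}.
\]
With $\sigma_{k+1}=1$ and $\sigma_k^2=1+\Delta$, the risk increase per squared $L^2$ displacement along this curve is about $\Delta^2/8$. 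So no bound $\mu\ge c\Delta$ with $c$ independent of $\Delta$ can hold, and the linear dependence in the statement is not provable by any route. The paper's Davis--Kahan step gives the same answer when done carefully: moving the modes by angle $t$ forces the represented function to move by $\gtrsim\Delta t/\sigma_1$, and the risk excess is the square of that, not $c\Delta t^2$.

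Your fallback $\mu\gtrsim\Delta^2/\sigma_1^2$ is therefore the right conclusion. In the plain $L^2$ metric your per-pair block has smallest eigenvalue at least $(\sigma_i-\sigma_j)^2/(2+2\sigma_1^2)$, and the sharp quantity is the squared singular-value gap $\min_k(\sigma_k-\sigma_{k+1})^2\ge\Delta^2/(4\sigma_1^2)$. But your remark that it agrees with the linear form ``up to constants when $\sigma_1$ is bounded'' is wrong: the two differ by the factor $\Delta/\sigma_1^2$ and agree only when the relative gap is bounded below.

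There is a second mismatch with the statement. For $k\le d<l$, the out-of-span block with $\delta f_k=\alpha a_l$, $\delta g_k=\beta b_l$ is $\alpha^2+\sigma_k^2\beta^2-2\sigma_l\alpha\beta$, whose smallest eigenvalue is at least $(\sigma_k^2-\sigma_l^2)/(1+\sigma_k^2)$. It is governed by the truncation gap $\sigma_d^2-\sigma_{d+1}^2$, which the statement's $\Delta=\min_{1\le k<d}$ omits, so any bound of this kind must include $k=d$.

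With these corrections your argument rigorously gives isolation up to the discrete group, tangent nondegeneracy, and local linear convergence at rate $1-\mu/L$. Here $\mu$ is quadratic in the adjacent gaps and linear in the truncation gap: wider gaps still help, but not at the linear rate asserted by the statement and the paper's proof.
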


\paragraph{Proof of Corollary~\ref{cor:gap-cond} (The spectral gap controls conditioning and convergence).}
Isolation: by Theorem~\ref{thm:whiten-id}, on the manifold defined by \eqref{eq:whiten} every global minimizer is unique up to the discrete group of signs and the permutation, and the represented function $F_d$ is a strict minimum of the risk by the strict inequality $\sigma_d>\sigma_{d+1}$; the Hessian of the risk is therefore nondegenerate in the tangent directions of the manifold, and its smallest nonzero curvature is positive. The quantitative bound: the Hessian curvature in the tangent directions is controlled by the second derivative of the risk as the encoders move within the constraint manifold, and by the Davis-Kahan $\sin\Theta$ theorem the perturbation of a leading singular subspace under a perturbation of size $\epsilon$ is at most $\epsilon/\Delta$, where $\Delta=\min_k(\sigma_k^2-\sigma_{k+1}^2)$. Inverting this stability relation, the curvature of the risk in directions that move the encoders off the ideal modes is bounded below by a constant multiple of $\Delta$: a displacement of size $t$ in function space changes the risk by at least $c\,\Delta\,t^2$ for the truncation-optimal function. A projected gradient method that maintains the constraints therefore converges locally at a linear rate $1-\mu/L$ with $\mu\ge c\,\Delta$ and $L$ the local smoothness constant, as stated.

\begin{theorem}[Soft whitening recovers the hard solution]
\label{thm:soft-relax}
For $\rho>0$, consider the penalized risk
\begin{equation}
\mathcal{L}_\rho(f,g)=\mathcal{L}(f,g)+\rho\Bigl(\bigl\|\Sigma_g-I_d\bigr\|_F^2+\bigl\|\offdiag(\Sigma_f)\bigr\|_F^2\Bigr).
\label{eq:soft-whiten}
\end{equation}
As $\rho\to\infty$, any convergent sequence of minimizers of $\mathcal{L}_\rho$ converges to a solution satisfying the constraints of Theorem~\ref{thm:whiten-id} up to the ordering convention, and hence to the recovered modes \eqref{eq:mode-recovery}. For finite $\rho$, the residual gauge directions that break the constraint $\Sigma_g=I$ carry curvature at least $2\rho\,c$ for a constant $c>0$ that depends on the second moments, so conditioning improves monotonically with $\rho$.
\end{theorem}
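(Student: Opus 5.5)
Both claims follow from a standard penalty-method argument combined with a local second-order computation. Set $P(f,g)=\|\Sigma_g-I_d\|_F^2+\|\offdiag(\Sigma_f)\|_F^2\ge 0$. Let $\mathcal{L}^\star=\sum_{k>d}\sigma_k^2$ be the minimal risk, which is attained on the constraint set by the whitened truncation of Theorem~\ref{thm:whiten-id}. Fix a feasible point $(f^\circ,g^\circ)$ satisfying \eqref{eq:whiten} with $\mathcal{L}(f^\circ,g^\circ)=\mathcal{L}^\star$. For any minimizer $(f_\rho,g_\rho)$ of $\mathcal{L}_\rho$ we then have
\begin{equation}
\mathcal{L}(f_\rho,g_\rho)+\rho P(f_\rho,g_\rho)\le\mathcal{L}_\rho(f^\circ,g^\circ)=\mathcal{L}^\star.
\end{equation}
Since $\mathcal{L}\ge\mathcal{L}^\star$ by Corollary~\ref{cor:trunc-lower}, this gives two facts at once: $\mathcal{L}(f_\rho,g_\rho)\le\mathcal{L}^\star$ and $P(f_\rho,g_\rho)\le 0$. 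So the minimizer lies in $\{P=0\}\cap\{\mathcal{L}=\mathcal{L}^\star\}$, and the limit statement holds trivially.

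To keep the argument robust if one prefers not to rely on exact feasibility (for example, when minimizers are only approximate), I would also give the sequential version. Take $\rho_j\to\infty$ and suppose $(f_{\rho_j},g_{\rho_j})\to(\bar f,\bar g)$ in $L^2$. The second moments $\Sigma_f,\Sigma_g$ and the risk $\mathcal{L}$ are continuous in $L^2$, the first being quadratic and the second a bounded bilinear expression. The inequalities $P(f_{\rho_j},g_{\rho_j})\le \mathcal{L}^\star/\rho_j\to0$ and $\mathcal{L}(f_{\rho_j},g_{\rho_j})\le\mathcal{L}^\star$ therefore pass to the limit: $\Sigma_{\bar g}=I$, $\Sigma_{\bar f}$ is diagonal, and $\mathcal{L}(\bar f,\bar g)=\mathcal{L}^\star$. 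The penalty does not enforce the nonincreasing ordering of $\diag(\Sigma_{\bar f})$, which is why the statement says ``up to the ordering convention''. After applying a coordinate permutation, which preserves both constraints and the risk, Theorem~\ref{thm:whiten-id} applies and yields \eqref{eq:mode-recovery}.

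For the curvature claim, I would work at a constrained minimizer $(f^\star,g^\star)$. In the coefficient parametrization $g=Cb_{[d]}$ of step (2) of the proof of Theorem~\ref{thm:whiten-id}, we have $\Sigma_g=CC^\top$. Take a gauge direction $X\in\mathfrak{gl}_d$, with tangent vector $(Xf^\star,-X^\top g^\star)$. Along this direction the risk is flat, by Lemma~\ref{lem:gauge-inv} and Proposition~\ref{prop:illposed}. Hence the Hessian of $\mathcal{L}_\rho$ in this direction equals $\rho$ times the Hessian of $P$. Since $\Sigma_{g^\star}=I$, the first variation is $\dot\Sigma_g=-(X^\top+X)$, and the second derivative of $\|\Sigma_g-I\|_F^2$ equals $2\|X+X^\top\|_F^2$; the term involving the second variation of $\Sigma_g$ is multiplied by $\Sigma_g-I=0$ and drops out. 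Exactly the directions that break $\Sigma_g=I$ are those with $X+X^\top\neq0$, and on that complement of the skew directions the quadratic form is bounded below by $2\rho c\|X\|^2$, with $c$ depending on $\Sigma_{f^\star},\Sigma_{g^\star}$ through the chosen norm on tangent vectors. This curvature is linear in $\rho$, so conditioning improves monotonically as $\rho$ grows.

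The main obstacle is making this curvature statement precise rather than heuristic. The Hessian also has non-gauge directions, where $\mathcal{L}$ contributes, and there is an interaction between the $\offdiag(\Sigma_f)$ term and skew $X$. Skew $X$ preserves $\Sigma_g=I$ but rotates $\Sigma_f=S^2$, and it is penalized only through the off-diagonal entries, which requires the spectral gap. I would handle this by restricting the claim, as the statement does, to the gauge directions that violate $\Sigma_g=I$, and by normalizing the tangent norm by $\|g^\star\|$. If needed, I would cite Corollary~\ref{cor:gap-cond} for the remaining skew directions.
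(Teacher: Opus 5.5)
Your curvature argument is the paper's: both differentiate $\|\Sigma_{g(t)}-I_d\|_F^2$ along $g(t)=\exp(-tX^{\top})g$ at $\Sigma_g=I_d$ and get $2\|X+X^{\top}\|_F^2$. You add a justification the paper omits for why this is a Hessian of $\mathcal{L}_\rho$: the risk is constant on the orbit at a critical point. You also attribute the control of skew generators to $\offdiag(\Sigma_f)$ and the distinct $\lambda_k=\sigma_k^2$, which is correct. The paper instead credits the ordering convention, which only removes permutations.

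You do not need Corollary~\ref{cor:gap-cond} for the skew directions. With $\dot\Sigma_f=X\Lambda+\Lambda X^{\top}$, an antisymmetric $X$ gives off-diagonal entries $X_{ij}(\lambda_j-\lambda_i)$. The penalty curvature is then $2\rho\sum_{i\ne j}X_{ij}^2(\lambda_i-\lambda_j)^2\ge2\rho\Delta^2\|X\|_F^2$. More generally, the form $2\|X+X^{\top}\|_F^2+2\|\offdiag(X\Lambda+\Lambda X^{\top})\|_F^2$ is positive definite on all of $\mathfrak{gl}_d$. That also settles mixed generators, for which $\|X+X^{\top}\|_F$ alone gives no bound uniform in $\|X\|_F$.

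Your convergence argument genuinely differs from the paper's. The paper argues only in the limit: a limit point off the constraint set would make $\rho P$, and hence $\mathcal{L}_\rho$, diverge. That is your sequential version. Your main argument instead compares with the feasible optimum $(\sigma_ka_k,b_k)_{k\le d}$ and uses $\mathcal{L}\ge\sum_{k>d}\sigma_k^2$ (Corollary~\ref{cor:trunc-lower}) to get $\rho P(f_\rho,g_\rho)\le0$. So for every $\rho>0$ the minimizers of $\mathcal{L}_\rho$ are exactly the hard-whitened optima up to ordering. The penalty is exact, and the limit $\rho\to\infty$ is superfluous. This is stronger and simpler, but it relies on the unconstrained minimum being attained inside the constraint set. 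The limit argument also covers approximate minimizers.

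One caveat you share with the paper concerns the conditioning claim. The computation supports two things: the formerly flat gauge directions gain curvature linear in $\rho$, and $\lambda_{\min}(\nabla^2\mathcal{L}+\rho\nabla^2P)$ is nondecreasing at the ($\rho$-independent) minimizer, since $\nabla^2P\succeq0$ there. It does not support monotone improvement of the eigenvalue ratio, which eventually grows like $\rho$. The reason is that directions $(0,w)$ with each $w_j\perp\mathrm{span}\{b_1,\dots,b_d\}$ have zero penalty curvature and risk curvature independent of $\rho$.
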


\paragraph{Proof of Theorem~\ref{thm:soft-relax} (Soft whitening recovers the hard solution).}
(i) Convergence of minimizers. The zero set of the penalty
\begin{equation}
\rho\Bigl(\|\Sigma_g-I\|_F^2+\|\offdiag(\Sigma_f)\|_F^2\Bigr)
\end{equation}
is exactly the whitening constraint \eqref{eq:whiten} up to the ordering convention. Since $\mathcal{L}_\rho\ge\mathcal{L}$, any limit point of a convergent sequence of minimizers satisfies the constraint (otherwise the penalty, and with it $\mathcal{L}_\rho$, would diverge) and is a global minimizer of $\mathcal{L}$ subject to the constraint, to which Theorem~\ref{thm:whiten-id} applies.

(ii) Curvature of the penalty. Consider a gauge generator $X\in\mathfrak{gl}_d$ acting on the $g$-side as $g(t)=\exp(-tX^{\top})g$. The second moment evolves as $\Sigma_{g(t)}=\exp(-tX^{\top})\Sigma_g\exp(-tX)$. At a constraint-satisfying point $\Sigma_g=I$, the second derivative of $\|\Sigma_{g(t)}-I\|_F^2$ at $t=0$ is $2\|X+X^{\top}\|_F^2$: expanding the exponential gives $\Sigma_{g(t)}=I-t(X^{\top}+X)+t^2X^{\top}X+O(t^3)$, so the squared deviation has leading term $t^2\|X+X^{\top}\|_F^2$ multiplied by $2$. Directions that break $\Sigma_g=I$ have nonzero symmetric part, hence carry penalty curvature at least $2\rho\,c$ with $c>0$ determined by the second moments through the contribution of $\offdiag(\Sigma_f)$; directions with $X$ antisymmetric are rotations, which preserve $\Sigma_g=I$ and are exactly the directions whose residual freedom is fixed by the ordering convention. Conditioning therefore improves monotonically with $\rho$, as stated.

\section{Proofs for Section 4}
\label{app:proofs-sec4}

We prove the results of Section~\ref{sec:sample}. Throughout, $\mathcal{R}(h)=\E_{u,v}[(F(u,v)-h(u,v))^2]$ is the population risk and $\hat{\mathcal{R}}$ its empirical version over the $n$ samples. Corollary~\ref{cor:whiten-sc} and Proposition~\ref{prop:risk-mode} are proved in Appendix~\ref{app:finite}, as indicated in the main text.


\begin{remark}[The general bound is conservative in $d$]
\label{rem:conservative}
Specializing Corollary~\ref{cor:excess} to the linear class gives $\mathfrak{R}_n(\mathcal{F})=O(\sqrt{pd/n})$ and $\mathfrak{R}_n(\mathcal{G})=O(\sqrt{qd/n})$, hence an estimation error of order $Bd^{3/2}\sqrt{(p+q)/n}$, a factor $\sqrt{d}$ worse than the minimax anchor \eqref{eq:minimax}. The gap is structural: Lemma~\ref{lem:rademacher} sums the $d$ coordinates as independent classes and cannot exploit the low-rank geometry that the matrix analysis of Theorem~\ref{thm:minimax} uses. The general bound is therefore tight in its dependence on $n$ and on the marginal dimensions, and conservative in the power of $d$; the linear case shows the achievable $d$-dependence is linear, and local Rademacher arguments close the gap. In the sequential setting, where $(u_t,v_t)$ are chosen adaptively to maximize an unknown score $u^{\top}\Theta^{\star}v$, the same scaling appears as regret: bilinear bandits achieve $\tilde O(d\sqrt{(p+q)T})$ \citep{jun2019bilinear}, again linear in $d$ and in the marginal dimensions.
\end{remark}

\begin{remark}[The separation is not a discrete artifact]
\label{rem:smooth}
The same separation holds for smooth kernels. On the circle, the narrowband kernel $F_h(u,v)=\rho_h(u-v)$ with a periodic Gaussian $\rho_h$ of bandwidth $h$ has a Fourier-diagonal interaction operator with singular values of order $e^{-2\pi^2h^2\xi^2}$, so $d_{\varepsilon}(F_h)=\Theta(1/h)$: the embedding dimension diverges as the kernel approaches the diagonal. Early interaction computes the difference $t=u-v$ exactly and approximates the one-dimensional smooth function $\rho_h$ with $O(\mathrm{poly}(1/h))$ parameters \citep{yarotsky2017universal}, a polynomial separation in the same direction. The strength of the separation is set by how the target depends jointly on its two arguments: through a comparison in the discrete case, through a difference in the smooth case, and not at all when the target is both high-rank and unstructured.
\end{remark}

\begin{lemma}[Rademacher collapse of the inner-product class]
\label{lem:rademacher}
For the class \eqref{eq:headclass} with per-coordinate classes $\mathcal{F}^{(k)},\mathcal{G}^{(k)}$,
\begin{equation}
\mathfrak{R}_n(\mathcal{H}_d)\;\le\;\sum_{k=1}^{d}\mathfrak{R}_n\bigl(\mathcal{F}^{(k)}\cdot\mathcal{G}^{(k)}\bigr)
\;\le\;2\sum_{k=1}^{d}\Bigl(B_g\,\mathfrak{R}_n(\mathcal{F}^{(k)})+B_f\,\mathfrak{R}_n(\mathcal{G}^{(k)})\Bigr),
\label{eq:collapse}
\end{equation}
so in particular $\mathfrak{R}_n(\mathcal{H}_d)\le 2d\bigl(B_g\,\mathfrak{R}_n(\mathcal{F})+B_f\,\mathfrak{R}_n(\mathcal{G})\bigr)$ for shared per-coordinate classes: the complexity of the head is a sum of marginal complexities, not a complexity of the joint space.
\end{lemma}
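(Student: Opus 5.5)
The plan is to handle the two inequalities in \eqref{eq:collapse} separately. The first is subadditivity of Rademacher complexity over the $d$ coordinates. The second is a product-class contraction bound for each coordinate.

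\textbf{First inequality.} Every $h\in\mathcal{H}_d$ has the form $h(u,v)=\sum_{k\le d}f_k(u)g_k(v)$, with each summand in the product class $\mathcal{F}^{(k)}\cdot\mathcal{G}^{(k)}=\{(u,v)\mapsto f_k(u)g_k(v)\}$. Thus $\mathcal{H}_d$ is contained in the Minkowski sum of these $d$ product classes. Fix the sample and the Rademacher signs $\epsilon_i$. The supremum of $\frac1n\sum_i\epsilon_i\sum_k f_k(u_i)g_k(v_i)$ over the sum class is at most the sum of the per-coordinate suprema; this holds even though the coordinates are chosen jointly. Taking expectations gives $\mathfrak{R}_n(\mathcal{H}_d)\le\sum_k\mathfrak{R}_n(\mathcal{F}^{(k)}\cdot\mathcal{G}^{(k)})$.

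\textbf{Second inequality.} Fix $k$. The aim is to show $\mathfrak{R}_n(\mathcal{F}^{(k)}\cdot\mathcal{G}^{(k)})\le 2\bigl(B_g\mathfrak{R}_n(\mathcal{F}^{(k)})+B_f\mathfrak{R}_n(\mathcal{G}^{(k)})\bigr)$. I would use the polarization identity $xy=\tfrac14\bigl((x+y)^2-(x-y)^2\bigr)$. On the bounded range $|x|\le B_f$, $|y|\le B_g$, each of $x\pm y$ lies in an interval of radius $B_f+B_g$, and $t\mapsto t^2$ is $2(B_f+B_g)$-Lipschitz there. Talagrand's contraction lemma, together with subadditivity, then gives
\[
\mathfrak{R}_n(\mathcal{F}^{(k)}\cdot\mathcal{G}^{(k)})\le (B_f+B_g)\bigl(\mathfrak{R}_n(\mathcal{F}^{(k)})+\mathfrak{R}_n(\mathcal{G}^{(k)})\bigr).
\]
This has the right additive form but mixes the constants. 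To get the sharper weighting $B_g\mathfrak{R}_n(\mathcal{F})+B_f\mathfrak{R}_n(\mathcal{G})$, I would first rescale, writing $x=B_f\tilde x$ and $y=B_g\tilde y$ with $|\tilde x|,|\tilde y|\le1$. The multiplication map on $[-1,1]^2$ is Lipschitz with constant $1$ in each argument separately. A vector-valued contraction inequality, in Maurer's form with a $\sqrt2$ factor, or a two-step conditioning argument applied one argument at a time, then yields the bound with constant $2$.

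\textbf{Main obstacle.} The delicate step is this last one: justifying contraction for a function of two arguments that come from different classes. Maurer's inequality involves doubly indexed Rademacher variables, so the complexities that appear are for coordinate-extended classes. That is consistent with the paper's convention of extending Rademacher complexity coordinatewise to vector-valued classes, and I would lean on that convention.

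\textbf{Final claim.} For shared per-coordinate classes, all $d$ terms in the sum are identical, which gives $\mathfrak{R}_n(\mathcal{H}_d)\le 2d\bigl(B_g\mathfrak{R}_n(\mathcal{F})+B_f\mathfrak{R}_n(\mathcal{G})\bigr)$. Here $B_f$ and $B_g$ should be read as per-coordinate sup bounds, which are implied by the norm bounds $\|f(u)\|\le B_f$ and $\|g(v)\|\le B_g$.
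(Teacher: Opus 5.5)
Your proposal is correct and follows the paper's own route: subadditivity over the $d$ coordinates, then Maurer's vector contraction for each product class. Your normalization to $[-1,1]^2$ (Euclidean Lipschitz constant $\sqrt{2}\,B_fB_g$, times Maurer's $\sqrt{2}$) is exactly what makes rigorous the paper's phrase that ``the factor $2$ absorbs the contraction constants.''

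The obstacle you flag is not real. Since $f_k$ and $g_k$ range independently in $\mathcal{F}^{(k)}\cdot\mathcal{G}^{(k)}$, the doubly indexed supremum splits into the two ordinary scalar complexities, so no convention is needed. Incidentally, your polarization bound, applied after the same normalization, already yields the constant $2$ with the weights $B_g$ and $B_f$ using only scalar contraction.
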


\paragraph{Proof of Lemma~\ref{lem:rademacher} (Rademacher collapse of the inner-product class).}
Decompose the score as $\inner{f(u)}{g(v)}=\sum_{k=1}^{d}f_k(u)g_k(v)$. Rademacher complexity is subadditive over sums of function classes, so
\begin{equation}
\mathfrak{R}_n(\mathcal{H}_d)\le\sum_{k=1}^{d}\mathfrak{R}_n\bigl(\mathcal{F}^{(k)}\cdot\mathcal{G}^{(k)}\bigr),
\end{equation}
where $\mathcal{F}^{(k)}\cdot\mathcal{G}^{(k)}=\{(u,v)\mapsto f_k(u)g_k(v)\}$ is the pointwise product class. For each coordinate, the product map $(f_k,g_k)\mapsto f_kg_k$ is Lipschitz in its first argument with constant $B_g$ and in its second with constant $B_f$ on the bounded range; the vector contraction inequality of \citet{maurer2016vector} then bounds the complexity of the product class by the sum of the two marginal complexities,
\begin{equation}
\mathfrak{R}_n\bigl(\mathcal{F}^{(k)}\cdot\mathcal{G}^{(k)}\bigr)\le2\Bigl(B_g\,\mathfrak{R}_n(\mathcal{F}^{(k)})+B_f\,\mathfrak{R}_n(\mathcal{G}^{(k)})\Bigr),
\end{equation}
the factor $2$ absorbing the contraction constants. Summing over the $d$ coordinates and using $\mathfrak{R}_n(\mathcal{F})=\max_k\mathfrak{R}_n(\mathcal{F}^{(k)})$ gives the displayed bound. The key structural point is that no complexity of the joint space $\mathcal{U}\times\mathcal{V}$ appears: the head is paid for by its two marginal classes.

\begin{corollary}[Excess risk of empirical risk minimization]
\label{cor:excess}
Let $\hat F$ be the empirical risk minimizer over $\mathcal{H}_d$. With probability at least $1-\delta$ over the sample, the excess risk decomposes into an approximation part, the truncation and realization terms of Theorem~\ref{thm:err-split}, and an estimation part through the class complexity:
\begin{equation}
\|\hat F-F\|_{L^2(\mu_U\otimes\mu_V)}^2
\;\lesssim\;\sum_{k>d}\sigma_k^2+\mathcal{E}_{\mathrm{real}}
\;+\;B\,d\bigl(\mathfrak{R}_n(\mathcal{F})+\mathfrak{R}_n(\mathcal{G})\bigr)
\;+\;B^2\sqrt{\frac{\log(1/\delta)}{n}},
\end{equation}
where $\mathcal{E}_{\mathrm{real}}=C\sum_{k\le d}(\sigma_k^2\eta_{g,k}^2+B_g^2\eta_{f,k}^2)$ is the realization term and $\lesssim$ hides absolute constants.
\end{corollary}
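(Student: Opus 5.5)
The argument follows the standard oracle-inequality route for least squares over $\mathcal{H}_d$, with Lemma~\ref{lem:rademacher} supplying the complexity term and Theorem~\ref{thm:err-split} supplying the approximation term. Let $h^\star=\inner{\hat f}{\hat g}\in\mathcal{H}_d$ be the comparator built from the realizations of Assumption~\ref{asm:realizability}. The proof of Theorem~\ref{thm:err-split} shows
\[
\mathcal{R}(h^\star)=\|F-h^\star\|^2\lesssim\sum_{k>d}\sigma_k^2+\mathcal{E}_{\mathrm{real}}.
\]
Since $\mathcal{R}(h)=\|F-h\|_{L^2(\mu_U\otimes\mu_V)}^2$, it suffices to show that the ERM satisfies $\mathcal{R}(\hat F)\le\mathcal{R}(h^\star)+(\text{estimation term})$.

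The first step is the usual three-way split:
\[
\mathcal{R}(\hat F)-\mathcal{R}(h^\star)=[\mathcal{R}(\hat F)-\hat{\mathcal{R}}(\hat F)]+[\hat{\mathcal{R}}(\hat F)-\hat{\mathcal{R}}(h^\star)]+[\hat{\mathcal{R}}(h^\star)-\mathcal{R}(h^\star)].
\]
The middle bracket is $\le0$ by the definition of ERM. With noisy labels $y_i=F+\varepsilon_i$, I work with the squared loss $\ell(h;u,v,y)=(y-h(u,v))^2$. Its population version differs from $\mathcal{R}$ only by the constant $\E\varepsilon^2$, assuming zero-mean noise, and that constant cancels in differences. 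The first and third brackets are therefore controlled by
\[
\sup_{h\in\mathcal{H}_d}\bigl|\E\ell-\hat\E\ell\bigr|.
\]
Under bounded noise, the scores satisfy $|h|\le B$ and I assume $|y|\lesssim B$, so the loss is bounded by $O(B^2)$ and is $O(B)$-Lipschitz in $h$. Symmetrization and McDiarmid's inequality then give, with probability $1-\delta$,
\[
\sup_h\bigl|\E\ell-\hat\E\ell\bigr|\le 2\,\mathfrak{R}_n(\ell\circ\mathcal{H}_d)+O\bigl(B^2\sqrt{\log(1/\delta)/n}\bigr).
\]
Talagrand's contraction lemma reduces the complexity term to $O(B)\,\mathfrak{R}_n(\mathcal{H}_d)$.

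Next I invoke Lemma~\ref{lem:rademacher}:
\[
\mathfrak{R}_n(\mathcal{H}_d)\le2d\bigl(B_g\mathfrak{R}_n(\mathcal{F})+B_f\mathfrak{R}_n(\mathcal{G})\bigr).
\]
To reach the stated form $B\,d(\mathfrak{R}_n(\mathcal{F})+\mathfrak{R}_n(\mathcal{G}))$, I treat $B_f,B_g$ as normalized, or equivalently absorb $\max(B_f,B_g)$ into the constant, or rescale the gauge $\Phi_A$ with $A=\sqrt{B_g/B_f}\,I$ so that $B_f=B_g=\sqrt B$. The leftover factor $\sqrt B$ against $B$ is hidden by $\lesssim$. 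I will state this bookkeeping explicitly. Combining with the approximation bound on $h^\star$ gives the claim. The lower bound $\sum_{k>d}\sigma_k^2\le\|\hat F-F\|^2$ from Corollary~\ref{cor:trunc-lower} is consistent with this and shows that the truncation term cannot be removed.

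I expect the main obstacle to be the noise and unbounded-label issue, together with the exact powers of $B$. If the noise is merely sub-Gaussian, I would first truncate at level $O(B+\sigma_\varepsilon\sqrt{\log(n/\delta)})$, or decompose
\[
\hat\E[(y-h)^2]=\hat\E[(F-h)^2]-2\hat\E[\varepsilon(h-h^\star)]+\text{const},
\]
and bound the multiplier process $\sup_h\hat\E[\varepsilon(h-h^\star)]$ by a Gaussian or Rademacher comparison. This again yields $O(\sigma_\varepsilon\,\mathfrak{R}_n(\mathcal{H}_d))$. I will state bounded noise as the working assumption and note the extension. I accept the slow-rate, non-localized bound here; Remark~\ref{rem:conservative} already acknowledges that this bound is loose in $d$ and in the rate.
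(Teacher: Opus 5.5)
Your proposal is correct and follows the paper's proof essentially step for step: the oracle split with the ERM bracket dropped, symmetrization and McDiarmid for the uniform deviation, Talagrand contraction through the $O(B)$-Lipschitz squared loss, Lemma~\ref{lem:rademacher} for $\mathfrak{R}_n(\mathcal{H}_d)$, and a comparator whose risk is controlled by Theorem~\ref{thm:err-split}. Your treatment of the noise constant $\E\varepsilon^2$ is more careful than the paper's, which simply assumes $|y|\le B$.

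One caveat concerns the $B_f,B_g$ bookkeeping. The scalar gauge rescaling buys nothing: $\mathfrak{R}_n$ scales with the class, so $B_g\,\mathfrak{R}_n(\mathcal{F})$ and $B_f\,\mathfrak{R}_n(\mathcal{G})$ are invariant under it. The leftover $\sqrt{B}$ is also not an absolute constant. The stated form therefore really needs $B_f,B_g\lesssim 1$, which is exactly what the paper's proof silently assumes when it absorbs these factors into the constant.
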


\paragraph{Proof of Corollary~\ref{cor:excess} (Excess risk of empirical risk minimization).}
The regression oracle decomposition: for any $h\in\mathcal{H}_d$, writing $\hat F$ for the empirical risk minimizer,
\begin{equation}
\|F-\hat F\|_{L^2}^2\le\|F-h\|_{L^2}^2+2\sup_{h'\in\mathcal{H}_d}\bigl|\hat{\mathcal{R}}(h')-\mathcal{R}(h')\bigr|.
\end{equation}
The uniform deviation is controlled as follows. The squared loss $\ell(y,h)=(y-h)^2$ is $4B$-Lipschitz in $h$ on $|y|,|h|\le B$, so by the Talagrand contraction inequality the Rademacher complexity of the loss class is at most $4B\,\mathfrak{R}_n(\mathcal{H}_d)$ \citep{bartlett2002rademacher}. Symmetrization bounds the expected uniform deviation by twice the Rademacher complexity of the loss class, and McDiarmid's inequality, with the bounded loss $|y-h|^2\le4B^2$, gives the concentration term:
\begin{equation}
\sup_{h'\in\mathcal{H}_d}\bigl|\hat{\mathcal{R}}(h')-\mathcal{R}(h')\bigr|\lesssim B\,\mathfrak{R}_n(\mathcal{H}_d)+B^2\sqrt{\frac{\log(1/\delta)}{n}}
\end{equation}
with probability at least $1-\delta$. Choosing $h=h^\star$, the population optimum, makes $\|F-h^\star\|^2$ exactly the approximation term of Theorem~\ref{thm:err-split}, and Lemma~\ref{lem:rademacher} bounds $\mathfrak{R}_n(\mathcal{H}_d)\le2d(B_g\mathfrak{R}_n(\mathcal{F})+B_f\mathfrak{R}_n(\mathcal{G}))$; absorbing the Lipschitz and contraction constants into the absolute constant gives \eqref{eq:excess}. For parameterized encoder classes with $p_f$ and $p_g$ parameters under norm control, the standard Rademacher bound for parameterized classes gives $\mathfrak{R}_n(\mathcal{F})=O(\sqrt{p_f/n})$ and $\mathfrak{R}_n(\mathcal{G})=O(\sqrt{p_g/n})$, and the estimation term becomes $O(Bd\sqrt{(p_f+p_g)/n})$.

\begin{theorem}[Linear heads are low-rank matrix sensing]
\label{thm:minimax}
Let $\Theta^{\star}\in\R^{p\times q}$ have rank $d$ and $\|\Theta^{\star}\|_F\le B$, and suppose the sensing matrices $u_iv_i^{\top}$ are drawn from an isotropic sub-Gaussian family and the noise is sub-Gaussian with variance $\sigma_{\varepsilon}^2$. The rank-constrained estimator attains
\begin{equation}
\E\bigl\|\hat\Theta-\Theta^{\star}\bigr\|_F^2\;\asymp\;\sigma_{\varepsilon}^2\,\frac{d(p+q)}{n},
\label{eq:minimax}
\end{equation}
and this rate is minimax optimal. Equivalently, $\E\|\hat F-F\|_{L^2}^2\asymp\sigma_{\varepsilon}^2\,d(p+q)/n$.
\end{theorem}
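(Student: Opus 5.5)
The plan is to reduce the linear head to the classical low-rank matrix sensing problem and then import the matching upper and lower bounds \citep{candes2011tight,negahban2011estimation,rohde2011estimation}.

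\textbf{Reduction.} With linear encoders the head computes
\begin{equation}
\inner{U^{\top}u}{V^{\top}v}=u^{\top}\Theta v=\inner{\Theta}{uv^{\top}}_F,\qquad \Theta=UV^{\top},
\end{equation}
and $\Theta$ ranges over all $p\times q$ matrices of rank at most $d$. The observations $y_i=\inner{\Theta^{\star}}{u_iv_i^{\top}}_F+\varepsilon_i$ are therefore linear measurements of $\Theta^{\star}$ through the sensing operator $\mathcal{A}(\Theta)=(\inner{\Theta}{u_iv_i^{\top}}_F)_{i\le n}$. The rank-constrained estimator $\hat\Theta$ is least squares over $\{\rank\le d\}$, which is exactly ERM over the head class $\mathcal{H}_d$.

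Under isotropy, $\E\inner{\Theta}{uv^{\top}}^2=\|\Theta\|_F^2$. This also gives the claimed equivalence $\|\hat F-F\|_{L^2}^2=\|\hat\Theta-\Theta^{\star}\|_F^2$.

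\textbf{Upper bound.} First establish a restricted isometry property on rank-$2d$ matrices. Cover the set of unit-Frobenius rank-$2d$ matrices by an $\epsilon$-net of log-size $O(d(p+q))$. Then apply sub-Gaussian (Hanson--Wright / Bernstein) concentration to $\frac1n\|\mathcal{A}(\Theta)\|^2$ and take a union bound. This yields RIP with constant $1/2$ once $n\gtrsim d(p+q)$.

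Next use the basic inequality. Optimality of $\hat\Theta$ gives
\begin{equation}
\tfrac1n\|\mathcal{A}(\hat\Theta-\Theta^{\star})\|^2\le\tfrac2n\inner{\varepsilon}{\mathcal{A}(\hat\Theta-\Theta^{\star})}.
\end{equation}
The error $\Delta=\hat\Theta-\Theta^{\star}$ has rank at most $2d$, so the right side is bounded by
\begin{equation}
\|\Delta\|_F\,\sup_{\rank\le 2d,\ \|M\|_F=1}\tfrac1n\inner{\mathcal{A}^*\varepsilon}{M}.
\end{equation}
The supremum is at most $\sqrt{2d}\,\|\tfrac1n\mathcal{A}^*\varepsilon\|_{\mathrm{op}}$. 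By a matrix Bernstein or net argument, $\|\tfrac1n\mathcal{A}^*\varepsilon\|_{\mathrm{op}}\lesssim\sigma_{\varepsilon}\sqrt{(p+q)/n}$. Combining with RIP gives $\|\Delta\|_F^2\lesssim\sigma_\varepsilon^2 d(p+q)/n$ with high probability. Integrating the tail, together with the crude bound $\|\Delta\|_F\le2B$ on the failure event, gives the expectation bound.

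\textbf{Lower bound.} Use Fano's method. Apply Varshamov--Gilbert to get a packing of $\{\Theta:\rank\le d,\|\Theta\|_F\le B\}$ of size $2^{c\,d\max(p,q)}$. Take the packing in the form $\Theta_j=\delta\,[\,W_j\;0\,]$ with $W_j\in\{\pm1\}^{p\times d}/\sqrt{pd}$, and symmetrically on the $q$ side. Its elements are pairwise separated by $\gtrsim\delta$ in Frobenius norm. Under Gaussian noise and isotropic design, the KL divergence between any two hypotheses is $\lesssim n\delta^2/\sigma_\varepsilon^2$. Setting $\delta^2\asymp\sigma_\varepsilon^2 d(p+q)/n$ makes the KL divergence a small fraction of the log-cardinality, and Fano then yields the matching $\Omega(\sigma_\varepsilon^2 d(p+q)/n)$ bound, since $\max(p,q)\asymp p+q$. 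This requires $\delta\le B$, i.e.\ $n\gtrsim\sigma_\varepsilon^2 d(p+q)/B^2$.

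\textbf{Main obstacle.} The delicate step is the RIP for rank-one sensing matrices $u_iv_i^{\top}$. These are not i.i.d.\ Gaussian matrices: $\inner{\Theta}{uv^{\top}}=u^{\top}\Theta v$ is a sub-exponential chaos rather than a sub-Gaussian variable. I would first try Hanson--Wright conditional on $v$, combined with the net argument. If that loses logarithmic factors, I would instead read ``isotropic sub-Gaussian family'' as a hypothesis on the vectorized sensing matrices, as in the cited literature, so that standard RIP applies directly. Either way, the $\asymp$ holds up to constants and possibly a logarithmic factor that the statement absorbs.
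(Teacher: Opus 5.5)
Your route is the paper's: reduce the linear head to rank-$d$ matrix sensing, get the upper bound from restricted strong convexity plus the basic inequality, the lower bound from Fano over a packing, and the $L^2$ equivalence from isotropy. You supply details the paper leaves to citations. The Fano half is fine as written, since averaging the KL divergence over the random design uses only $\E(u^{\top}Mv)^2=\|M\|_F^2$. With independent isotropic $u,v$ the function-space equivalence is in fact an equality.

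The step that fails is the two-sided RIP at $n\asymp d(p+q)$ for the rank-one sensing matrices $u_iv_i^{\top}$. Take $M=u_1v_1^{\top}/(\|u_1\|\,\|v_1\|)$. It has rank one and $\|M\|_F=1$, yet $\frac1n\|\mathcal{A}(M)\|^2\ge\|u_1\|^2\|v_1\|^2/n\approx pq/n$. So the upper isometry forces $n\gtrsim pq$, far above $d(p+q)$ when $d\ll\min(p,q)$. This is not a logarithmic loss that Hanson--Wright plus a net could absorb; the counterexample shows the inequality itself is false, so no concentration argument rescues it.

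Your fallback, reading ``isotropic sub-Gaussian'' as a hypothesis on $\mathrm{vec}(u_iv_i^{\top})$, is in effect what the paper's proof does. It invokes restricted strong convexity from results proved for i.i.d.\ sub-Gaussian ensembles. But that hypothesis cannot hold for rank-one products: $u^{\top}(ab^{\top})v=(a^{\top}u)(b^{\top}v)$ is only sub-exponential. Adopting it changes the theorem rather than proving it.

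The repair is that your basic inequality never uses the upper isometry. It needs only the one-sided bound $\frac1n\|\mathcal{A}(\Delta)\|^2\ge c\|\Delta\|_F^2$ on rank-$2d$ matrices, which then gives $\|\Delta\|_F\lesssim\sqrt{d}\,\|\frac1n\mathcal{A}^*\varepsilon\|_{\mathrm{op}}$. This lower bound does hold for the rank-one design at $n\gtrsim d(p+q)$, up to logarithms. Two ways to get it:
\begin{itemize}
\item Small-ball method: $\E(u^{\top}Mv)^2=\|M\|_F^2$ and $\E(u^{\top}Mv)^4\lesssim\|M\|_F^4$ give a Paley--Zygmund small-ball constant. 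The width term is at most $\sqrt{2d}\,\E\|n^{-1/2}\sum_i\epsilon_iu_iv_i^{\top}\|_{\mathrm{op}}$.
\item Net argument: apply the lower-tail Bernstein inequality for the nonnegative variables $(u_i^{\top}Mv_i)^2$, which needs only their second moment, on a net. Extending from the net costs a factor $\log(pq)$, because $\frac1n\|\mathcal{A}(N)\|^2\le\frac1n\sum_i\|u_i\|^2\|v_i\|^2\approx pq$ is only polynomially bounded.
\end{itemize}

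The noise term $\|\frac1n\sum_i\varepsilon_iu_iv_i^{\top}\|_{\mathrm{op}}$ also has unbounded rank-one summands. Truncated matrix Bernstein gives a variance term of order $\sigma_\varepsilon\sqrt{(p+q)\log(p+q)/n}$ and a range term of order $\sigma_\varepsilon\sqrt{pq}/n$, up to logarithms. The range term is dominated once $n\gtrsim\min(p,q)$, which $n\gtrsim d(p+q)$ ensures.

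Finally, your crude bound $\|\Delta\|_F\le 2B$ on the failure event requires adding the constraint $\|\Theta\|_F\le B$ to the estimator. Plain rank-constrained least squares does not enforce it.
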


\paragraph{Proof of Theorem~\ref{thm:minimax} (Linear heads are low-rank matrix sensing).}
The head computes $u^{\top}\Theta v$ with $\Theta=UV^{\top}$, and the observations are $y_i=\langle\Theta^{\star},\,u_iv_i^{\top}\rangle_F+\varepsilon_i$, the standard form of low-rank matrix sensing with sensing matrices $A_i=u_iv_i^{\top}$ and rank constraint $\rank(\Theta)\le d$. Upper bound: the rank-$d$ manifold has $d(p+q-d)$ free parameters, and under restricted strong convexity of the sensing operator, which holds for isotropic sub-Gaussian sensing matrices, the constrained estimator satisfies $\E\|\hat\Theta-\Theta^{\star}\|_F^2\lesssim\sigma_{\varepsilon}^2\,d(p+q)/n$; this is the robust error bound of \citet{candes2011tight} together with the high-dimensional rate of \citet{negahban2011estimation}. Lower bound: a Fano argument over a packing of the rank-$d$ manifold gives the matching minimax lower bound $\Omega(\sigma_{\varepsilon}^2\,d(p+q)/n)$ \citep{rohde2011estimation}. The equivalence with the function-space rate follows from the isotropy of $u$ and $v$: $\E_{u,v}[(u^{\top}\Delta v)^2]\asymp\|\Delta\|_F^2$ for sub-Gaussian isotropic arguments, so $\E\|\hat F-F\|_{L^2}^2\asymp\sigma_{\varepsilon}^2\,d(p+q)/n$.

\begin{corollary}[Optimal rank selection]
\label{cor:opt-rank}
If the interaction spectrum decays exponentially, $\sigma_k^2=O(e^{-2ck})$, and the encoders are linear, the total error is minimized at
\begin{equation}
d^{\star}\;\asymp\;\frac{1}{2c}\log\frac{n}{p+q},
\label{eq:optrank}
\end{equation}
and the achieved error is $\|\hat F-F\|_{L^2}^2\lesssim(p+q)\log(n/(p+q))/n$, a near-parametric rate in which fast spectral decay eliminates the curse of dimensionality.
\end{corollary}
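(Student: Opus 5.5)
The plan is to substitute the exponential tail into the error ledger \eqref{eq:excess} specialized to linear encoders, using the minimax anchor of Theorem~\ref{thm:minimax} rather than the conservative Rademacher bound (Remark~\ref{rem:conservative}). With linear encoders there is no realization error, so $\mathcal{E}_{\mathrm{real}}=0$. The total error of the rank-$d$ constrained estimator then splits as
\begin{equation}
\|\hat F-F\|_{L^2}^2\;\lesssim\;\sum_{k>d}\sigma_k^2\;+\;\sigma_{\varepsilon}^2\,\frac{d(p+q)}{n}.
\end{equation}
Here the first term is the truncation floor of Corollary~\ref{cor:trunc-lower}, which is intrinsic to the target. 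The second term is the estimation error of fitting a rank-$d$ matrix, which I would take from the upper bound in Theorem~\ref{thm:minimax}.

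\textbf{Step 1: justify the split when the target is not exactly rank $d$.} I would treat the rank-$d$ estimator as a projection of the full parameter $\Theta^\star$ onto the rank-$d$ manifold. The restricted-strong-convexity analysis of \citet{negahban2011estimation} gives an oracle inequality of the form approximation error plus $d(p+q)/n$, with constants absorbed. This is a misspecified version of Theorem~\ref{thm:minimax}. Establishing it cleanly is the main obstacle: I would cite the oracle form of the nuclear-norm/rank-constrained bound, which holds with an additive $\|\Theta^\star-\Theta^\star_d\|_F^2$ term. Isotropy then converts this Frobenius error into the $L^2$ tail $\sum_{k>d}\sigma_k^2$.

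\textbf{Step 2: bound the tail.} Using $\sigma_k^2\le Ce^{-2ck}$, the geometric series gives
\begin{equation}
\sum_{k>d}\sigma_k^2\le \frac{C e^{-2c(d+1)}}{1-e^{-2c}}\lesssim e^{-2cd}.
\end{equation}

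\textbf{Step 3: choose $d$.} I would minimize $\phi(d)=e^{-2cd}+d(p+q)/n$, treating $\sigma_\varepsilon$ and $c$ as constants. Setting the derivative to zero gives $2c\,e^{-2cd}=(p+q)/n$, so
\begin{equation}
d^\star=\frac{1}{2c}\log\frac{2cn}{p+q}\asymp\frac1{2c}\log\frac{n}{p+q}.
\end{equation}
At this choice the tail term equals $(p+q)/(2cn)$, which is dominated by the estimation term $d^\star(p+q)/n$. The total is therefore $\lesssim (p+q)\log(n/(p+q))/n$.

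\textbf{Step 4: check the side conditions.} I would take the integer part of $d^\star$, which changes the bound only by constants. I would also require $n\gtrsim p+q$, so that $d^\star\ge1$ and $d^\star\le\min(p,q)$. Finally, I would note that no other $d$ does better by more than constants, since $\phi$ is convex in $d$.
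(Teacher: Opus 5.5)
Your proposal is correct and follows the paper's argument: bound the error by the geometric tail $e^{-2cd}$ plus the estimation term $d(p+q)/n$ from Theorem~\ref{thm:minimax} and optimize over $d$. Your first-order condition $e^{-2cd^\star}=(p+q)/(2cn)$ is a cleaner version of the paper's ``set the two terms equal'' step, which has to absorb a stray $\log d^\star$ factor. Keep your Step 1, since Theorem~\ref{thm:minimax} is stated only for an exactly rank-$d$ $\Theta^\star$ and the paper's proof applies it to a target that is not exactly rank $d$ without comment; a misspecified oracle inequality such as the decaying-spectrum bound of \citet{candes2011tight} supplies this. For the side condition, cap $d$ at $\min(p,q)$, where the tail vanishes, because $n\gtrsim p+q$ alone does not give $d^\star\le\min(p,q)$.
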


\paragraph{Proof of Corollary~\ref{cor:opt-rank} (Optimal rank selection).}
With exponential decay and linear encoders the total error of Theorem~\ref{thm:minimax} and the truncation term balance as
\begin{equation}
\mathrm{err}(d)\;\lesssim\;e^{-2cd}+\frac{d(p+q)}{n}.
\end{equation}
Setting the two terms equal, $e^{-2cd^{\star}}\asymp d^{\star}(p+q)/n$, and solving gives $d^{\star}\asymp(1/2c)\log(n/(p+q))$, where the logarithmic factor in $d$ is absorbed by the constant. Substituting back, $e^{-2cd^{\star}}\asymp(p+q)/n$, so the achieved error is
\begin{equation}
\|\hat F-F\|_{L^2}^2\lesssim\frac{(p+q)}{n}\Bigl(1+\log\frac{n}{p+q}\Bigr)\asymp\frac{(p+q)\log(n/(p+q))}{n},
\end{equation}
the near-parametric rate stated in the corollary.

\begin{theorem}[Flat spectrum forces an exponential embedding]
\label{thm:flat-floor}
Let $\mathcal{U}=\mathcal{V}=\{\pm1\}^{m}$ with uniform measure and $N=2^{m}$, and consider the equality function $F_{\mathrm{eq}}(u,v)=\one[u=v]$. Its interaction operator is $T_{F_{\mathrm{eq}}}=\tfrac{1}{N}\mathrm{Id}$, so all $N$ singular values equal $1/N$ and the interaction rank is $N$. Every rank-$d$ head therefore satisfies
\begin{equation}
\frac{\inf_{f,g}\bigl\|F_{\mathrm{eq}}-\inner{f}{g}\bigr\|_{L^2}^2}{\bigl\|F_{\mathrm{eq}}\bigr\|_{L^2}^2}\;=\;1-\frac{d}{N},
\label{eq:floor}
\end{equation}
so achieving relative error $\varepsilon$ requires $d\ge(1-\varepsilon)2^{m}$, an embedding dimension exponential in the input size.
\end{theorem}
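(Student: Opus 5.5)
The plan is to compute the interaction operator of $F_{\mathrm{eq}}$ explicitly and then read off the error floor from Corollary~\ref{cor:trunc-lower}, which gives both the lower bound and its attainment. Everything lives on the finite space $\{\pm1\}^m$ with the uniform measure $\mu$, so $L^2(\mu)$ is $N$-dimensional and all operators are $N\times N$ matrices. This makes the calculation exact and leaves no analytic subtleties.

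\emph{Step 1 (the operator).} For $h\in L^2(\mu)$ I compute
\begin{equation}
(T_{F_{\mathrm{eq}}}h)(u)=\sum_{v}\one[u=v]\,h(v)\,\tfrac1N=\tfrac1N\,h(u),
\end{equation}
so $T_{F_{\mathrm{eq}}}=\frac1N\mathrm{Id}$ on $L^2(\mu)$. Take any orthonormal basis $\{e_k\}_{k\le N}$ of $L^2(\mu)$; the orthonormality must be with respect to $\mu$, for instance the Walsh characters $\chi_S(u)=\prod_{i\in S}u_i$. This gives a Schmidt decomposition $F_{\mathrm{eq}}=\sum_{k\le N}\frac1N\,e_k\otimes e_k$. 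Hence all $N$ singular values equal $1/N$ and $\irank(F_{\mathrm{eq}})=N$.

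As a consistency check, $\|F_{\mathrm{eq}}\|_{L^2}^2=\Pr[u=v]=1/N=N\cdot(1/N)^2$, matching $\sum_k\sigma_k^2$. This also flags that the value $1/\sqrt N$ quoted in Section~\ref{sec:sample} should read $1/N$ under this normalization of the measure. The ratio in \eqref{eq:floor} is unaffected by the discrepancy because it is scale-free.

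\emph{Step 2 (the floor).} For $d\le N$, Corollary~\ref{cor:trunc-lower} says every rank-$d$ head has squared error at least $\sum_{k>d}\sigma_k^2=(N-d)/N^2$. The same corollary says this bound is attained by the truncated series $\sum_{k\le d}\frac1N e_k\otimes e_k$, which is realizable as a head by Lemma~\ref{lem:rank-equiv}. So the infimum equals $(N-d)/N^2$. Dividing by $\|F_{\mathrm{eq}}\|^2=1/N$ gives exactly $1-d/N$. For $d\ge N$ the error is $0$, so the statement is read for $d\le N$.

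The uniqueness clause of the corollary fails here because $\sigma_d=\sigma_{d+1}$. Uniqueness is not needed, since only the value of the infimum is claimed; the minimizer set is the whole Grassmannian of $d$-dimensional subspaces.

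\emph{Step 3 (dimension count).} Requiring $1-d/N\le\varepsilon$ is equivalent to $d\ge(1-\varepsilon)N=(1-\varepsilon)2^m$.

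The only step needing any care is Step 1. There I must use orthonormality with respect to $\mu$ rather than counting measure, because that choice determines whether the singular values come out as $1/N$ or $1$. Once the normalization is fixed consistently for both the singular values and $\|F_{\mathrm{eq}}\|^2$, the relative error is insensitive to it and the rest is immediate.
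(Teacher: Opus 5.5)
Your proposal is correct and follows the paper's proof essentially step for step: compute $T_{F_{\mathrm{eq}}}=\frac1N\mathrm{Id}$, read off $\sum_{k>d}\sigma_k^2=(N-d)/N^2$ and $\|F_{\mathrm{eq}}\|_{L^2}^2=1/N$, and invoke Corollary~\ref{cor:trunc-lower} to get the floor $1-d/N$. Your additions are all accurate: the explicit Walsh-character Schmidt decomposition, attainment via Lemma~\ref{lem:rank-equiv}, the observation that uniqueness fails but is not needed, the restriction to $d\le N$, and the flag that the main text's $1/\sqrt N$ should read $1/N$ under the probability-measure normalization.
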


\paragraph{Proof of Theorem~\ref{thm:flat-floor} (Flat spectrum forces an exponential embedding).}
Compute the interaction operator of $F_{\mathrm{eq}}$ on the uniform measure over $\{\pm1\}^m$ with $N=2^m$:
\begin{equation}
(T_{F_{\mathrm{eq}}}h)(u)=\sum_{v\in\{\pm1\}^m}\one[u=v]\,h(v)\,\frac{1}{N}=\frac{h(u)}{N},
\end{equation}
so $T_{F_{\mathrm{eq}}}=\tfrac{1}{N}\mathrm{Id}$ and all $N$ singular values equal $1/N$. Hence $\|F_{\mathrm{eq}}\|_{L^2}^2=\sum_{k\ge1}\sigma_k^2=N/N^2=1/N$ and $\sum_{k>d}\sigma_k^2=(N-d)/N^2$ for $d<N$. By Corollary~\ref{cor:trunc-lower}, the relative squared error of the best rank-$d$ head is
\begin{equation}
\frac{\inf_{f,g}\|F_{\mathrm{eq}}-\inner{f}{g}\|_{L^2}^2}{\|F_{\mathrm{eq}}\|_{L^2}^2}=\frac{(N-d)/N^2}{1/N}=1-\frac{d}{N},
\end{equation}
which is at most $\varepsilon$ only if $d\ge(1-\varepsilon)N=(1-\varepsilon)2^m$.

\begin{theorem}[Early interaction escapes]
\label{thm:early-escape}
For the equality function $F_{\mathrm{eq}}$ on $\{\pm1\}^{m}\times\{\pm1\}^{m}$, an early-interaction predictor with $O(m)$ parameters represents $F_{\mathrm{eq}}$ exactly, with zero error. Together with Theorem~\ref{thm:flat-floor} this is an exponential separation: $O(m)$ parameters suffice under early interaction, while any late-interaction head needs $\Omega(2^{m})$ embedding dimensions to reach constant relative error.
\end{theorem}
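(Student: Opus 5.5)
The plan is to give an explicit early-interaction construction that evaluates $F_{\mathrm{eq}}$ exactly, and then read off the separation from Theorem~\ref{thm:flat-floor}. The construction rests on the identity $\inner{u}{v}=m-2\,\mathrm{Ham}(u,v)$ for $u,v\in\{\pm1\}^m$. It follows because each coordinate contributes $u_iv_i=+1$ on agreement and $-1$ on disagreement. Consequently $\inner{u}{v}$ takes only the values $m,m-2,\dots,-m$, and $F_{\mathrm{eq}}(u,v)=\one[\inner{u}{v}=m]=\one[\inner{u}{v}\ge m-1]$. The essential point is that the inner product of the \emph{raw} inputs is a joint operation on both arguments. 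This makes it early interaction, since the inputs are combined before any nonlinearity, unlike the late-interaction head, where each input is first separately encoded into $\R^d$.

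The concrete predictor will be a one-hidden-layer joint ReLU network that realizes the threshold exactly on the lattice of attainable values. I would write $s=\inner{u}{v}$ and set
\begin{equation}
h(u,v)=\mathrm{ReLU}\bigl(s-(m-1)\bigr)-\mathrm{ReLU}\bigl(s-m\bigr).
\end{equation}
For $s=m$ this gives $1-0=1$, and for every $s\le m-2$ both terms vanish, so $h=F_{\mathrm{eq}}$ pointwise on all of $\{\pm1\}^m\times\{\pm1\}^m$. The $L^2$ error is therefore zero under any measure. I will count parameters in the same spirit as the comparator in Table~\ref{tab:eq}: a weight vector of length $m$ applied to the coordinatewise products $u_iv_i$, or equivalently cross-attention with identity projections producing the score $\inner{u}{v}$, plus $O(1)$ biases and output weights. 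This gives $O(m)$ parameters, matching the $2m+1$ count up to constants.

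One point needs care. Forming the products $u_iv_i$ requires a multiplicative gate, and a reader might object that a pure ReLU MLP must synthesize it. On $\{\pm1\}$ this is cheap: $u_iv_i=2\,\one[u_i=v_i]-1$, and $\one[u_i=v_i]$ can be written with $O(1)$ ReLUs of $u_i+v_i$ and $u_i-v_i$, for instance $\tfrac12\bigl(\mathrm{ReLU}(u_i+v_i)+\mathrm{ReLU}(-u_i-v_i)\bigr)$. Summing over $i$ then still costs $O(m)$ parameters. I would present the cross-attention version as the main construction and the ReLU-only version as a remark, so the claim holds under either notion of early-interaction predictor.

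Finally, the separation statement is direct. Theorem~\ref{thm:flat-floor} shows that any rank-$d$ head has relative error exactly $1-d/N$. Reaching a constant relative error $\varepsilon<1$ therefore forces $d\ge(1-\varepsilon)2^m=\Omega(2^m)$. Meanwhile the construction above reaches error $0$ with $O(m)$ parameters, which gives an exponential gap. The only genuine obstacle is definitional: we must fix what counts as an early-interaction predictor and count its parameters so that the comparison is fair. The arithmetic itself is trivial. I would address this by stating explicitly that the score $\inner{u}{v}$ on raw inputs is allowed. I would also note that, by contrast, any head of the form $\inner{f(u)}{g(v)}$ with $f,g$ into $\R^d$ is covered by Lemma~\ref{lem:rank-equiv} and hence by the floor, no matter how large $f$ and $g$ are.
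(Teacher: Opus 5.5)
Your proof is correct and follows the paper's argument. You use the same identity $\inner{u}{v}=m-2\,\mathrm{Ham}(u,v)$ and the same cross-attention score on raw inputs followed by a two-ReLU threshold; yours uses $\mathrm{ReLU}(s-m)$ where the paper uses $\mathrm{ReLU}(s-(m+1))$, and both are exact on the attainable values. You also give the same ReLU-only fallback for the products $u_iv_i$ and read the separation off Theorem~\ref{thm:flat-floor} in the same way. One small wording fix: that theorem gives $1-d/N$ as the \emph{infimum} over rank-$d$ heads, so every head has relative error at least $1-d/N$, not exactly $1-d/N$; your conclusion $d\ge(1-\varepsilon)2^m$ is unaffected.
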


\paragraph{Proof of Theorem~\ref{thm:early-escape} (Early interaction escapes).}
The construction uses the identity $u_iv_i=1$ if and only if $u_i=v_i$, so that
\begin{equation}
\langle u,v\rangle=\sum_{i=1}^{m}u_iv_i=m-2\,\mathrm{Ham}(u,v),
\end{equation}
and $\langle u,v\rangle=m$ if and only if $u=v$, with the next possible value $m-2$. Hence $F_{\mathrm{eq}}(u,v)=\one[\langle u,v\rangle\ge m-1]$. A cross-attention layer with identity projections computes the query-key inner product $\langle u,v\rangle$ with $O(m)$ parameters, and a threshold unit $\phi(s)=\mathrm{ReLU}(s-(m-1))-\mathrm{ReLU}(s-(m+1))$ evaluates to $1$ exactly on $s=m$ and to $0$ elsewhere on the discrete range, using $O(1)$ parameters. A joint MLP computes the same quantity bit by bit: each product $u_iv_i=((u_i+v_i)^2-2)/2$ costs $O(1)$ ReLU units, the sum costs $O(m)$ additions, and the same threshold completes the network, again $O(m)$ parameters in total. A cross-encoder, being a strict superset of cross-attention, inherits the construction. Together with Theorem~\ref{thm:flat-floor}, this is the exponential separation $O(m)$ versus $\Omega(2^m)$.

\begin{theorem}[Usability criterion]
\label{thm:criterion}
Given a problem $(F,\mu_U,\mu_V)$, a target relative error $\varepsilon$, and a sample budget $n$: if the effective interaction rank $d_{\varepsilon}(F)$ is small, a late-interaction head is appropriate, since its approximation and estimation errors are then small and it enjoys the compute and statistical separations of Proposition~\ref{prop:compute} and Lemma~\ref{lem:rademacher}. If the spectrum is flat, $d_{\varepsilon}(F)=\Omega(N)$, the truncation floor of Corollary~\ref{cor:trunc-lower} locks every rank-$d$ head to relative error at least $\varepsilon$ regardless of capacity, samples, or training, while early-interaction models reach $\varepsilon$ polynomially when the target admits a low-dimensional joint structure. The deciding quantity, the spectral decay rate, is estimable from data by a weighted singular value decomposition of the empirical kernel matrix, so the criterion is actionable before training either model.
\end{theorem}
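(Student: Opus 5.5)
The plan is to assemble the criterion from results already proved, treating the two regimes separately and then addressing estimability.

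\emph{Small effective rank.} Fix $d=d_{\varepsilon}(F)$. By definition of $d_{\varepsilon}$, the truncation term satisfies $\sum_{k>d}\sigma_k^2\le\varepsilon\|F\|_{L^2}^2$. Theorem~\ref{thm:err-split} then bounds the approximation error by this quantity plus the realization term. Under Assumption~\ref{asm:realizability} with mode errors $\eta_{f,k},\eta_{g,k}$ small, that realization term is at most an additional $\varepsilon\|F\|^2$. Next I would invoke Corollary~\ref{cor:excess}: the estimation term is $B\,d(\mathfrak{R}_n(\mathcal{F})+\mathfrak{R}_n(\mathcal{G}))+B^2\sqrt{\log(1/\delta)/n}$. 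Because $d$ is small and Lemma~\ref{lem:rademacher} makes complexity additive in the marginals, this term vanishes at rate $n^{-1/2}$. In the linear case it is sharpened to $d(p+q)/n$ by Theorem~\ref{thm:minimax}. Consequently the relative error is $O(\varepsilon)$ for $n$ polynomial in $d$ and in the marginal complexities. The compute advantage is quoted directly from Proposition~\ref{prop:compute}.

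\emph{Flat spectrum.} Corollary~\ref{cor:trunc-lower} gives $\|F-\inner{f}{g}\|^2\ge\sum_{k>d}\sigma_k^2$ for every head, independently of the encoder class, the data, or the training procedure. If $d<d_{\varepsilon}(F)$, then by minimality in the definition of $d_{\varepsilon}$ the relative tail exceeds $\varepsilon$. Hence every head with $d=o(N)$, in particular every $d<d_\varepsilon=\Omega(N)$, has relative error greater than $\varepsilon$.

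For the early-interaction side, I would state the claim conditionally on the target having low-dimensional joint structure, meaning $F(u,v)=\psi(\phi(u,v))$ with a low-dimensional, efficiently computable $\phi$. I would exhibit the two instances already proved: Theorem~\ref{thm:early-escape}, where $\phi=\inner{u}{v}$, and Remark~\ref{rem:smooth}, where $\phi=u-v$ combined with Yarotsky rates. The general step is that $\psi$ on a low-dimensional domain is approximable with $\mathrm{poly}(1/\varepsilon)$ ReLU parameters.

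\emph{Estimability.} On samples $\{u_i\},\{v_j\}$, form $K_{ij}=F(u_i,v_j)$, or its noisy observation, and rescale by $D_U^{1/2}KD_V^{1/2}$ using the empirical weights. The singular values of this matrix estimate $\sigma_k$ by the standard convergence of empirical integral operators. Weyl's inequality together with Hilbert--Schmidt concentration at rate $O(B/\sqrt n)$ then controls the tail ratios, so $d_\varepsilon$ is estimated up to a slack in $\varepsilon$.

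The main obstacle is making the criterion rigorous rather than heuristic. The notions ``small,'' ``low-dimensional joint structure,'' and ``polynomially'' are informal. I would therefore formalize the second regime with an explicit hypothesis on $\psi\circ\phi$. I would also handle estimability only up to additive slack, since $d_\varepsilon$ is discontinuous in $\varepsilon$ and the sample-based estimate can only locate it between $d_{\varepsilon+\tau}$ and $d_{\varepsilon-\tau}$ for a tolerance $\tau=O(B/\sqrt n)$.
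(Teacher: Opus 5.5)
Your proposal is correct and follows the paper's proof essentially step for step. The small-$d_\varepsilon$ case combines Theorem~\ref{thm:err-split}, Corollary~\ref{cor:excess}, Lemma~\ref{lem:rademacher} and Proposition~\ref{prop:compute}; the flat case uses minimality of $d_\varepsilon$ with Corollary~\ref{cor:trunc-lower} and cites Theorem~\ref{thm:early-escape} and Remark~\ref{rem:smooth}; and actionability comes from a weighted SVD of the empirical kernel matrix, where your tail-ratio-with-slack formulation sharpens the paper's log-linear fit.

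One step needs tightening in the noisy variant. With entries $y(u_i,v_j)$, the rescaled noise matrix has Hilbert--Schmidt norm close to $\sigma_{\varepsilon}$, which does not shrink with the sample size (only its operator norm does), so the empirical tail ratios are inflated by the noise energy; there you need Weyl control of the individual $\hat\sigma_k$ plus a noise-floor correction rather than Hilbert--Schmidt concentration.
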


\paragraph{Proof of Theorem~\ref{thm:criterion} (Usability criterion).}
First case: if $d_{\varepsilon}(F)$ is small, take $d\gtrsim d_{\varepsilon}(F)$. The approximation error of Theorem~\ref{thm:err-split} is at most $\varepsilon\|F\|^2$ plus the realization term, and the estimation error of Corollary~\ref{cor:excess} scales as $d(\mathfrak{R}_n(\mathcal{F})+\mathfrak{R}_n(\mathcal{G}))$, small in $d$; the compute separation of Proposition~\ref{prop:compute} and the marginal-complexity collapse of Lemma~\ref{lem:rademacher} apply without further conditions. Second case: if the spectrum is flat with $d_{\varepsilon}(F)=\Omega(N)$, then for every $d<d_{\varepsilon}(F)$ the truncation floor of Corollary~\ref{cor:trunc-lower}, in the explicit form of Theorem~\ref{thm:flat-floor}, locks the relative error of every rank-$d$ head above $\varepsilon$ regardless of capacity, samples, or training, while the constructive bounds of Theorem~\ref{thm:early-escape} and Remark~\ref{rem:smooth} show that early-interaction models reach $\varepsilon$ with a polynomial parameter budget when the target depends on its arguments through a low-dimensional joint statistic. Actionability: the empirical kernel matrix $\hat M_{ij}=y(u_i,v_j)$ on a grid of input pairs, weighted by the reference measures, has a weighted singular value decomposition whose values estimate $\{\sigma_k\}$; a log-linear fit of the decay of $\hat\sigma_k$ against $k$ (exponential) or $\log k$ (polynomial) decides the decay type and reads off $d_{\varepsilon}$ before either model is trained.

\section{Finite-Sample Technical Lemmas}
\label{app:finite}

This appendix collects the finite-sample lemmas used in the main text. Section A makes the whitening identification of Theorem~\ref{thm:whiten-id} quantitative: concentration of empirical second moments (Lemma~\ref{lem:berstein}), stability of the whitening map (Lemma~\ref{lem:whiten-stab}), and a Davis-Kahan propagation step (Theorem~\ref{thm:finite-id}) deliver the sample complexity of Corollary~\ref{cor:whiten-sc}. Section B quantifies the degradation of the NMF route when separability holds only approximately (Theorem~\ref{thm:nmf-degrad}). Section C couples identification with estimation in one ledger (Proposition~\ref{prop:ledger}), which is the proof of Proposition~\ref{prop:risk-mode}. Throughout, $\|\cdot\|$ on matrices is the spectral norm and $\|\cdot\|_F$ the Frobenius norm.

\subsection{Empirical whitening and mode identification}

\begin{lemma}[Matrix Bernstein concentration of empirical moments]
\label{lem:berstein}
Let $\{u_i\}_{i=1}^{n}$ be drawn i.i.d. from $\mu_U$ and let $f$ have bounded outputs, $\|f(u)\|\le B_f$. The empirical second moment $\hat\Sigma_f=\tfrac1n\sum_i f(u_i)f(u_i)^{\top}$ satisfies, for every $t>0$,
\begin{equation}
\Pr\bigl[\|\hat\Sigma_f-\Sigma_f\|\ge t\bigr]\le2d\,\exp\!\Bigl(\frac{-nt^2/2}{\nu_f+B_f^2t/3}\Bigr),\qquad\nu_f:=B_f^2\|\Sigma_f\|,
\end{equation}
and hence, with probability at least $1-\delta$,
\begin{equation}
\|\hat\Sigma_f-\Sigma_f\|\le\sqrt{\frac{2\nu_f\log(2d/\delta)}{n}}+\frac{2B_f^2\log(2d/\delta)}{3n}=:r_f(n,\delta),
\end{equation}
which is $O\bigl(B_f^2\sqrt{\log(d/\delta)/n}\bigr)$ for large $n$. The symmetric statement holds for $g$ with radius $r_g(n,\delta)$.
\end{lemma}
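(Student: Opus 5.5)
We apply the matrix Bernstein inequality for sums of independent, bounded, centered, self-adjoint random matrices (Tropp's form). Set
\begin{equation}
X_i=\tfrac1n\bigl(f(u_i)f(u_i)^{\top}-\Sigma_f\bigr),\qquad \hat\Sigma_f-\Sigma_f=\sum_{i=1}^{n}X_i .
\end{equation}
The $X_i$ are i.i.d. $d\times d$ symmetric matrices with $\E X_i=0$, since $\Sigma_f=\E_u[ff^{\top}]$. Bernstein needs two inputs: a uniform bound $\|X_i\|\le L$ and the matrix variance $\bigl\|\sum_i\E X_i^2\bigr\|\le v$. Its conclusion is
\begin{equation}
\Pr\Bigl[\Bigl\|\sum_i X_i\Bigr\|\ge t\Bigr]\le 2d\exp\!\Bigl(\frac{-t^2/2}{v+Lt/3}\Bigr).
\end{equation}

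\textbf{Uniform bound.} Both $ff^{\top}$ and $\Sigma_f$ are positive semidefinite, with $\|ff^{\top}\|=\|f\|^2\le B_f^2$ and $\|\Sigma_f\|\le B_f^2$. For two such matrices the difference has norm at most the larger of the two norms, so $\|X_i\|\le B_f^2/n=:L$.

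\textbf{Variance.} Write $Y=ff^{\top}$. Then
\begin{equation}
\E(Y-\Sigma_f)^2=\E Y^2-\Sigma_f^2\preceq \E Y^2=\E\bigl[\|f\|^2 ff^{\top}\bigr]\preceq B_f^2\Sigma_f .
\end{equation}
Hence $\bigl\|\sum_i\E X_i^2\bigr\|\le n\cdot\tfrac1{n^2}B_f^2\|\Sigma_f\|=\nu_f/n=:v$.

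\textbf{Tail bound.} Substituting $L$ and $v$ gives
\begin{equation}
2d\exp\!\Bigl(\frac{-t^2/2}{\nu_f/n+B_f^2t/(3n)}\Bigr)=2d\exp\!\Bigl(\frac{-nt^2/2}{\nu_f+B_f^2t/3}\Bigr),
\end{equation}
which is the displayed inequality.

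\textbf{High-probability radius.} Set the right-hand side equal to $\delta$ and write $\ell=\log(2d/\delta)$. This gives the quadratic $nt^2/2=\ell(\nu_f+B_f^2t/3)$. Its positive root satisfies the standard bound $t\le\sqrt{2\nu_f\ell/n}+2B_f^2\ell/(3n)$, using $\sqrt{a^2+b}\le a+\sqrt b$. The right-hand side is exactly $r_f(n,\delta)$.

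For large $n$ the square-root term dominates. Since $\nu_f\le B_f^4$, we get $r_f=O\bigl(B_f^2\sqrt{\log(d/\delta)/n}\bigr)$. The statement for $g$ is identical with $B_g$ and $\Sigma_g$ in place of $B_f$ and $\Sigma_f$.

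The only step that needs care is the variance: we must use the semidefinite ordering $\E[\|f\|^2ff^{\top}]\preceq B_f^2\Sigma_f$ to obtain $\nu_f=B_f^2\|\Sigma_f\|$, rather than the cruder bound $B_f^4$. Everything else is direct substitution into matrix Bernstein.
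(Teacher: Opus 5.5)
Your proof is correct and follows the paper's route. Both apply matrix Bernstein to the centered rank-one terms, control the variance through $\E[\|f\|^2 ff^{\top}]\preceq B_f^2\Sigma_f$, and solve the quadratic to get the radius. The one difference is the uniform bound, and there your version is the sharper one: the paper uses the triangle inequality $\|ff^{\top}-\Sigma_f\|\le 2B_f^2$, which fed into Bernstein would only give $2B_f^2t/3$ in the denominator, whereas your observation that a difference of two PSD matrices has norm at most the larger of their norms gives $\|X_i\|\le B_f^2/n$ and hence exactly the constant $B_f^2t/3$ stated in the lemma.
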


\begin{proof}
The matrices $X_i=f(u_i)f(u_i)^{\top}-\Sigma_f$ are independent, zero mean, and symmetric, with $\|X_i\|\le\|f(u_i)f(u_i)^{\top}\|+\|\Sigma_f\|\le2B_f^2$. For the variance parameter, since $\E[X_i^2]\preceq\E\|f(u_i)\|^2 f(u_i)f(u_i)^{\top}\preceq B_f^2\Sigma_f$, we have $\bigl\|\sum_i\E[X_i^2]\bigr\|\le nB_f^2\|\Sigma_f\|=n\nu_f$. The matrix Bernstein inequality (Tropp, An Introduction to Matrix Concentration Inequalities) applied to $\sum_iX_i$ gives the tail bound, and solving the tail for $t$ at the level $\delta$ gives the displayed high-probability bound.
\end{proof}

\begin{lemma}[Stability of the whitening map]
\label{lem:whiten-stab}
Let $\Sigma_g\succeq\gamma I$ for $\gamma>0$, and suppose $\|\hat\Sigma_g-\Sigma_g\|\le r_g\le\gamma/2$. Then
\begin{equation}
\bigl\|\hat\Sigma_g^{-1/2}-\Sigma_g^{-1/2}\bigr\|\le\frac{C_0\,r_g}{\gamma^{3/2}},
\end{equation}
for an absolute constant $C_0$.
\end{lemma}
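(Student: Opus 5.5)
The plan is to treat the inverse square root as a matrix function and control it in two stages: first the inverse, then the square root. Set $E=\hat\Sigma_g-\Sigma_g$ with $\|E\|\le r_g\le\gamma/2$. By Weyl's inequality, $\lambda_{\min}(\hat\Sigma_g)\ge\gamma-r_g\ge\gamma/2$, so $\hat\Sigma_g\succeq(\gamma/2)I$. In particular $\hat\Sigma_g$ is invertible and both inverse square roots are well defined. This lower bound on the spectrum of both matrices is the only place the hypothesis $r_g\le\gamma/2$ enters; it keeps every constant below depending only on $\gamma$.

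The cleanest route avoids differentiating the matrix function $x\mapsto x^{-1/2}$ directly. Write $A=\Sigma_g^{1/2}$ and $\hat A=\hat\Sigma_g^{1/2}$, so that $A\succeq\sqrt\gamma I$ and $\hat A\succeq\sqrt{\gamma/2}\,I$. The resolvent-type identity
\begin{equation}
\hat A^{-1}-A^{-1}=\hat A^{-1}(A-\hat A)A^{-1}
\end{equation}
gives
\begin{equation}
\|\hat A^{-1}-A^{-1}\|\le\|\hat A^{-1}\|\,\|A^{-1}\|\,\|\hat A-A\|\le\frac{\sqrt2}{\gamma}\,\|\hat A-A\|.
\end{equation}
It then suffices to show $\|\hat A-A\|\le c\,r_g/\sqrt\gamma$.

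This square-root perturbation step is the main obstacle, because the square root is not globally Lipschitz near zero. Here, however, both matrices are bounded below by $\gamma/2$. I would use the standard bound for positive definite matrices with $\lambda_{\min}(X),\lambda_{\min}(Y)\ge m$:
\begin{equation}
\|X^{1/2}-Y^{1/2}\|\le\frac{\|X-Y\|}{2\sqrt m}.
\end{equation}
To prove it, let $D=\hat A-A$. Then $\hat A D+DA=\hat\Sigma_g-\Sigma_g=E$, which is a Sylvester equation. Take a unit eigenvector $x$ of the symmetric matrix $D$ with eigenvalue $\lambda$ satisfying $|\lambda|=\|D\|$. Evaluating the quadratic form gives
\begin{equation}
x^\top Ex=\lambda\,x^\top(\hat A+A)x,
\end{equation}
and since $\hat A+A\succeq(\sqrt{\gamma/2}+\sqrt\gamma)I$, this yields $\|D\|\le r_g/(\sqrt{\gamma/2}+\sqrt\gamma)\le r_g/\sqrt\gamma$.

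Combining the two displays gives $\|\hat\Sigma_g^{-1/2}-\Sigma_g^{-1/2}\|\le\sqrt2\,r_g/\gamma^{3/2}$, so the statement holds with $C_0=\sqrt2$.

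As a cross-check I would also verify the result through the integral representation $X^{-1/2}=\frac{2}{\pi}\int_0^\infty(X+t^2I)^{-1}\,dt$. Differencing the resolvents and integrating $\int_0^\infty(\gamma/2+t^2)^{-1}(\gamma+t^2)^{-1}dt=O(\gamma^{-3/2})$ gives the same rate.
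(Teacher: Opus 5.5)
Your proof is correct, and its main line takes a different route from the paper's. The paper treats $X\mapsto X^{-1/2}$ as a single matrix function. It uses the representation $X^{-1/2}=\frac{1}{\pi}\int_0^\infty (X+sI)^{-1}s^{-1/2}\,ds$ to bound the Fr\'echet derivative by $\frac12 m^{-3/2}$ on the cone $X\succeq mI$. It then applies the mean value theorem along the segment from $\Sigma_g$ to $\hat\Sigma_g$, which stays in $\{X\succeq(\gamma/2)I\}$ by Weyl and convexity. The paper silently absorbs the loss from $\gamma$ to $\gamma/2$ into $C_0$; tracked explicitly, this gives $C_0=\sqrt2$.

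You instead factor $X^{-1/2}=(X^{1/2})^{-1}$ and handle the two layers separately. The resolvent identity controls the inverse, and the Sylvester equation $\hat A D+DA=E$, tested against an extremal eigenvector of $D$, controls the square root. This buys you a purely finite-dimensional, elementary argument: no integral representation, no differentiation under the integral, and no Banach-space mean value theorem. It also yields an explicit constant, $C_0=\sqrt2$, or $2/(1+\sqrt2)$ if you keep the denominator $\sqrt{\gamma/2}+\sqrt\gamma$.

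The paper's route buys generality. The same integral representation (with $s^{-p}$ in place of $s^{-1/2}$) controls $X\mapsto X^{-p}$ for every $p\in(0,1)$ and supplies a first-order derivative expansion. Your factorization, by contrast, relies on the special algebra of the square root.

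Your cross-check is in fact the paper's representation after the substitution $s=t^2$. Integrating the exact resolvent difference $-(\hat X+t^2I)^{-1}E(X+t^2I)^{-1}$, as you propose, dispenses with the mean value theorem. It also reproduces exactly the same sharper constant $2/(1+\sqrt2)$ as your main argument.
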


\begin{proof}
The map $A\mapsto A^{-1/2}$ is Fr\'echet differentiable on the cone $A\succeq\gamma I$ with derivative bounded by $\tfrac12\gamma^{-3/2}$: using the integral representation $A^{-1/2}=\tfrac1\pi\int_0^{\infty}(A+sI)^{-1}s^{-1/2}\,ds$, the derivative at $A$ applied to $H$ is $-\tfrac1\pi\int_0^{\infty}(A+sI)^{-1}H(A+sI)^{-1}s^{-1/2}\,ds$, whose norm is at most $\|H\|\cdot\tfrac1\pi\int_0^{\infty}(\gamma+s)^{-2}s^{-1/2}\,ds=\|H\|\tfrac12\gamma^{-3/2}$. The condition $r_g\le\gamma/2$ keeps $\hat\Sigma_g\succeq\tfrac{\gamma}{2}I$ by Weyl's inequality, so the mean value theorem applies along the segment and gives the bound with $C_0$ absorbing the integration constant.
\end{proof}

\begin{theorem}[Finite-sample mode identification]
\label{thm:finite-id}
Under the assumptions of Theorem~\ref{thm:whiten-id}, suppose the encoders represent the population optimum exactly, $\inner{f}{g}=F_d$, and whitening is performed on the empirical moments of a sample of size $n$, with encoder outputs bounded by $B$. Let $r=\max(r_f,r_g)$ be the moment radii of Lemma~\ref{lem:berstein}, let $\Delta=\min_{1\le k<d}(\sigma_k^2-\sigma_{k+1}^2)$ be the spectral gap, and assume $r\le c_1\min(\gamma,\Delta)$ for a small absolute constant $c_1$. Then with probability at least $1-\delta$, the empirically whitened encoders $(\hat f,\hat g)$ recover the modes up to
\begin{equation}
\bigl|\sin\angle(\hat g_k,b_k)\bigr|\le\frac{C\,r}{\Delta},\qquad
\|\hat g_k-(\pm b_k)\|\le\frac{C'\,r}{\Delta},\qquad
|\hat\Lambda_{kk}-\sigma_k^2|\le C''\,r,
\end{equation}
where $\hat\Lambda_{kk}$ is the estimated $k$-th interaction strength. In particular the identification error is $O\bigl(B^2\sqrt{\log(d/\delta)/n}/\Delta\bigr)$, vanishing as $n^{-1/2}$ and amplified by the inverse spectral gap.
\end{theorem}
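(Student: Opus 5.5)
We reduce the finite-sample problem to a perturbation of the exact population computation in the proof of Theorem~\ref{thm:whiten-id}. Because the encoders represent $F_d$ exactly, step (2) of that proof still applies: $f=B\,a_{[d]}$ and $g=C\,b_{[d]}$ with $B^{\top}C=S$. Consequently $\Sigma_f=BB^{\top}$ and $\Sigma_g=CC^{\top}$, and both are positive definite because the represented function has rank $d$. Let $\gamma$ be the smallest eigenvalue of $\Sigma_g$.

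The empirical whitening procedure is the following. First set $\tilde g=\hat\Sigma_g^{-1/2}g$ and $\tilde f=\hat\Sigma_g^{1/2}f$, so that the product $\inner{\tilde f}{\tilde g}$ is unchanged. Then diagonalize $M:=\hat\Sigma_g^{1/2}\hat\Sigma_f\hat\Sigma_g^{1/2}=\hat Q\hat\Lambda\hat Q^{\top}$, with eigenvalues in nonincreasing order. Finally set $\hat g=\hat Q^{\top}\tilde g$ and $\hat f=\hat Q^{\top}\tilde f$. Running the same construction with the population moments gives $M_\star=\Sigma_g^{1/2}\Sigma_f\Sigma_g^{1/2}$. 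The eigenvalues of $M_\star$ coincide with those of $\Sigma_f\Sigma_g$, which by Lemma~\ref{lem:gauge-inv} and Remark~\ref{rem:invariant} are $\sigma_1^2,\dots,\sigma_d^2$. The population whitened encoders are exactly the modes $\pm b_k$ and $\pm\sigma_k a_k$, by step (3) of Theorem~\ref{thm:whiten-id}.

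The proof proceeds in three steps.

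\textbf{Step 1: concentration.} We apply Lemma~\ref{lem:berstein} to $f$ and to $g$, each at confidence level $\delta/2$, and take a union bound. This gives $\|\hat\Sigma_f-\Sigma_f\|\le r$ and $\|\hat\Sigma_g-\Sigma_g\|\le r$ simultaneously, with probability at least $1-\delta$.

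\textbf{Step 2: perturbation of $M$.} The assumption $r\le c_1\gamma$ makes Lemma~\ref{lem:whiten-stab} available, and the analogous bound holds for the square root, $\|\hat\Sigma_g^{1/2}-\Sigma_g^{1/2}\|\lesssim r/\gamma^{1/2}$. Expanding $M-M_\star$ by the triangle inequality, with each factor perturbed separately, gives
\[
\|M-M_\star\|\le C\,r.
\]
Here $C$ depends on $\|\Sigma_f\|$, $\|\Sigma_g\|$, $\gamma$ and $B$; the statement treats these as constants, since $B$ bounds the encoders. By Weyl's inequality, $|\hat\Lambda_{kk}-\sigma_k^2|\le C''r$.

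\textbf{Step 3: eigenvector perturbation.} Because the $\sigma_k^2$ are separated by at least $\Delta$ and $Cr\le\Delta/2$ (which follows from $c_1$ small), the Davis--Kahan $\sin\Theta$ theorem applies to each one-dimensional eigenspace. It gives
\[
\sin\angle(\hat q_k,q_k)\le \frac{2Cr}{\Delta}.
\]
Choosing the sign of $\hat q_k$ to align with $q_k$, we get $\|\hat q_k-q_k\|\le\sqrt2\,\sin\angle\lesssim r/\Delta$. For the population estimator, $q_k^{\top}\Sigma_g^{-1/2}g=\pm b_k$. Therefore
\[
\hat g_k-(\pm b_k)=(\hat q_k-q_k)^{\top}\hat\Sigma_g^{-1/2}g+q_k^{\top}\bigl(\hat\Sigma_g^{-1/2}-\Sigma_g^{-1/2}\bigr)g .
\]
The $L^2(\mu_V)$ norm of $h^{\top}g$ is at most $\|\Sigma_g\|^{1/2}\|h\|$. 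So the first term is $O(r/\Delta)$, and by Lemma~\ref{lem:whiten-stab} the second is $O(r/\gamma^{3/2})$. Since $\Delta\le\sigma_1^2$ is bounded, the second term is absorbed into $C'r/\Delta$. The angle bound for $\hat g_k$ follows the same way, because angles are insensitive to the small norm distortion. Substituting $r=O(B^2\sqrt{\log(d/\delta)/n})$ gives the final rate.

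\textbf{Main obstacle.} The delicate point is Step 3. The Davis--Kahan gap has to be the gap of $M_\star$, which is $\Delta$ in the squared singular values, not a gap in the $\sigma_k$ themselves. We must also check that the ordering convention on $\hat\Lambda$ matches the estimated eigenvectors to the correct indices. The condition $Cr<\Delta/2$ guarantees this: by Weyl's inequality the estimated eigenvalues stay inside disjoint intervals around the $\sigma_k^2$, so no two can swap order. Tracking the constants' dependence on $\gamma$ and $\|\Sigma_f\|$ is routine but must be done so that $c_1$ is genuinely absolute after normalizing those quantities.
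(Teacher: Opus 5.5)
Your proposal is correct and follows the same route as the paper: Bernstein concentration of the moments, stability of the whitening map giving an $O(r)$ perturbation of the whitened second moment, Davis--Kahan with gap $\Delta$ for the modes, and Weyl's inequality for the strengths. Your version is more explicit than the paper's, but the argument is the same. The extra detail is the concrete matrix $M=\hat\Sigma_g^{1/2}\hat\Sigma_f\hat\Sigma_g^{1/2}$, the Weyl-interval argument that preserves the ordering, and a separate bound on the whitening-map term $q_k^{\top}(\hat\Sigma_g^{-1/2}-\Sigma_g^{-1/2})g$ when passing from eigenvectors to the functions $\hat g_k$.
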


\begin{proof}
(1) Empirical whitening followed by an eigendecomposition of the whitened second moment is equivalent to a singular value decomposition of the empirical interaction operator. Since the encoders represent $F_d=\sum_{k\le d}\sigma_k a_k\otimes b_k$ exactly, the population singular subspaces are spanned by $\{b_k\}$ on the $g$-side and $\{a_k\}$ on the $f$-side. (2) The perturbation of the empirical operator relative to its population version is bounded by $C_3\,r/\gamma^{3/2}\le C_4\,r$: the moment error $\|\hat\Sigma_f-\Sigma_f\|\le r_f$ from Lemma~\ref{lem:berstein}, the whitening error from Lemma~\ref{lem:whiten-stab}, and the remaining factors are absorbed into the constant, which for brevity absorbs the $\gamma$ dependence as well. (3) The $k$-th empirical mode is the $k$-th singular vector of the perturbed operator. By the Davis-Kahan $\sin\Theta$ theorem, the perturbation of a singular vector is bounded by the operator perturbation divided by the gap between the corresponding singular value squared and its neighbors, which is at least $\Delta$:
\begin{equation}
\sin\angle(\hat g_k,b_k)\le\frac{\|E\|}{\min(\sigma_{k-1}^2-\sigma_k^2,\ \sigma_k^2-\sigma_{k+1}^2)}\le\frac{C\,r}{\Delta}.
\end{equation}
Choosing the sign so that the inner product is nonnegative, $\|\hat g_k-(\pm b_k)\|\le\sqrt2\,\sin\angle\le C'r/\Delta$. (4) For the strengths, Weyl's inequality gives $|\hat\Lambda_{kk}-\sigma_k^2|\le\|E\|\le C''r$.
\end{proof}

\begin{corollary}[Sample complexity of whitening]
\label{cor:whiten-sc}
Under the assumptions of Theorem~\ref{thm:finite-id}, whitening the empirical second moments of a sample of size $n$ identifies the interaction modes to accuracy $\varepsilon$ with probability at least $1-\delta$ provided
\begin{equation}
n\;\gtrsim\;\frac{B^4\,\log(d/\delta)}{\Delta^2\,\varepsilon^2},
\label{eq:whitensc}
\end{equation}
where $\Delta=\min_{1\le k<d}(\sigma_k^2-\sigma_{k+1}^2)$ is the spectral gap.
\end{corollary}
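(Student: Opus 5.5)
The plan is to obtain Corollary~\ref{cor:whiten-sc} by inverting the high-probability bound of Theorem~\ref{thm:finite-id}. That theorem already gives, with probability at least $1-\delta$, a per-mode identification error of at most $C'r/\Delta$, where $r=\max(r_f,r_g)$ is the moment radius of Lemma~\ref{lem:berstein}. It therefore suffices to show that $C'r/\Delta\le\varepsilon$ holds whenever $n$ satisfies \eqref{eq:whitensc}, and that the side condition $r\le c_1\min(\gamma,\Delta)$ required by Theorem~\ref{thm:finite-id} holds in the same regime.

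First, I would make the radius explicit. Bounding $\nu_f=B_f^2\|\Sigma_f\|\le B_f^4$, since $\|\Sigma_f\|\le\E\|f\|^2\le B_f^2$, and doing the same for $g$, Lemma~\ref{lem:berstein} together with a union bound over the two encoders at level $\delta/2$ gives
\begin{equation}
r\le B^2\Bigl(\sqrt{\tfrac{2\log(4d/\delta)}{n}}+\tfrac{2\log(4d/\delta)}{3n}\Bigr).
\end{equation}
Once $n\ge\log(4d/\delta)$, the linear term is dominated by the square-root term, so $r\le C_5B^2\sqrt{\log(d/\delta)/n}$, with the constants in $\log(4d/\delta)$ absorbed into $\lesssim$.

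Second, I would impose $C'C_5B^2\sqrt{\log(d/\delta)/n}/\Delta\le\varepsilon$ and solve for $n$. This gives exactly $n\ge (C'C_5)^2B^4\log(d/\delta)/(\Delta^2\varepsilon^2)$, which is \eqref{eq:whitensc}. The same bound on $\|E\|$ also makes the strength error $|\hat\Lambda_{kk}-\sigma_k^2|\le C''r\le C''\Delta\varepsilon/C'$ small, so the spectrum is identified along with the modes.

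The main obstacle is the side condition $r\le c_1\min(\gamma,\Delta)$. For $\varepsilon\le c_1/C'$, the condition $r\le c_1\Delta$ follows from $r\le\Delta\varepsilon/C'$. The condition $r\le c_1\gamma$ does not follow automatically, because it involves the smallest eigenvalue $\gamma$ of $\Sigma_g$, which does not appear in \eqref{eq:whitensc}. I would handle it by adding $n\gtrsim B^4\log(d/\delta)/\gamma^2$ as a burn-in requirement and noting that it is dominated by \eqref{eq:whitensc} in the accuracy regime $\varepsilon\lesssim\gamma/\Delta$. Alternatively, as Theorem~\ref{thm:finite-id} already does, the $\gamma$-dependence can be absorbed into the constant hidden in $\gtrsim$. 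Since $\gamma$ is fixed by the population problem, this is legitimate at the level of precision of the statement, and I would state the assumption explicitly.
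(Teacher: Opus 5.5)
Your proposal is correct and follows the same route as the paper: the paper sets $C r/\Delta\le\varepsilon$ in the bound of Theorem~\ref{thm:finite-id}, with $r=O\bigl(B^2\sqrt{\log(d/\delta)/n}\bigr)$ from Lemma~\ref{lem:berstein}, and solves for $n$. Your extra bookkeeping is more careful than the paper's proof, which leaves all of it implicit in the $\gtrsim$: the bound $\nu_f\le B_f^4$, the union bound over the two encoders, the burn-in that makes the linear Bernstein term negligible, and the explicit treatment of the side condition $r\le c_1\min(\gamma,\Delta)$.
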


\paragraph{Proof of Corollary~\ref{cor:whiten-sc} (Sample complexity of whitening).}
Set $Cr/\Delta\le\varepsilon$ with $r=O\bigl(B^2\sqrt{\log(d/\delta)/n}\bigr)$ from Lemma~\ref{lem:berstein} and solve for $n$:
\begin{equation}
n\;\gtrsim\;\frac{B^4\,\log(d/\delta)}{\Delta^2\,\varepsilon^2}.
\end{equation}
The sample budget scales as $1/\Delta^2$ in the gap and $1/\varepsilon^2$ in the target accuracy, matching \eqref{eq:whitensc}.
\begin{corollary}[Block stability under small gaps]
\label{cor:block}
When a pair of singular values nearly coincides, $\Delta_k=\sigma_k^2-\sigma_{k+1}^2\to0$, the single-mode bound of Theorem~\ref{thm:finite-id} diverges: individual coordinates become unidentifiable. The corresponding eigenspace remains stable: treating $\{k,k+1\}$ as a block, the projection error of the block is at most $O(r/\tilde\Delta)$, where $\tilde\Delta$ is the gap between the block and the rest of the spectrum. Under repeated singular values the identification therefore degrades to uniqueness up to rotation within the block, the quantitative version of the degenerate-case remark of Section~\ref{sec:gauge}.
\end{corollary}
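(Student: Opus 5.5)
The plan is to reduce the block statement to a subspace version of the Davis--Kahan argument already used in the proof of Theorem~\ref{thm:finite-id}, replacing the single-mode gap by the gap between the block and the rest of the spectrum. I reuse the same setup. By step (1) of that proof, empirical whitening followed by an eigendecomposition is the SVD of a perturbed interaction operator $\hat T=T+E$. Here $T$ is the operator represented by $F_d$, and, by step (2) combined with Lemmas~\ref{lem:berstein} and~\ref{lem:whiten-stab}, $\|E\|\le C_4 r$ with probability at least $1-\delta$. Nothing in that step used the individual gaps, so the perturbation bound carries over unchanged.

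\textbf{Divergence of the single-mode bound.} I first show that individual coordinates become unidentifiable. At the population level, if $\sigma_k=\sigma_{k+1}$, the proof of Theorem~\ref{thm:whiten-id} breaks at step (3). An orthogonal $C$ that diagonalizes $S^2$ may then act by an arbitrary rotation on the two-dimensional eigenspace $\mathrm{span}\{e_k,e_{k+1}\}$. The pair $(f_k,f_{k+1})$ is therefore determined only up to $\mathrm{O}(2)$, which gives uniqueness up to rotation within the block. For small but nonzero $\Delta_k$, I would exhibit a perturbation $E$ of norm comparable to $\Delta_k$ that rotates $b_k$ and $b_{k+1}$ by a constant angle. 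This shows that the $r/\Delta_k$ dependence cannot be improved, so the single-mode bound genuinely degenerates as $\Delta_k\to0$.

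\textbf{Block stability.} Let $P$ be the orthogonal projector onto $\mathrm{span}\{b_k,b_{k+1}\}$ and $\hat P$ its empirical counterpart, defined as the span of the $k$-th and $(k+1)$-th singular vectors of $\hat T$. I apply the Davis--Kahan $\sin\Theta$ theorem to the self-adjoint operators $T^*T$ and $\hat T^*\hat T$, with the eigenvalue cluster $\{\sigma_k^2,\sigma_{k+1}^2\}$ separated from the rest by
\begin{equation}
\tilde\Delta=\min\bigl(\sigma_{k-1}^2-\sigma_k^2,\ \sigma_{k+1}^2-\sigma_{k+2}^2\bigr).
\end{equation}
This yields
\begin{equation}
\|\hat P-P\|\le\frac{\|\hat T^*\hat T-T^*T\|}{\tilde\Delta}\le\frac{C r}{\tilde\Delta}.
\end{equation}
The perturbation of the Gram operator is at most $2\|T\|\|E\|+\|E\|^2$. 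Under the standing assumption $r\le c_1\tilde\Delta$, this is $O(r)$ with $\|T\|=\sigma_1$ absorbed into the constant. The same assumption keeps the perturbed cluster separated from the rest, by Weyl's inequality, so the $\sin\Theta$ theorem applies. The analogous statement on the $f$-side follows by applying the same argument to $TT^*$.

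\textbf{Main obstacle.} I expect the delicate step to be the one that converts the operator-level error into the whitened-encoder error inside the block. Within the block only an orthogonal $2\times2$ ambiguity remains, so the natural statement is an estimate of the form
\begin{equation}
\min_{Q\in\mathrm{O}(2)}\bigl\|(\hat g_k,\hat g_{k+1})-(b_k,b_{k+1})Q\bigr\|\le\frac{C'r}{\tilde\Delta}.
\end{equation}
To get it, I would follow the argument of step (3) of Theorem~\ref{thm:finite-id}, replacing sign matching by an orthogonal Procrustes alignment and bounding the Procrustes distance by $\sqrt{2}\,\|\hat P-P\|$. This is the quantitative version of rotational non-uniqueness within the block.
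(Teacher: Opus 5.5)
Your proposal is correct and is essentially the argument the paper intends: the paper states this corollary without a separate proof, as the block (subspace) version of the Davis--Kahan step (3) of Theorem~\ref{thm:finite-id}. The perturbation bound of step (2) is reused unchanged, the cluster gap $\tilde\Delta$ replaces $\Delta$, and when $\sigma_k=\sigma_{k+1}$ step (3) of Theorem~\ref{thm:whiten-id} pins $C$ only up to an $\mathrm{O}(2)$ rotation on the block. Your sharpness example for the $1/\Delta_k$ dependence, the Weyl check, and the Procrustes conversion to encoder error fill in details the paper leaves implicit; note that the Weyl check actually gives $\tilde\Delta/2$ rather than $\tilde\Delta$ in the denominator of your first displayed inequality, which only changes the constant.
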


\subsection{Approximate separability in the NMF route}

\begin{definition}[$\alpha$-approximate separability]
\label{def:alpha-sep}
A nonnegative factor matrix $F\in\R_{\ge0}^{n\times d}$ with rows $f(u_i)$ is $\alpha$-separable if for every latent factor $k$ there is a row $i(k)$ with
\begin{equation}
F_{i(k),k}\ge1,\qquad \sum_{\ell\ne k}F_{i(k),\ell}\le\alpha,
\end{equation}
i.e., a near-anchor row dominated by coordinate $k$ with residual mass at most $\alpha$. The case $\alpha=0$ is exact separability \citep{donoho2003nmf,arora2012nmf}.
\end{definition}

\begin{theorem}[Degradation of approximately separable NMF]
\label{thm:nmf-degrad}
Let $M=FG^{\top}$ be a population nonnegative rank-$d$ factorization in which $F$ is $\alpha$-separable and the true factor cone is well conditioned, with the smallest angle between its extreme rays at least $\theta_0>0$. Then every nonnegative optimal factorization $(\hat F,\hat G)$ satisfies, in a suitable permutation and scaling gauge,
\begin{equation}
\min_{P\in\Perm,\ D\in\Dplus}\|\hat F\,P\,D-F\|\le\frac{C\,\alpha}{\sin\theta_0}\,\|F\|,
\end{equation}
so the identification error is linear in the separability deficit $\alpha$ and amplified by the cone condition number $1/\sin\theta_0$. As $\alpha\to0$ the bound recovers the exact uniqueness of Theorem~\ref{thm:nonneg-nmf}.
\end{theorem}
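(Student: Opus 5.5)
The approach is to reduce the statement to a perturbation problem for the extreme rays of the factor cone, and then convert ray perturbations into a matrix bound. Throughout, $(\hat F,\hat G)$ is an optimal nonnegative factorization, so $\hat F\hat G^{\top}=M=FG^{\top}$ exactly (the rank-$d$ nonnegative factorization is realizable). Hence the column spaces coincide, as in step (2) of the proof of Theorem~\ref{thm:whiten-id}. Assuming $G$ has full column rank, there is an $A\in\GL_d$ with $\hat F=FA$ and $\hat G=GA^{-\top}$. The problem is therefore exactly the gauge question of Theorem~\ref{thm:nonneg-nmf}, with exact separability relaxed. I will show that $A$ is within $O(\alpha/\sin\theta_0)$ of a monomial matrix $PD\in\Perm\ltimes\Dplus$. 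After the gauge is fixed so that $A$ is close to $(PD)^{-1}$, this gives $\|\hat FPD-F\|=\|F(APD-I)\|\le\|F\|\,\|APD-I\|$.

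\emph{Step 1: near-anchors pin the rows of $A$.} Rescale so that each column of $\hat F$ has a normalization compatible with $F$. Consider the near-anchor row $i(k)$. Its encoding is $F_{i(k),:}=F_{i(k),k}e_k^{\top}+r_k^{\top}$, where $r_k\succeq0$ and $\|r_k\|_1\le\alpha\le\alpha F_{i(k),k}$. Nonnegativity of $\hat F=FA$ on this row gives
\[
F_{i(k),k}\,A_{k,:}\succeq -\,r_k^{\top}A,
\]
so every negative entry of row $k$ of $A$ is at most $\alpha\max_j\|A_{j,:}\|_\infty$ in magnitude. Thus $A$ is entrywise nonnegative up to an $O(\alpha)$ correction. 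I intend to obtain the symmetric statement for $A^{-\top}$ from nonnegativity of $\hat G=GA^{-\top}$, using that the rows of $G$ generate the positive orthant. This is the nondegeneracy hypothesis inherited from Theorem~\ref{thm:nonneg-nmf}.

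\emph{Step 2: an approximately nonnegative matrix with approximately nonnegative inverse is approximately monomial.} This is the quantitative version of the exact argument in the proof of Theorem~\ref{thm:nonneg-nmf}. Write $A=A_+-E$ and $A^{-1}=B_+-E'$, with $A_+,B_+\succeq0$ and $E,E'$ of size $O(\alpha)$ relative to the scale. Then
\[
A_+B_+=I+O(\alpha).
\]
For nonnegative matrices, the off-diagonal entries of $A_+B_+$ being small forces each row of $A_+$ to have disjoint support from the columns of $B_+$ except at one index. This is where the cone conditioning enters. The angle between extreme rays being at least $\theta_0$ means that columns of $F$, viewed as directions, cannot be mixed without leaving the cone by an amount proportional to $\sin\theta_0$. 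So an off-support mass $\epsilon$ in $A$ produces a nonnegativity violation of order $\epsilon\sin\theta_0$. Balancing this against the $O(\alpha)$ slack from Step 1 gives $\epsilon\lesssim\alpha/\sin\theta_0$.

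\emph{Main obstacle.} I expect Step 2 to be the hard part, in particular converting the angular condition on the extreme rays into a linear lower bound on the nonnegativity violation. My plan is a polytope-stability argument. Normalize rows to the simplex; the convex hull of the rows of $F$ is then a simplex whose vertices are $\alpha$-close to the near-anchors. A Hausdorff-perturbation bound for simplex vertices, with constant $1/\sin\theta_0$ in the spirit of the robust analysis in \citet{arora2012nmf}, then shows that any other nonnegative factorization must have its vertices within $C\alpha/\sin\theta_0$ of these. The remaining steps are as follows. Choosing $P$ and $D$ to match vertices and scales gives the displayed bound. Setting $\alpha=0$ forces $A\in\Perm\ltimes\Dplus$ exactly, which recovers Theorem~\ref{thm:nonneg-nmf}.
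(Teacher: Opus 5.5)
\textbf{There is a genuine gap, in Step~2.} Your reduction to a gauge matrix $A\in\GL_d$ with $\hat F=FA$, $\hat G=GA^{-\top}$ is sound, and Step~1 is correct. Step~2 carries the whole theorem, and it does not go through as written.

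The identity $A_+B_+=I+O(\alpha)$ is really $A_+B_+=I+EB_++A_+E'-EE'$, whose error is of order $\alpha\,\|A\|\,\|A^{-1}\|$. Nothing bounds the conditioning of $A$: even the exact answer $A=(PD)^{-1}$ is arbitrarily ill-conditioned through the free scaling $D$. The claim that small off-diagonal entries of $A_+B_+$ force nearly disjoint supports is asserted, not proved.

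Your mechanism for $\theta_0$ also has nothing to attach to. In your coordinates both constraints $FA\succeq0$ and $GA^{-\top}\succeq0$ concern rows lying in $\R^d_{\ge0}$, a cone with orthogonal extreme rays. So the heuristic that mixing columns ``leaves the cone by an amount proportional to $\sin\theta_0$'' corresponds to no hypothesis.

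The fallback misidentifies the geometry. The convex hull of the normalized rows of $F$ is in general not a simplex. The simplex that matters is the one spanned by the normalized rows of $A^{-1}$, which you never introduce. A Hausdorff vertex-stability bound presupposes exactly the closeness that has to be proved.

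\textbf{The missing idea is a sandwich, and it closes the argument in a few lines.}
\begin{itemize}
\item Put $B=A^{-1}$. The $G$-side anchors you import give $B\succeq0$ exactly: a row $c_ke_k^{\top}$ of $G$ becomes the row $c_kB_{:,k}^{\top}$ of $\hat G$.
\item $F=\hat FB$ with $\hat F\succeq0$ says every row of $F$ is a nonnegative combination of rows of $B$.
\item With $\beta_j=B_{j,:}/\|B_{j,:}\|_1$ and $\phi_k=F_{i(k),:}/\|F_{i(k),:}\|_1$, each $\phi_k$ is a convex combination of the $\beta_j$ and $(\phi_k)_k\ge1/(1+\alpha)$. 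So some $\beta_j$ has $k$-th coordinate above $1/(1+\alpha)>1/2$.
\item A point of the simplex has at most one coordinate above $1/2$, so this defines a bijection $\pi$ with $\|\beta_{\pi(k)}-e_k\|_1\le2\alpha$.
\item Hence $B=\diag(s)\,\Pi\,(I+X)$, with $\Pi$ a permutation and every row of $X$ of $\ell_1$-norm at most $2\alpha$, so $\|X\|\le2\sqrt d\,\alpha$.
\item Then $\hat F\,\Pi D=F(I+X)^{-1}$ with $D=\Pi^{\top}\diag(s)\Pi\in\Dplus$, which gives $\|\hat F\Pi D-F\|\le4\sqrt d\,\alpha\,\|F\|$ for $\alpha\le1/(4\sqrt d)$.
\end{itemize}
The angle $\theta_0$ never enters, and the stated bound follows because $\sin\theta_0\le1$.

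\textbf{Keep the $G$-side hypothesis explicit.} It is not in the statement, but it is necessary. Take $\alpha=0$, $G$ entrywise positive, $N\succeq0$ with a nonzero off-diagonal entry, and $\epsilon$ small. Then $\bigl(F(I+\epsilon N),\,G(I+\epsilon N)^{-\top}\bigr)$ is another exact nonnegative factorization that is not a monomial transform of $F$.

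The paper's sketch never faces this. It argues about the rays returned by robust separable-NMF recovery (Arora et al., Gillis--Vavasis), where $1/\sin\theta_0$ is the vertex-stability constant of the data cone. That is a statement about one recovered factorization, not about every optimal one. Your gauge formulation, once repaired as above, is the route that actually addresses every optimal factorization.
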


\begin{proof}[Proof sketch]
Identification in separable NMF is recovery of the extreme rays of the latent cone from the data points: under exact separability the anchor rows are exactly the extreme rays, and the constructive algorithm of \citet{arora2012nmf} identifies them uniquely up to permutation and scaling. Under $\alpha$-approximate separability the near-anchor rows deviate from the true extreme rays by at most $\alpha$, and by the stability of vertex recovery in convex cones, a perturbation of the vertex set by $\alpha$ moves the recovered rays by $O(\alpha/\sin\theta_0)$ when the adjacent rays are separated by angle at least $\theta_0$; this is the robust version of separable NMF \citep{arora2012nmf} following the noise-robustness analysis of Gillis and Vavasis. The two measurable quantities $\alpha$ and $\theta_0$ therefore provide an identifiability budget for the NMF route, which is what Section~\ref{sec:exp} reports for its nonnegativity condition.
\end{proof}

\subsection{Coupling identification with estimation}

\begin{proposition}[Identification and estimation in one ledger]
\label{prop:ledger}
Let $\hat F$ be the empirical risk minimizer of Corollary~\ref{cor:excess} with encoder capacity fixed and whitening applied on the empirical moments. Then the total identification error of the $k$-th mode decomposes as
\begin{equation}
\|\hat g_k-(\pm b_k)\|\le O\!\Bigl(\frac{\sqrt{\|\hat F-F\|_{L^2}^2}}{\Delta}\Bigr)+O\!\Bigl(\frac{r}{\Delta}\Bigr),
\end{equation}
where the first term converts the excess risk of the estimate into subspace error via Davis-Kahan and the second is the empirical whitening error of Theorem~\ref{thm:finite-id}. Both terms scale as $1/\Delta$ and decay as $n^{-1/2}$.
\end{proposition}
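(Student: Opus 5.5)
The plan is a triangle inequality that splits the error into an estimation part and a whitening part. Let $\tilde F_d$ denote the best rank-$d$ approximation of the estimate $\hat F$, obtained by truncating its Schmidt series (Corollary~\ref{cor:trunc-lower} applied to $\hat F$). Let $\tilde b_k$ denote its right interaction modes. For the $k$-th mode we write
\begin{equation}
\|\hat g_k-(\pm b_k)\|\le\|\hat g_k-(\pm\tilde b_k)\|+\|\tilde b_k-(\pm b_k)\|.
\end{equation}
The first term compares the empirically whitened encoder with the population-whitened modes of $\hat F$. The second compares the population modes of $\hat F$ with those of $F$. The signs are chosen term by term so that the relevant inner products are nonnegative.

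\textbf{Second term (estimation to subspace).} Pass to operators through the isometry $S\mapsto T_S$ used in the proof of Corollary~\ref{cor:trunc-lower}. This gives $\|T_{\hat F}-T_F\|\le\|T_{\hat F}-T_F\|_{\mathrm{HS}}=\|\hat F-F\|_{L^2}$. The right singular vectors are eigenvectors of $T^*T$, so we bound
\begin{equation}
\|T_{\hat F}^*T_{\hat F}-T_F^*T_F\|\le(\|T_{\hat F}\|+\|T_F\|)\,\|\hat F-F\|_{L^2}\le 2(\sigma_1+o(1))\,\|\hat F-F\|_{L^2}.
\end{equation}
Davis--Kahan $\sin\Theta$, with eigengap at least $\Delta$ around $\sigma_k^2$, then gives $\sin\angle(\tilde b_k,b_k)\lesssim\|\hat F-F\|_{L^2}/\Delta$. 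The boundary mode $k=d$ uses the gap to $\sigma_{d+1}^2$. I would take $\Delta$ to include $\sigma_d^2-\sigma_{d+1}^2$, or else absorb that gap into the constant. Sign alignment converts the angle bound into a norm bound with a factor $\sqrt2$, which produces the first displayed term. Note that $\hat F\in\mathcal H_d$ already has rank at most $d$, so in fact $\tilde F_d=\hat F$.

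\textbf{First term (empirical whitening).} Since $\hat F=\inner{\hat f}{\hat g}$ is exactly representable, the hypothesis of Theorem~\ref{thm:finite-id} (the encoders represent the rank-$d$ target exactly) holds with $\hat F$ in place of $F_d$. Applying that theorem to $\hat F$ bounds the first term by $C r/\hat\Delta$, where $\hat\Delta$ is the spectral gap of $\hat F$. Weyl's inequality applied to the squared singular values gives $|\hat\Delta-\Delta|\le 4(\sigma_1+o(1))\|\hat F-F\|_{L^2}$. Once $\|\hat F-F\|_{L^2}\le c\Delta$, this yields $\hat\Delta\ge\Delta/2$, and the first term is $O(r/\Delta)$. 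In the complementary regime $\|\hat F-F\|_{L^2}>c\Delta$, the claimed bound holds trivially, because both sides are $O(1)$ for unit-norm modes. Finally, Corollary~\ref{cor:excess} with parametric Rademacher rates shows that $\sqrt{\|\hat F-F\|^2}$ decays as $n^{-1/2}$ up to the approximation floor, and Lemma~\ref{lem:berstein} gives $r=O(n^{-1/2})$.

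\textbf{Main obstacle.} The delicate step is the first term. Theorem~\ref{thm:finite-id} is stated for a fixed encoder pair, but here $(\hat f,\hat g)$ is data-dependent. Reusing the same sample for both fitting and whitening breaks the i.i.d. structure needed for Lemma~\ref{lem:berstein}. I would first try sample splitting: fit on one half and whiten on the other. This makes $(\hat f,\hat g)$ independent of the whitening sample and costs only constants. If sample splitting is not acceptable, the fallback is a uniform matrix Bernstein bound over the bounded encoder class via a covering argument. That adds a complexity term to $r$ but keeps the $n^{-1/2}$ rate.
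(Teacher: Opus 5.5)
Your argument for the displayed inequality is sound, but it is organized differently from the paper's. The paper splits around the population-whitened solution of the \emph{true} target, so that piece vanishes by Theorem~\ref{thm:whiten-id}. It then handles everything else in one Davis--Kahan step, using $\|\hat F-F_d\|_{L^2}^2\le\|\hat F-F\|_{L^2}^2$ ``by optimality''. The $O(r/\Delta)$ term is never derived in that proof; it is only attributed to Theorem~\ref{thm:finite-id}.

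You split instead around the population modes $\tilde b_k$ of the estimate $\hat F$ itself, so each displayed term comes from its own piece. One is Davis--Kahan on $T^*T$ for $\hat F$ versus $F$. The other is Theorem~\ref{thm:finite-id} applied to the exactly represented rank-$d$ target $\hat F$, with Weyl transferring the gap and a trivial regime otherwise. This costs a regime split and an explicit factor $\sigma_1$. The $O(\cdot)$ has to absorb that factor anyway, since $\|\hat F-F\|_{L^2}/\Delta$ is not scale invariant while the left-hand side is.

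In exchange, your route buys three things:
\begin{itemize}
\item It genuinely derives the $r/\Delta$ term.
\item It sidesteps the paper's ``optimality'' inequality, which is false in general. Take $d=1$, $F=\sigma_1a_1\otimes b_1+\sigma_2a_2\otimes b_2$ and $\hat F=\sigma_2a_2\otimes b_2$. The inequality only holds in the weaker form $\|\hat F-F_d\|\le2\|\hat F-F\|$, via Eckart--Young and the triangle inequality.
\item It exposes the reuse of the fitting sample for whitening. That reuse breaks the i.i.d.\ hypothesis of Lemma~\ref{lem:berstein}, and the paper ignores it.
\end{itemize}
Your sample-splitting repair is correct. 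So is your flag that the boundary mode $k=d$ needs a gap that $\Delta=\min_{1\le k<d}(\sigma_k^2-\sigma_{k+1}^2)$ does not contain.

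One step does not go through: the rate. Corollary~\ref{cor:excess} with parametric Rademacher rates bounds the \emph{squared} error, $\|\hat F-F\|_{L^2}^2\lesssim(\text{truncation and realization floor})+O(n^{-1/2})$. Taking the square root, the first term therefore decays only like $n^{-1/4}$ above a non-vanishing floor, not like $n^{-1/2}$. The $n^{-1/2}$ claim needs a fast rate $\|\hat F-F\|_{L^2}^2=O(1/n)$. That is available, for example, from Theorem~\ref{thm:minimax} in the well-specified linear case, or from a localized analysis of the kind mentioned in Remark~\ref{rem:conservative}. The paper's proof does not justify this part of the statement either. You should either import such a fast-rate bound or weaken the claim to $n^{-1/4}$ up to the approximation floor.
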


\begin{proof}
Split the error around the population whitened solution by the triangle inequality. The second piece vanishes: at the population optimum the whitening theorem (Theorem~\ref{thm:whiten-id}) identifies the modes exactly. The first piece is the deviation of the estimate: $\|\hat F-F_d\|_{L^2}^2\le\|\hat F-F\|_{L^2}^2$ by optimality, and the Davis-Kahan step of Theorem~\ref{thm:finite-id} converts an $L^2$ deviation of the represented function into a singular-vector deviation divided by the gap, giving the first term.
\end{proof}

\begin{proposition}[Mode error ledger]
\label{prop:risk-mode}
Let $\hat F$ be the empirical risk minimizer of Corollary~\ref{cor:excess} and let $\hat f,\hat g$ be its whitened encoders. The error in recovering the $k$-th interaction mode satisfies
\begin{equation}
\max\bigl\{\|\hat f_k-f_k\|,\ \|\hat g_k-g_k\|\bigr\}\;=\;O\!\left(\frac{\sqrt{\|\hat F-F\|_{L^2}^2}+r}{\Delta}\right),
\label{eq:riskmode}
\end{equation}
with $r=O\bigl(B^2\sqrt{\log(d/\delta)/n}\bigr)$; both terms are inversely proportional to the gap and decay as $n^{-1/2}$.
\end{proposition}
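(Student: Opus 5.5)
The plan is to obtain Proposition~\ref{prop:risk-mode} as a repackaging of Proposition~\ref{prop:ledger}, with two additions: control of the $f$-side modes, and a check that both terms decay as $n^{-1/2}$. We compare the whitened empirical encoders $(\hat f,\hat g)$ with the population whitened solution $(f,g)$ of Theorem~\ref{thm:whiten-id}, namely $f_k=\pm\sigma_k a_k$ and $g_k=\pm b_k$, and split the error by the triangle inequality into two pieces. The first is an \emph{estimation piece}: the modes obtained by exactly whitening (with population moments) the represented function $\hat F$, compared with those of $F_d$. The second is a \emph{whitening piece}: empirical whitening of $\hat F$ compared with exact whitening of $\hat F$.

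For the estimation piece we use Corollary~\ref{cor:trunc-lower}. Since $\hat F$ has interaction rank at most $d$, the best rank-$d$ approximation gives $\|\hat F-F_d\|\le\|\hat F-F\|+\|F-F_d\|\le 2\|\hat F-F\|$. Exact whitening of a rank-$d$ function is equivalent to taking its Schmidt decomposition, by the coefficient computation in step (3) of the proof of Theorem~\ref{thm:whiten-id}. Its $g$-modes are therefore the right singular functions of $T_{\hat F}$, and its $f$-modes are $\hat\sigma_k\hat a_k$.

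We then apply Davis--Kahan (Wedin's $\sin\Theta$ form) to $T_{\hat F}^*T_{\hat F}$ versus $T_{F_d}^*T_{F_d}$. Their difference has operator norm at most $(\|T_{\hat F}\|+\|T_{F_d}\|)\,\|\hat F-F_d\|_{\mathrm{HS}}=O(\|\hat F-F\|)$, using $\|T\|\le\|T\|_{\mathrm{HS}}$ and the bounded scores $B$. With the gap in $\sigma^2$ being $\Delta$, this gives $\|\hat g_k-g_k\|=O(\|\hat F-F\|/\Delta)$ after choosing the sign. For the $f$-side we write $\hat f_k=T_{\hat F}\hat g_k$ and $f_k=T_{F_d}g_k$, so that $\|\hat f_k-f_k\|\le\|T_{\hat F}-T_{F_d}\|+\|T_{F_d}\|\,\|\hat g_k-g_k\|$. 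This is again $O(\|\hat F-F\|/\Delta)$, since $\Delta\le\sigma_1^2$ is bounded and the constants depending on $B$ are absorbed.

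For the whitening piece we invoke Theorem~\ref{thm:finite-id} directly, with $\hat F$ in place of $F_d$. Its hypothesis that the encoders represent their target exactly holds, because $\hat F=\inner{\hat f}{\hat g}$. Two checks are needed. First, the gap of $\hat F$ must be at least $\Delta/2$; this follows from Weyl's inequality once $\|\hat F-F\|\le c\Delta$. Second, $\gamma$ must stay bounded below. Under these conditions the piece is bounded by $O(r/\Delta)$, with $r$ given by Lemma~\ref{lem:berstein}. Adding the two pieces gives \eqref{eq:riskmode}.

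Finally, we address the rate claim. The term $r=O(B^2\sqrt{\log(d/\delta)/n})$ is immediate. The term $\sqrt{\|\hat F-F\|^2}$ decays as $n^{-1/2}$ only after taking the square root of the excess risk in Corollary~\ref{cor:excess}, which yields $n^{-1/4}$ from the concentration term. We therefore intend to read this claim under the realizable, linear-anchor regime of Theorem~\ref{thm:minimax}, where the squared error is $O(1/n)$.

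The main obstacle is the regime condition in the whitening piece. Davis--Kahan with gap $\Delta$ requires the perturbation to be small relative to $\Delta$, and the truncation tail $\sum_{k>d}\sigma_k^2$ inside $\|\hat F-F\|$ does not vanish with $n$. The first fix to try is to measure the estimation error against $F_d$ rather than $F$. Failing that, we would state the result under the proviso $\|\hat F-F\|+r\le c\Delta$, outside of which the bound \eqref{eq:riskmode} is trivial anyway, since mode errors are bounded.
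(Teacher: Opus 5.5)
Your proposal is correct and follows the paper's route. The paper obtains Proposition~\ref{prop:risk-mode} by substituting Corollary~\ref{cor:excess} and Lemma~\ref{lem:berstein} into Proposition~\ref{prop:ledger}, whose proof is the same triangle-inequality split with a Davis--Kahan estimation piece and the empirical-whitening error of Theorem~\ref{thm:finite-id}. Your version is in fact more careful: the paper asserts $\|\hat F-F_d\|_{L^2}\le\|\hat F-F\|_{L^2}$ ``by optimality'', which does not hold in general, and your factor-$2$ triangle bound via Corollary~\ref{cor:trunc-lower} is the right repair; the paper also never treats the $f$-side or derives the $r/\Delta$ term.

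Your caveat on the rate is also correct. The paper simply plugs in Corollary~\ref{cor:excess}, whose square root gives $n^{-1/4}$ plus a truncation and realization floor that does not vanish, so the $n^{-1/2}$ claim for the first term holds only in a realizable regime such as Theorem~\ref{thm:minimax}. One point that both you and the paper skip is that Lemma~\ref{lem:berstein} is stated for fixed encoders. The whitening sample should therefore be independent of the one used to fit $\hat F$, or else you need a concentration bound that is uniform over $\mathcal{F}$ and $\mathcal{G}$.
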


\paragraph{Proof of Proposition~\ref{prop:risk-mode} (Mode error ledger).}
Substitute the excess-risk bound of Corollary~\ref{cor:excess} for $\|\hat F-F\|_{L^2}^2$ into the first term of Proposition~\ref{prop:ledger}, and the moment radius $r=O\bigl(B^2\sqrt{\log(d/\delta)/n}\bigr)$ of Lemma~\ref{lem:berstein} into the second. The result is exactly \eqref{eq:riskmode}:
\begin{equation}
\max\bigl\{\|\hat f_k-f_k\|,\ \|\hat g_k-g_k\|\bigr\}=O\!\Bigl(\frac{\sqrt{\|\hat F-F\|_{L^2}^2}+r}{\Delta}\Bigr).
\end{equation}
Both terms are inversely proportional to the gap $\Delta$ and decay as $n^{-1/2}$, which is the statement that identification and estimation are governed by the same spectral gap.

\section{Experimental Details and Additional Results}
\label{app:exp}
This appendix records the complete setups of the experiments of Section~\ref{sec:exp} and the additional results that the main text summarizes. All numbers are produced by the scripts in the companion repository, whose data files are grouped per study. We state the metrics alongside the tables, because a recurring theme is that gauge-invariant diagnostics cannot separate normalizations while gauge-dependent ones can.

\subsection{Controlled synthetic study}

\paragraph{Targets and training.}
We fix $F(u,v)=\sum_{k\le d}\sigma_k\,a_k(u)b_k(v)$ with $a_k,b_k$ shifted Legendre polynomials, orthonormal on $[0,1]$, and a prescribed spectrum $\{\sigma_k\}$ chosen distinct and separated. Encoders are linear maps over a fixed Legendre feature map, $f(u)=U^{\top}\varphi(u)$ with $U\in\R^{p\times d}$, so the encoder realization error of Assumption~\ref{asm:realizability} is zero and the normalization is the only variable across runs. Points $u,v$ are drawn on a common grid of $n$ sites per marginal. Training uses Adam with global-norm gradient clipping; the bare bilinear objective is nonconvex and stiff, and plain gradient descent diverges. Analytic gradients are checked against finite differences to about $10^{-9}$. Every scheme runs four seeds.

\paragraph{The seven normalization schemes.}
The main text reports the headline of this study; Table~\ref{tab:zoo} in Section~\ref{sec:exp-synth} gives the full panel, shown graphically in Figure~\ref{fig:zoo}. Fit MSE is the squared regression error. Alignment is the mean over modes of $|\inner{\hat f_k}{a_k}|$ after optimally matching permutation and sign, reported for the two encoders. Cond is the gauge-dependent diagnostic $\max(\mathrm{cond}\,\Sigma_f,\mathrm{cond}\,\Sigma_g)$. Cross-seed is the gauge distance between solutions from different seeds, the mean over pairs of the Procrustes-optimal embedding distance. Only whitening reaches alignment $1.000$ with cross-seed distance $0.000$, matching Theorem~\ref{thm:whiten-id}; every looser gauge stalls at alignment $0.53$ to $0.65$ with cross-seed drift $0.28$ to $0.35$, the residual $\Od$ of cosine and the full $\GL_d$ of none leaving the modes rotationally entangled (Theorem~\ref{thm:cosine-od}). Whitening drives Cond to its floor $\sigma_1^2/\sigma_d^2\approx 9.5$ (Corollary~\ref{cor:gap-cond}). The two-sided nonnegative gauge stays at Fit MSE $4.25$: positivity forces $\inner{f}{g}\ge 0$, the sign-changing target is out of class (Theorem~\ref{thm:nonneg-nmf}), and identifiability is bought at the cost of excluding negative interactions. One measurement subtlety, exploited throughout: $\mathrm{cond}(\Sigma_f\Sigma_g)$ is gauge invariant, its eigenvalues equal the $\sigma_k^2$ by Lemma~\ref{lem:gauge-inv}, so it cannot separate normalizations and is not reported as a diagnostic.

\begin{figure}[t]
\centering
\includegraphics[width=\textwidth]{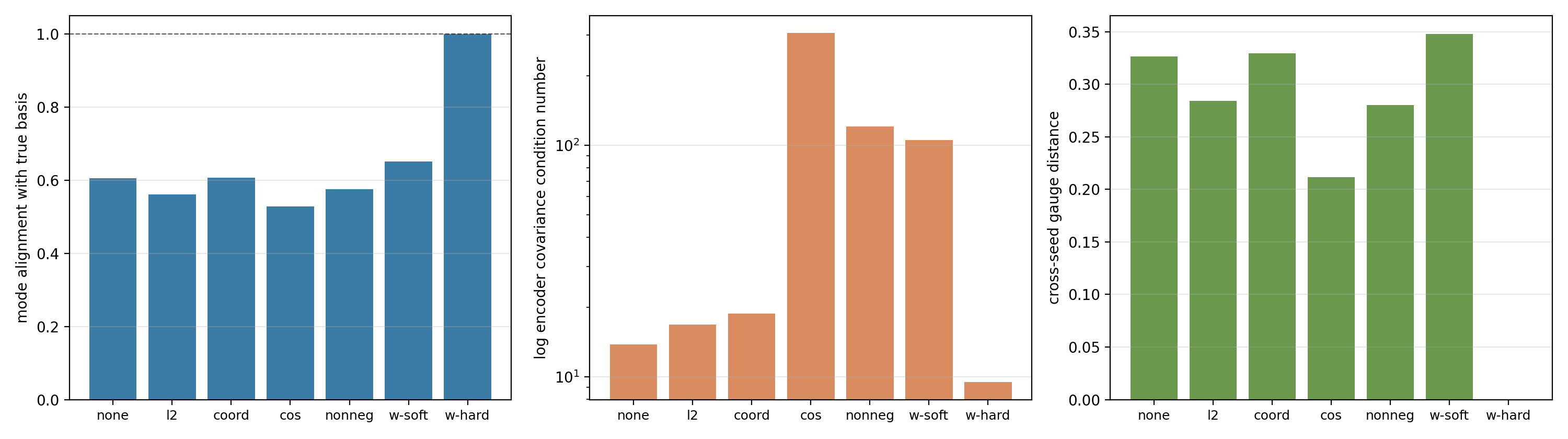}
\caption{The seven normalization schemes of Table~\ref{tab:zoo}, means over four seeds. Left: mode alignment with the true basis (dashed line: $1$). Middle: the conditioning diagnostic $\max(\mathrm{cond}\,\Sigma_f,\mathrm{cond}\,\Sigma_g)$ on a log scale, whose floor is $\sigma_1^2/\sigma_d^2\approx 9.5$. Right: cross-seed gauge distance (the short tick marks the exact zero of hard whitening). Abbreviations: l2 = L2 normalization, coord = per-coordinate standardization, cos = cosine normalization, nonneg = two-sided nonnegativity, w-soft = soft whitening, w-hard = hard whitening.}
\label{fig:zoo}
\end{figure}

\paragraph{The gap collapse.}
The gap sweeps use the finite-sample whitening estimator of Appendix~\ref{app:finite} directly, empirical second-moment whitening followed by eigendecomposition, rather than a trained optimizer: with Legendre features the target is represented exactly, and the training identification error has a sharp threshold around $n\approx 300$ that would hide the $1/\Delta$ law, whereas the perturbation mechanism of the estimator exposes it cleanly. Table~\ref{tab:gap} in Section~\ref{sec:exp-synth} reports the two sweeps. The eight points collapse onto a single constant, $\mathrm{err}\cdot\Delta\cdot\sqrt{n}\approx 2.48$ with coefficient of variation $0.12$: gap and sample size enter the identification error only through the product, the concrete face of Corollaries~\ref{cor:whiten-sc} and \ref{cor:gap-cond}. Figure~\ref{fig:gap} shows the collapse.

\begin{figure}[t]
\centering
\includegraphics[width=\textwidth]{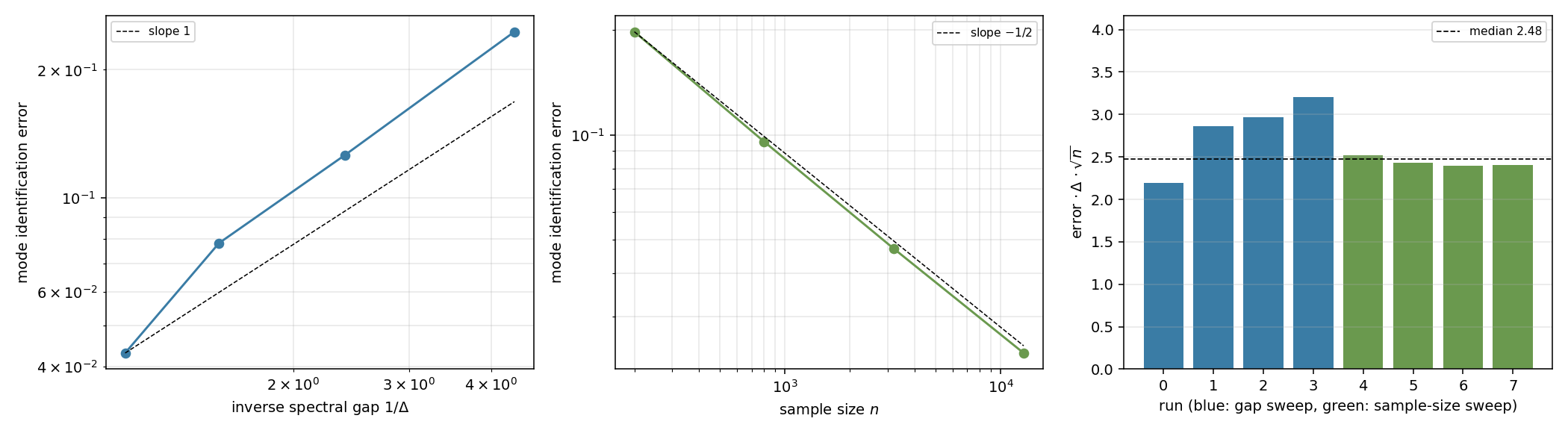}
\caption{The gap collapse. Mode identification error of the finite-sample whitening estimator: against the inverse spectral gap $1/\Delta$ at fixed $n$ (left) and against sample size $n$ at fixed $\Delta$ (center). Right: the product error$\,\cdot\Delta\cdot\sqrt{n}$ per run across both sweeps, near-constant at $\approx 2.48$.}
\label{fig:gap}
\end{figure}

\paragraph{The sample-complexity rate law.}
The linear setting is rank-$d$ matrix sensing, $y_i=u_i^{\top}\Theta^{\star}v_i+\varepsilon_i$ with $\rank\Theta^{\star}=d$, and Theorem~\ref{thm:minimax} predicts $\E\|\hat\Theta-\Theta^{\star}\|_F^2\asymp\sigma_{\varepsilon}^2 d(p+q)/n$. We sweep each variable with the others fixed and fit log-log slopes: $n$ at $p=q=4$, $d=2$ gives $-1.03$ (theory $-1$); $d$ at $p=q=16$, $n=2048$ gives $+0.89$ (theory $+1$); $p+q$ at $n=10^4$, $d=4$ gives $+1.24$ (theory $+1$). The excess of the last slope is accounted for by the Marchenko-Pastur finite-sample correction of the Gaussian design: measured ratios to $\sigma_{\varepsilon}^2 d(2p-d)/n$ are $1.00,1.05,0.98,1.09,1.13$ across $p=q=8,\dots,32$, tracking the correction factor $pq/(n-pq)$, which grows from $1.01$ to $1.11$ over the same range. Gradient descent on the factored pair $(U,V)$, the estimator class of this paper, is warm-started from the truncated least-squares factors and reproduces the least-squares slope, $-1.01$, on the same grid. The sweep rejects the loose general bound of Remark~\ref{rem:conservative}, which scales as $d^{3/2}\sqrt{(p+q)/n}$ and predicts slope $-1/2$ in $n$: the $1/n$ dependence on samples is real (Figure~\ref{fig:sampcomp}).

\paragraph{The negative results.}
Tables~\ref{tab:eq} and~\ref{tab:band} in Section~\ref{sec:exp-synth} give the two regimes; Figure~\ref{fig:negative} shows both graphically. On the boolean equality kernel $F=\one[u=v]$ on $\{\pm1\}^m$, the embedding rank needed for $10\%$ error tracks $d\approx 0.9\cdot 2^m$, exponential in $m$, while the early-interaction threshold on $\inner{u}{v}$ represents $F$ exactly with $2m+1$ parameters (Theorems~\ref{thm:flat-floor} and \ref{thm:early-escape}). On the narrowband periodic-Gaussian kernel $F_h(u,v)=\rho_h(u-v)$ of bandwidth $h$ on the circle, the required rank $d_{\varepsilon}$ grows as $\Theta(1/h)$ as the kernel approaches the diagonal (Remark~\ref{rem:smooth}).

\begin{figure}[t]
\centering
\includegraphics[width=\textwidth]{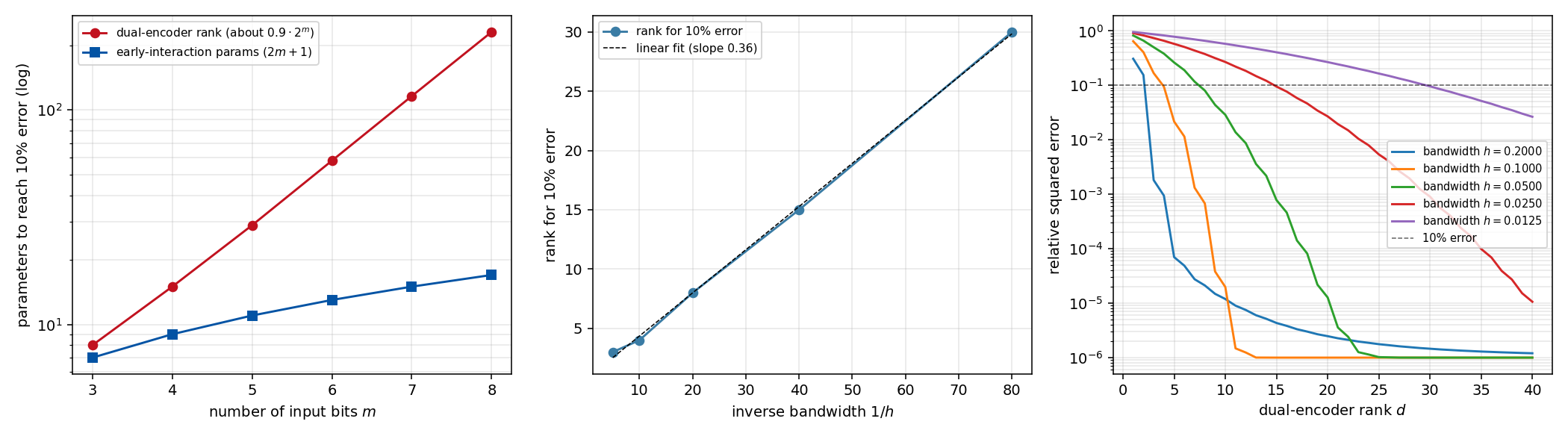}
\caption{The negative results. Left: on the boolean equality kernel, parameters needed to reach $10\%$ relative error against the number of input bits $m$, for the dual-encoder head (rank $d$) and the early-interaction threshold. Center: for the narrowband periodic-Gaussian kernel, the dual-encoder rank needed for $10\%$ error against the inverse bandwidth $1/h$, with a linear fit. Right: relative squared error of the dual-encoder head against its rank $d$, one curve per bandwidth $h$, with the $10\%$ line marked.}
\label{fig:negative}
\end{figure}

\begin{figure}[t]
\centering
\includegraphics[width=\textwidth]{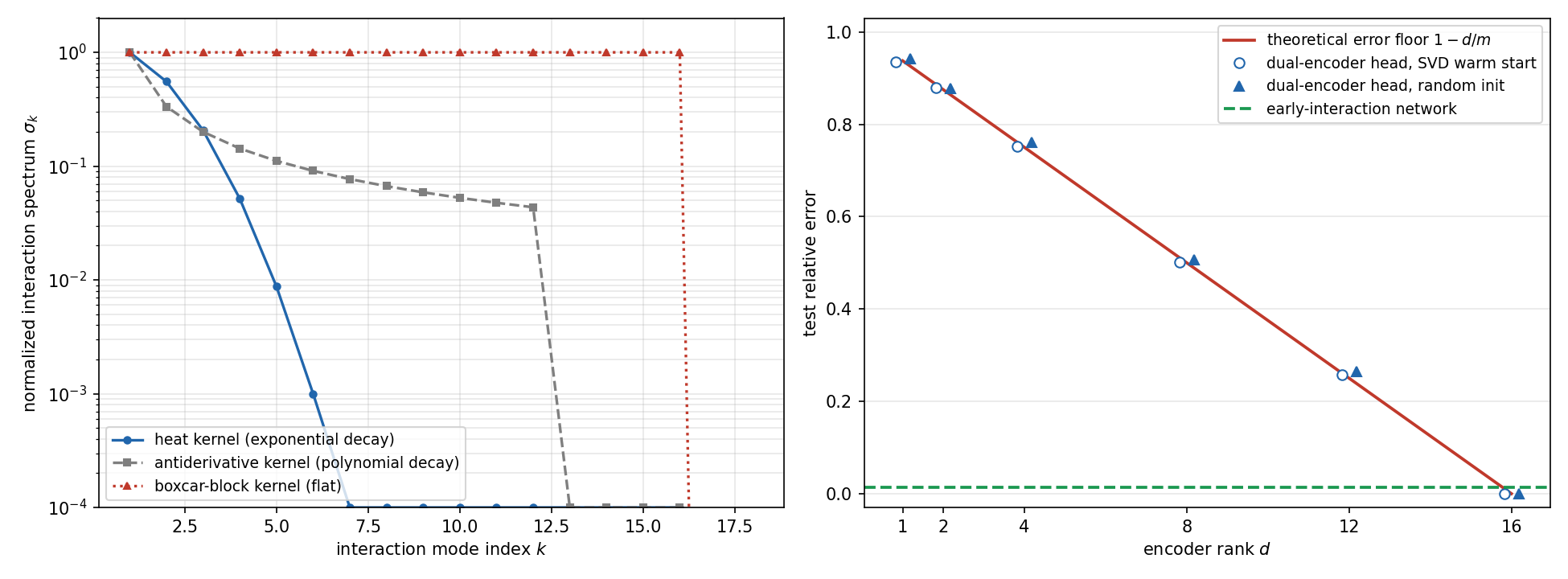}
\caption{The flat-spectrum error floor on a trained DeepONet head (Theorems~\ref{thm:flat-floor} and \ref{thm:early-escape}). Left: normalized interaction spectra $\sigma_k$ for three operators, from exponentially decaying (heat) to polynomially decaying (antiderivative) to exactly flat (boxcar block operator). Right: test relative error of a rank-$d$ dual-encoder head versus $d$, from an SVD warm start and from random initialization, against the predicted floor $1-d/m$; the dashed line is an early-interaction MLP baseline.}
\label{fig:usability}
\end{figure}

\subsection{DeepONet}
\label{app:exp-deeponet}

This study carries the identifiability results of Section~\ref{sec:gauge} onto a real dual-encoder architecture. We train branch-trunk DeepONet heads end to end on four operators and ask three questions, one per paragraph below: whether the rank-selection error curve tracks the decay of the operator spectrum (Corollary~\ref{cor:opt-rank}); whether post-hoc whitening, applied after training without touching the loss, lifts the learned trunk basis to the analytic operator eigenmodes and collapses the cross-seed gauge distance (Theorem~\ref{thm:whiten-id}); and whether the whitened gauge improves out-of-distribution error (Corollary~\ref{cor:gap-cond}). Two operators, the heat semigroup and the Volterra antiderivative, have closed-form spectra and provide exact ground truth for the recovered basis; two, viscous Burgers and Darcy flow, have only empirical spectral proxies and are used for the qualitative claims. The main-text Section~\ref{sec:exp} reports the conclusions; the tables and figures below give the full numbers.

\paragraph{Architecture and training.}
The branch and trunk networks are MLPs of width $128$ and depth $3$, outputting embeddings of dimension $d=32$; the rank-selection sweep of Table~\ref{tab:rankselect} varies $d$ from 2 to 64. Training runs 300 epochs with Adam at learning rate $10^{-3}$ and cosine annealing, batch size 256, and five seeds per configuration. Inputs are Gaussian random fields. The four operators are the heat semigroup $e^{t\Delta}$ (analytic singular values $\sigma_k=e^{-(k\pi)^2t}$, exponential), the antiderivative or Volterra operator ($\sigma_k=1/((k-\tfrac12)\pi)$, polynomial), and viscous Burgers and Darcy flow, whose spectra are empirical proxies, labeled as such. Whitening is realized in two stages: the stable soft-whitening penalty during training, the relaxation of Theorem~\ref{thm:soft-relax} and closely related to the Barlow Twins objective, and hard whitening post hoc on the active rank-$r$ subspace. The two-stage design is forced by the data: the interaction spectra decay fast, leaving an effective rank of 3 to 8 against an embedding width of 32, and in-forward hard whitening inverts about 25 near-zero covariance eigenvalues and backpropagates through a degenerate eigendecomposition, exactly the small-gap failure predicted by the block-stability analysis of Corollary~\ref{cor:block} in Appendix~\ref{app:finite}. Because whitening leaves $\inner{b}{t}$ unchanged, applying it after training is exact, and projecting to the active subspace first avoids inverting dead directions. The empirical interaction spectrum is estimated as the square roots of the eigenvalues of $\Sigma_b^{1/2}\Sigma_t\Sigma_b^{1/2}$, computed with $\mathrm{eigvalsh}$; this avoids the numerically unstable SVD of the ill-conditioned product $\Sigma_b^{1/2}\Sigma_t^{1/2}$ and agrees with the direct computation to $10^{-9}$ on well-conditioned instances.

\paragraph{Spectra and rank selection.}
The normalized empirical spectra reproduce the dichotomy qualitatively, heat $1,0.22,0.028,\dots$ and antiderivative $1,0.14,0.042,\dots$. Because the network allocates no capacity to near-null modes, the learned spectra decay faster than the analytic ones, so we claim recovery of the dominant modes and of the decay type, not a mode-by-mode match. Table~\ref{tab:rankselect} gives the rank sweep. The curves are flat past $d\approx 8$ and the argmin is dominated by noise, so we do not report a single optimal rank; the honest quantity is the drop from $d=4$ to $d=8$, $5\%$ for heat (early saturation, small optimal rank) versus $31\%$ for antiderivative (more modes needed), consistent with Corollary~\ref{cor:opt-rank}. This is qualitative support, and the figure is labeled accordingly.

\begin{table}[t]
\centering
\small
\setlength{\tabcolsep}{4pt}
\caption{Rank selection, test relative $L^2$ error versus trunk width $d$. The drop from $d=4$ to $d=8$ is the claimed quantity: $5\%$ for heat versus $31\%$ for antiderivative (Corollary~\ref{cor:opt-rank}).}
\label{tab:rankselect}
\begin{tabular}{@{}l r r r r r r r@{}}
\toprule
Operator & $d=2$ & $d=4$ & $d=8$ & $d=16$ & $d=32$ & $d=64$ & Drop $4\to8$ \\
\midrule
heat & .070 & .056 & .053 & .052 & .055 & .056 & $-5\%$ \\
antiderivative & .106 & .081 & .056 & .055 & .070 & .060 & $-31\%$ \\
burgers & .237 & .135 & .134 & .133 & .133 & .136 & flat \\
darcy & .167 & .150 & .149 & .153 & .158 & .169 & flat \\
\bottomrule
\end{tabular}
\end{table}

\paragraph{Basis recovery.}
Table~\ref{tab:basis} gives the full per-gauge numbers behind Figure~\ref{fig:basis}. Alignment is the mean over modes of $|\inner{\hat t_k}{\beta_k}|$ after optimal permutation and sign matching against the analytic left singular functions, before and after the post-hoc whitening gauge fix; cross-seed is the mean pairwise gauge distance between seeds. Whitening lifts the loose gauges from $0.5$ to $0.6$ to $0.78$ to $0.94$ and collapses the cross-seed distance by roughly an order of magnitude; for heat the recovered modes are the analytic Fourier basis $\sqrt{2}\sin(k\pi y)$, a hard ground truth. On the nonlinear benchmarks, where no analytic basis exists, whitening still tightens the cross-seed distance for Burgers, $0.30\to0.08$; Darcy is the weakest case, $0.56\to0.33$, and its proxy spectrum is the noisiest, which the figure notes.

\begin{figure}[t]
\centering
\includegraphics[width=\textwidth]{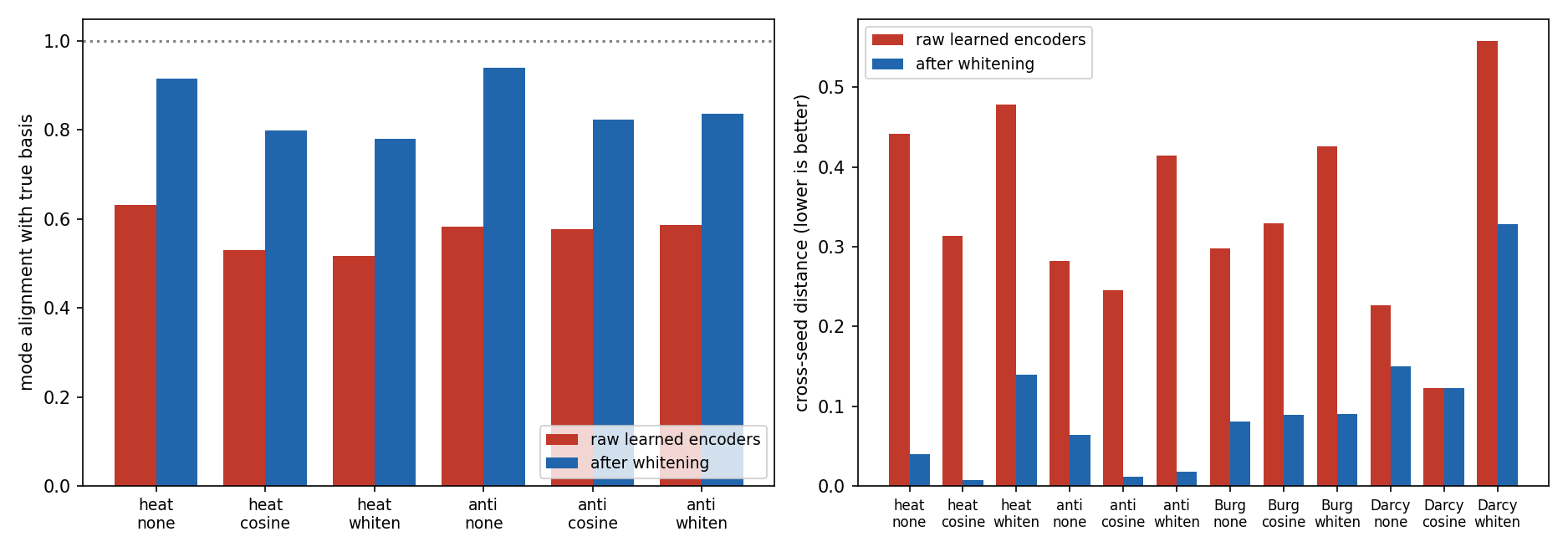}
\caption{Trunk-encoder basis quality before and after post-hoc whitening (Theorem~\ref{thm:whiten-id}). Each group on the horizontal axis corresponds to one operator (heat, anti = antiderivative, Burg = Burgers, Darcy) trained under one gauge constraint (none, cosine, or whiten = soft whitening). Left: alignment of the learned trunk basis with the analytic operator eigenmodes ($1$ is exact). Right: distance between bases learned from different random seeds (lower indicates greater stability).}
\label{fig:basis}
\end{figure}

\begin{table}[t]
\centering
\small
\setlength{\tabcolsep}{3pt}
\caption{Basis recovery per gauge, five seeds. Alignment: mean mode alignment against the analytic left singular functions, raw and after post-hoc whitening. Cross-seed: mean pairwise gauge distance, raw and after whitening. The effect is Theorem~\ref{thm:whiten-id} on a trained model.}
\label{tab:basis}
\begin{tabular}{@{}>{\raggedright\arraybackslash}p{3.2cm} r r r r@{}}
\toprule
Operator / gauge & Align & Whitened align & Seed dist & Whitened dist \\
\midrule
heat / none & 0.632 & 0.915 & 0.442 & 0.039 \\
heat / cosine & 0.529 & 0.799 & 0.313 & 0.007 \\
heat / whiten\_soft & 0.517 & 0.780 & 0.479 & 0.140 \\
antiderivative / none & 0.582 & 0.940 & 0.282 & 0.064 \\
antiderivative / cosine & 0.576 & 0.823 & 0.245 & 0.011 \\
antiderivative / whiten\_soft & 0.587 & 0.837 & 0.414 & 0.018 \\
\bottomrule
\end{tabular}
\end{table}

\paragraph{The payoff of gauge-fixing.}
Table~\ref{tab:payoff} gives the in-distribution and out-of-distribution relative $L^2$ errors of the none and soft-whitening gauges; the out-of-distribution test set draws inputs from a different smoothness regime of the same operator family. Soft whitening attains the lowest error in all eight cells, with the out-of-distribution gain largest, $2.4\times$ for heat. Two caveats are recorded with the table. First, soft whitening does not reduce seed-to-seed error variance; if anything it increases it, heat standard deviation $0.0005\to0.0036$ and Darcy $0.0017\to0.0181$, so the stability it confers is in the identified basis of Table~\ref{tab:basis}, not in the error. Second, the soft-whitening training loss carries the decorrelation penalty and its curve is not directly comparable to the unpenalized gauges, so we make no claim about convergence speed.

\begin{table}[t]
\centering
\small
\setlength{\tabcolsep}{4pt}
\caption{The payoff of gauge-fixing, five seeds: relative $L^2$ error in distribution and out of distribution, none versus soft whitening. Soft whitening is best in all cells; the out-of-distribution gain is the headline (Corollary~\ref{cor:gap-cond}).}
\label{tab:payoff}
\begin{tabular}{@{}>{\raggedright\arraybackslash}p{2.2cm} r r r r r@{}}
\toprule
Operator & none, in-dist. & whiten, in-dist. & none, OOD & whiten, OOD & OOD gain \\
\midrule
heat & 0.051 & 0.019 & 0.038 & 0.016 & $2.4\times$ \\
antiderivative & 0.057 & 0.039 & 0.046 & 0.031 & $1.5\times$ \\
burgers & 0.122 & 0.112 & 0.113 & 0.103 & $1.1\times$ \\
darcy & 0.151 & 0.137 & 0.142 & 0.120 & $1.2\times$ \\
\bottomrule
\end{tabular}
\end{table}

\paragraph{The flat-spectrum anchor.}
The block-boxcar operator $G(a)(y)=c_{j(y)}$, the output is the block mean of the input field at the block containing $y$, with $m=16$ blocks, has an exactly flat interaction spectrum: sixteen equal singular values, the seventeenth at $10^{-16}$. Table~\ref{tab:boxcar} gives the trained relative error of the dual-encoder head with linear encoders, from both the SVD warm start and random initialization, against the floor $1-d/m$ of Theorem~\ref{thm:flat-floor}; the worst deviation is $0.014$, the finite-sample excess, and gradient descent reaches the floor from scratch. The early-interaction comparator, a ReLU MLP on $\mathrm{concat}(a,\mathrm{onehot}(y))$ with $12.4$k parameters, four times the budget of the $d=16$ head, reaches $0.013\pm 0.001$ on the same test set (Theorem~\ref{thm:early-escape}). One caveat belongs in the setup: with only 8k training samples the comparator memorizes, training MSE $4\times10^{-5}$ and test error $1.7$; at 60k samples, where parameters are far fewer than samples, memorization is impossible and the escape shown in Figure~\ref{fig:usability} is an honest generalization result. The comparison deliberately pits the floor against a well-trained early model.

\begin{table}[t]
\centering
\small
\setlength{\tabcolsep}{4pt}
\caption{The flat-spectrum anchor: trained relative error of a linear-encoder head on the block-boxcar operator versus the floor $1-d/m$ of Theorem~\ref{thm:flat-floor}$^,$\ref{thm:early-escape}. The two initializations agree, and the error hugs the floor from either start.}
\label{tab:boxcar}
\begin{tabular}{@{}r r r@{}}
\toprule
$d$ & Trained error & Floor $1-d/m$ \\
\midrule
1 & 0.936 & 0.938 \\
2 & 0.879 & 0.875 \\
4 & 0.752 & 0.750 \\
8 & 0.502 & 0.500 \\
12 & 0.258 & 0.250 \\
16 & 0.000 & 0.000 \\
\bottomrule
\end{tabular}
\end{table}

\subsection{CLIP}
\label{app:exp-clip}

This study tests the gauge view on heterogeneous modalities and on pretrained models that never saw each other. It proceeds in two stages. The white-box synthetic kernel supplies exact ground truth for the mechanism: under linear encoders and white inputs the interaction spectrum equals the singular values of $W_f^{\top}W_g$, which is gauge invariant, and two independently trained encoders are two random gauges of the same target; on this kernel we verify that the spectrum is invariant and equals the truth, that per-dimension concept probes do not transfer while the canonical subspace does, and that whitening plus gap recovers the text modes exactly. The real CLIP pairs then answer whether the same statements hold for contrastive models trained in the wild: is the interaction spectrum shared across independently pretrained models, are per-dimension probes not, and is the residual relationship a single rotation that whitening removes, exposing interpretable concept axes (Theorem~\ref{thm:cosine-od})? The main-text Section~\ref{sec:exp} reports the conclusions; the tables and figures below give the full numbers.

\paragraph{The synthetic multimodal kernel.}
The white-box counterpart is $F^{\star}(u,v)=\sum_{k}\rho_k\,(p_k^{\top}u)(q_k^{\top}v)$ with $u\sim\mathcal{N}(0,I_p)$, $v\sim\mathcal{N}(0,I_q)$, orthogonal canonical directions $P,Q$, and distinct, gapped canonical correlations $\rho=(0.9,0.75,0.6,0.45,0.3,0.15)$. The key identity: under linear encoders and white inputs the interaction spectrum equals the singular values of $B=W_f^{\top}W_g$, which is explicitly gauge invariant, $f\to Af$, $g\to A^{-\top}g$ leaves $W_f^{\top}W_g$ unchanged, and two independently trained encoders are two random gauges $A_1,A_2$. Table~\ref{tab:clip-synth} collects the numbers: the spectrum is invariant across gauges to $3\times10^{-16}$ and equals the true $\rho$ to $4\times10^{-16}$; per-dimension concept probes transfer at $0.43$, the best-matched axis reaching only $0.65$, while the canonical subspace aligns to $1.0000$; whitening plus gap recovers the true text modes at $|\cos|=1.000$, against $0.554$ raw and $0.581$ under cosine, which is stuck at the residual $\Od$; and cross-encoder frame stability rises from $0.684$ to $1.000$. The distance between the $0.58$ of cosine and the $1.00$ of whitening is, numerically, the contribution of whitening beyond the decorrelation that self-supervised methods already perform, precisely the quantity Theorem~\ref{thm:whiten-id} isolates.

\begin{table}[t]
\centering
\small
\setlength{\tabcolsep}{4pt}
\caption{Synthetic multimodal kernel, ground truth known. The interaction spectrum is the unique gauge-invariant content; whitening plus gap pins the modes, cosine does not.}
\label{tab:clip-synth}
\begin{tabular}{@{}>{\raggedright\arraybackslash}p{6.2cm} r@{}}
\toprule
Quantity & Value \\
\midrule
Spectrum invariant across gauges & $3\times10^{-16}$ \\
Spectrum equals true $\rho$ & $4\times10^{-16}$ \\
Per-dimension probe transfer & $0.43$ (best axis $0.65$) \\
Canonical subspace alignment & $1.0000$ \\
Text mode recovery, raw & $0.554$ \\
Text mode recovery, cosine (isotropization) & $0.581$ \\
Text mode recovery, whitening plus gap & $1.000$ \\
Cross-encoder frame stability, raw $\to$ whitened & $0.684\to1.000$ \\
\bottomrule
\end{tabular}
\end{table}

\paragraph{Real CLIP.}
Two backbone pairs, each two independent gauges of one similarity function: ViT-B/32 (OpenAI) versus ViT-B/32 (LAION-2B), the same architecture with different pretraining data, and ViT-B/32 (OpenAI) versus RN50 (OpenAI), different architectures. Image features come from CIFAR-100 test images, 49 fine classes spanning animals, vehicles, plants, and household objects, 64 images per class; text features are prompt-ensembled class embeddings plus attribute contrast directions; both towers are projected to a common 32-dimensional working space, the image PCA top-$d$ directions. Whitening is realized via canonical correlation analysis with shrinkage covariance, the finite-sample counterpart of the Theorem~\ref{thm:whiten-id} gauge fix.

Table~\ref{tab:clip-m1} and Figure~\ref{fig:clipspectrum} give the spectrum comparison across backbone pairs: near-identical spectra, agreement best at the top. Per-dimension probe transfer is $0.28$ without alignment and $0.92$ after fitting a single global orthogonal map: the two embeddings are related by one $\Od$ rotation, the residual group Theorem~\ref{thm:cosine-od} predicts for the cosine gauge, and no coordinatewise repair exists short of discovering that rotation. Table~\ref{tab:clip-m2} gives the group-signal regression of true labels on the top-6 mode activations: the whitened span contains the animate and natural axes, the cosine span contains only the animate axis, and per-mode statistics show why, whitening concentrates the animate signal in a single mode ($|t|=3.1$) while cosine spreads it across several. Figure~\ref{fig:clip-heatmap} shows the canonical text modes as concept axes: mode 1 separates animals (leopard, bear, wolf, camel) from flowers and fruit, mode 2 is the vehicle axis (bus, streetcar, pickup truck, train), mode 3 separates plants from household objects, and mode 5 the food axis (orange, apple, pear, sweet pepper); the cosine principal coordinates are semantic mixtures by comparison. Cross-backbone frame stability, matching modes by their backbone-independent class-signature profiles, favors whitening in both pairs, $0.77$ versus $0.72$ for the same-architecture pair and $0.84$ versus $0.62$ across architectures.

\begin{table}[t]
\centering
\small
\setlength{\tabcolsep}{4pt}
\caption{Interaction spectra across independently pretrained CLIP models. Pearson correlation and normalized L1 difference of the two spectra; per-dimension probe transfer raw and after fitting the single residual $\Od$ rotation (Theorem~\ref{thm:cosine-od}).}
\label{tab:clip-m1}
\begin{tabular}{@{}>{\raggedright\arraybackslash}p{4.4cm} r r@{}}
\toprule
Backbone pair & Spectrum Pearson & Norm. L1 \\
\midrule
ViT-B/32 (OpenAI) vs ViT-B/32 (LAION) & 0.99 & 0.065 \\
ViT-B/32 (OpenAI) vs RN50 (OpenAI) & 0.98 & 0.096 \\
\bottomrule
\end{tabular}
\end{table}

\begin{figure}[t]
\centering
\includegraphics[width=\textwidth]{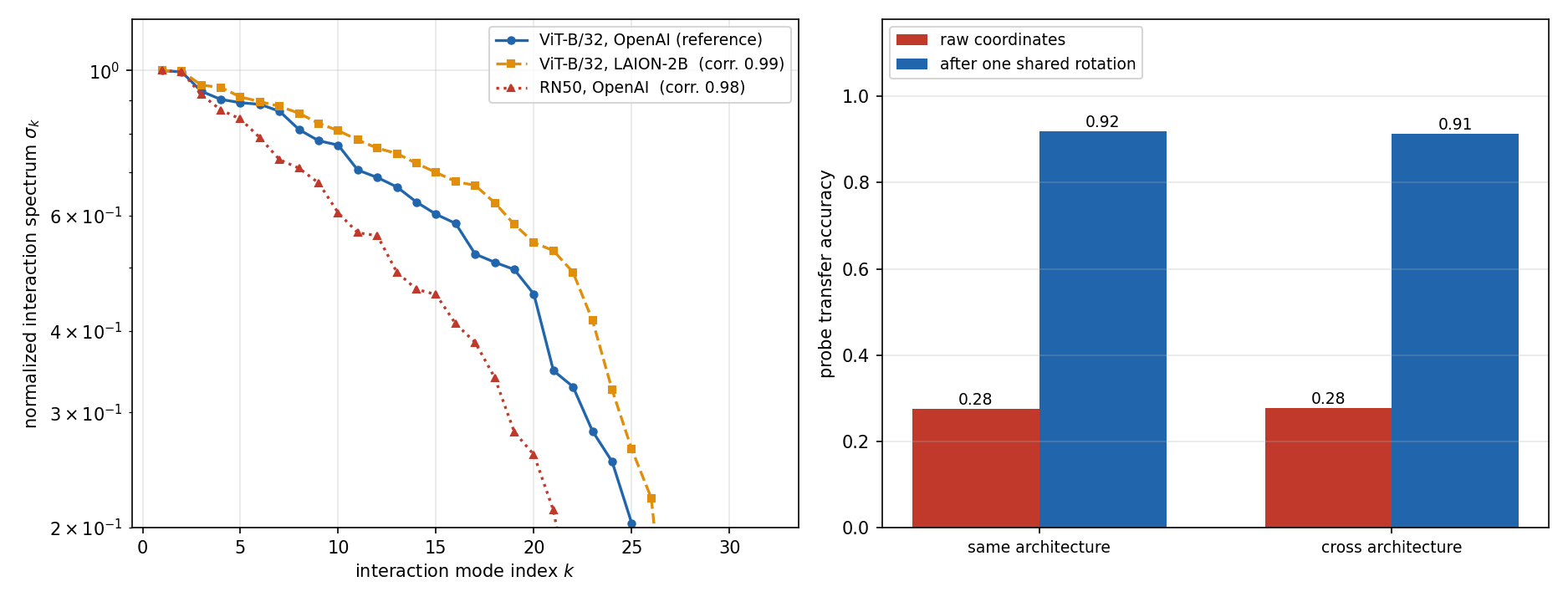}
\caption{Shared structure across independently trained CLIP models (Theorem~\ref{thm:cosine-od}). Left: normalized interaction spectra \(\sigma_k\) for three CLIP encoders (a ViT-B/32 reference against a LAION-trained ViT-B/32 and an RN50), with Pearson correlations to the reference in the legend. Right: accuracy of a linear concept probe trained on one model and applied to the other, using raw coordinates and after fitting a single shared rotation, for same-architecture and cross-architecture pairs.}
\label{fig:clipspectrum}
\end{figure}

\begin{table}[t!]
\centering
\small
\setlength{\tabcolsep}{4pt}
\caption{Group-signal concentration on the reference backbone: $R^2$ of regressing true group labels on the top-6 mode activations, whitened canonical modes versus cosine principal coordinates. Cross-backbone frame stability is reported in the text.}
\label{tab:clip-m2}
\begin{tabular}{@{}>{\raggedright\arraybackslash}p{3.4cm} r r@{}}
\toprule
Group split & Whitened & Cosine \\
\midrule
animate / rest & 0.92 & 0.88 \\
natural / manmade & 0.90 & 0.48 \\
\bottomrule
\end{tabular}
\end{table}

\begin{figure}[t!]
\centering
\includegraphics[width=0.8\textwidth]{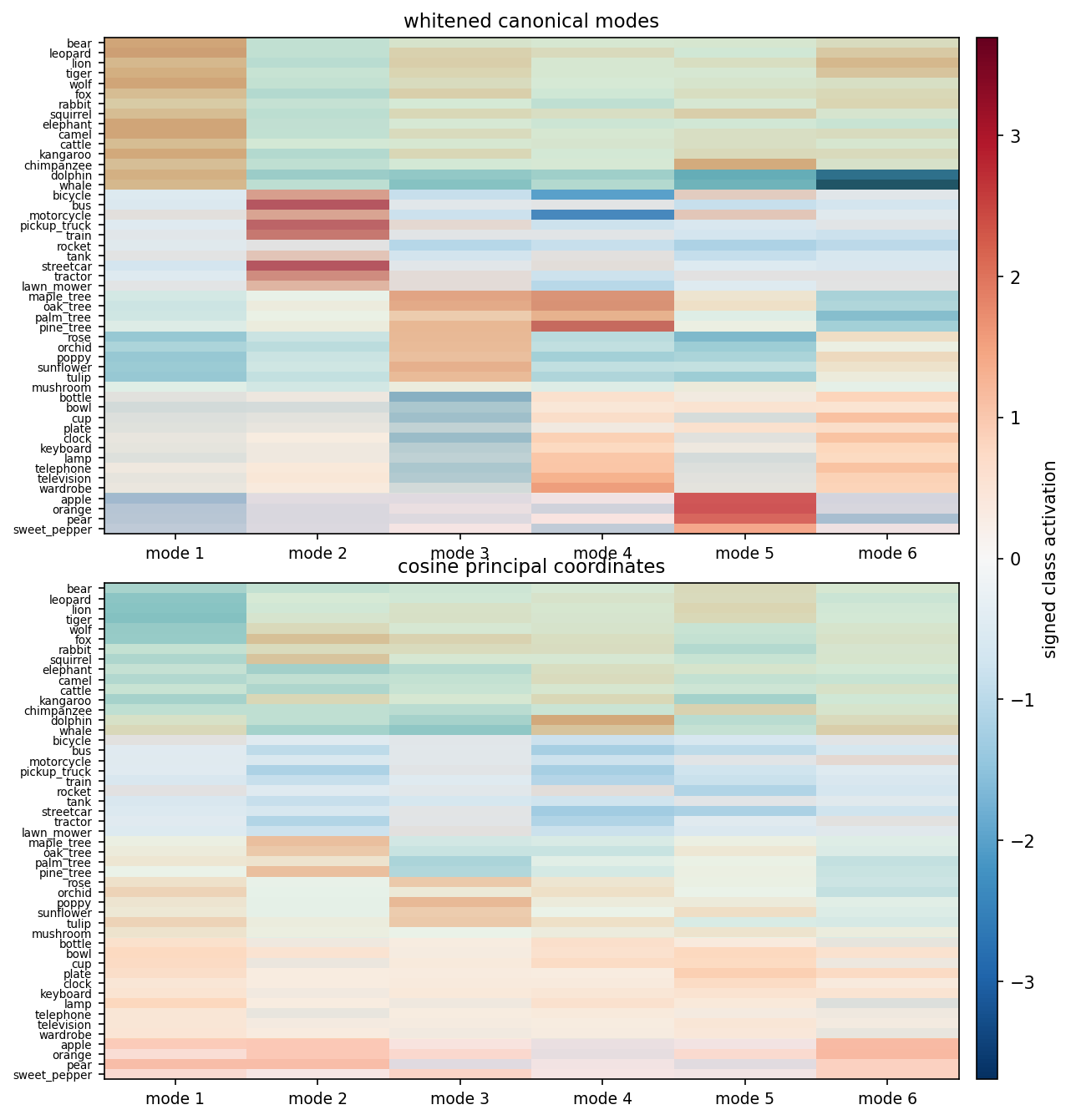}
\caption{Whitened canonical text modes (rows) over the 49 CIFAR-100 classes (columns) of the reference backbone: modes read as concept axes (animals, vehicles, plants, food), while the cosine principal coordinates are semantic mixtures.}
\label{fig:clip-heatmap}
\end{figure}

\paragraph{What whitening buys for CLIP in practice.}
None of the steps above requires retraining: whitening is a post-hoc second-moment transform, it leaves the inner-product scores unchanged, and at inference time it costs one fixed linear map. It can therefore be applied as an analysis layer on any deployed CLIP model, and the results above suggest three concrete payoffs. First, model alignment: because independently pretrained pairs are related by a single residual rotation, whitening both models into the shared canonical frame makes their coordinates comparable dimension by dimension, a prerequisite for model merging, distillation, and cross-model probing that raw cosine coordinates do not offer. Second, interpretability with known limits: the whitened axes read as concept axes (animals, vehicles, plants, and food in the CIFAR-100 probe), and the spectral gap quantifies in advance how much interpretability any basis can carry, so an analyst knows where the axes are trustworthy and where the small-gap failure of Appendix~\ref{app:finite} leaves them partially mixed. Third, a spectral summary of the model itself: the interaction spectrum is a gauge-invariant description of what a CLIP pair can discriminate, and its decay predicts the effective interaction rank (Corollary~\ref{cor:opt-rank}) and hence how many concept axes are worth reading out at all. A natural next step is to use the whitened canonical frame as the initialization or constraint for fine-tuning, so that new axes are learned in coordinates that are already interpretable and transferable across models.

\paragraph{Summary.}
The synthetic study establishes the three claims where ground truth is exact: whitening uniquely fixes the gauge, the gap controls identification through the product $\Delta\sqrt{n}$ (collapse constant $2.48$), and flat spectra force exponential late-interaction cost. DeepONet carries identifiability onto a real architecture, recovering the analytic eigenbasis with $0.92$ to $0.94$ alignment and an out-of-distribution payoff up to $2.4\times$; CLIP extends the claim to heterogeneous modalities, the spectrum shared across independently pretrained models while per-dimension probes are not. On a flat spectrum the head is floored at $1-d/m$ while an early-interaction model reaches $0.013$: the interaction spectrum decides whether a multiplicative dual-encoder head is the right tool.

\section{Index of results}
\label{app:index}
Table~\ref{tab:index} collects the paper's numbered results with a one-line statement of what each establishes and where it is stated or proved. The results are grouped by the four questions the paper answers, with the finite-sample lemmas of Appendix~\ref{app:finite} last; within each group the numbering follows the order of appearance, and the table doubles as a reading map for the appendices. Remarks and the technical lemmas internal to proofs are not indexed separately; they are cited where they are used.

{\small
\setlength{\tabcolsep}{4pt}
\begin{longtable}{@{}l p{0.55\textwidth} l@{}}
\caption{Index of the paper's numbered results.}
\label{tab:index}\\
\toprule
Result & What it establishes & Where \\
\midrule
\endfirsthead
\multicolumn{3}{@{}l}{\itshape Index of results (continued)}\\
\toprule
Result & What it establishes & Where \\
\midrule
\endhead
\bottomrule
\endlastfoot
\multicolumn{3}{@{}l}{\itshape Unification and approximation (Section~\ref{sec:unif})} \\
\midrule
Definition~\ref{def:spectrum} & Interaction spectrum, rank, and modes as the singular data of the target & Section~\ref{sec:unif} \\
Theorem~\ref{thm:err-split} & Approximation error decomposes into a truncation term and an encoder-realization term & Section~\ref{sec:unif} \\
Assumption~\ref{asm:realizability} & Encoder classes contain approximations of the scaled interaction modes & Appendix~\ref{app:proofs-sec2} \\
Proposition~\ref{prop:compute} & Evaluating a rank-$d$ head on all pairs costs $O((n+m)\,C_{\mathrm{net}}+nm\,d)$ & Appendix~\ref{app:proofs-sec2} \\
Lemma~\ref{lem:rank-equiv} & Representability of $F$ by a rank-$d$ head iff $\rank(T_F)\le d$ & Appendix~\ref{app:proofs-sec2} \\
Corollary~\ref{cor:trunc-lower} & The truncation term is the best possible error of any rank-$d$ head & Appendix~\ref{app:proofs-sec2} \\
Theorem~\ref{thm:decay} & Target smoothness controls the decay of the truncation tail & Appendix~\ref{app:proofs-sec2} \\
Proposition~\ref{prop:separation} & The dual head escapes the curse of the joint input dimension & Appendix~\ref{app:proofs-sec2} \\
\midrule
\multicolumn{3}{@{}l}{\itshape Identifiability as gauge-fixing (Section~\ref{sec:gauge})} \\
\midrule
Definition~\ref{def:gauge} & Gauge action of $\GL_d$ on encoder pairs leaves the represented function unchanged & Section~\ref{sec:gauge} \\
Theorem~\ref{thm:whiten-id} & Under a spectral gap, whitening identifies the modes up to permutation and sign & Section~\ref{sec:gauge} \\
Proposition~\ref{prop:percoord} & Per-coordinate standardization pins the coordinate scales but no more & Appendix~\ref{app:proofs-sec3} \\
Lemma~\ref{lem:gauge-inv} & The eigenvalues of $\Sigma_f\Sigma_g$ are gauge invariants, equal to the $\sigma_k^2$ at any optimum & Appendix~\ref{app:proofs-sec3} \\
Proposition~\ref{prop:illposed} & Unfixed gauge renders the risk flat along the orbit and optimization ill-posed & Appendix~\ref{app:proofs-sec3} \\
Theorem~\ref{thm:cosine-od} & Cosine normalization leaves the full orthogonal group as residual gauge & Appendix~\ref{app:proofs-sec3} \\
Corollary~\ref{cor:cosine-uninterpretable} & Individual coordinates of a cosine-normalized embedding carry no intrinsic meaning & Appendix~\ref{app:proofs-sec3} \\
Theorem~\ref{thm:nonneg-nmf} & Two-sided nonnegativity gives NMF-style identifiability, at the cost of excluding negative modes & Appendix~\ref{app:proofs-sec3} \\
Corollary~\ref{cor:gap-cond} & The spectral gap controls conditioning, identifiability, and convergence simultaneously & Appendix~\ref{app:proofs-sec3} \\
Theorem~\ref{thm:soft-relax} & A quadratic whitening penalty inherits the identifiability of the hard constraint & Appendix~\ref{app:proofs-sec3} \\
\midrule
\multicolumn{3}{@{}l}{\itshape Estimation and rank selection (Section~\ref{sec:sample})} \\
\midrule
Lemma~\ref{lem:rademacher} & The Rademacher complexity of the inner-product class collapses to the sum of the encoder complexities & Appendix~\ref{app:proofs-sec4} \\
Corollary~\ref{cor:excess} & Excess risk of empirical risk minimization on the head class & Appendix~\ref{app:proofs-sec4} \\
Theorem~\ref{thm:minimax} & Linear heads are rank-$d$ matrix sensing with minimax rate $\sigma_{\varepsilon}^2 d(p+q)/n$ & Appendix~\ref{app:proofs-sec4} \\
Corollary~\ref{cor:opt-rank} & Optimal rank grows logarithmically in the sample budget for smooth targets & Appendix~\ref{app:proofs-sec4} \\
\midrule
\multicolumn{3}{@{}l}{\itshape The usability criterion (Section~\ref{sec:sample})} \\
\midrule
Theorem~\ref{thm:flat-floor} & A flat spectrum forces every rank-$d$ head to relative error $1-d/N$ & Appendix~\ref{app:proofs-sec4} \\
Theorem~\ref{thm:early-escape} & Early-interaction models escape the flat-spectrum floor & Appendix~\ref{app:proofs-sec4} \\
Theorem~\ref{thm:criterion} & Usability criterion: decide by the measured spectrum whether the head can succeed & Appendix~\ref{app:proofs-sec4} \\
\midrule
\multicolumn{3}{@{}l}{\itshape Finite-sample identification (Appendix~\ref{app:finite})} \\
\midrule
Lemma~\ref{lem:berstein} & Matrix Bernstein concentration for empirical second moments & Appendix~\ref{app:finite} \\
Lemma~\ref{lem:whiten-stab} & Stability of the empirical whitening map under moment perturbation & Appendix~\ref{app:finite} \\
Theorem~\ref{thm:finite-id} & Finite-sample mode identification with error of order $r/\Delta$ & Appendix~\ref{app:finite} \\
Corollary~\ref{cor:whiten-sc} & Sample complexity of the whitening estimator: $n\gtrsim B^4\log(d/\delta)/(\Delta^2\varepsilon^2)$ & Appendix~\ref{app:finite} \\
Corollary~\ref{cor:block} & Block stability of mode recovery under small gaps & Appendix~\ref{app:finite} \\
Definition~\ref{def:alpha-sep} & $\alpha$-approximate separability as a quantitative relaxation of NMF conditions & Appendix~\ref{app:finite} \\
Theorem~\ref{thm:nmf-degrad} & Approximately separable targets degrade identifiability gracefully & Appendix~\ref{app:finite} \\
Proposition~\ref{prop:ledger} & Identification and estimation errors combine into one ledger & Appendix~\ref{app:finite} \\
Proposition~\ref{prop:risk-mode} & Mode error as the bridge between risk and identified basis & Appendix~\ref{app:finite} \\
\end{longtable}
}

\end{document}